\documentclass{article}

\usepackage[preprint]{arxiv_preamble}

\usepackage[utf8]{inputenc} 
\usepackage[T1]{fontenc}    
\usepackage{hyperref}       
\usepackage{url}            
\usepackage{booktabs}       
\usepackage{amsfonts}       
\usepackage{nicefrac}       
\usepackage{microtype}      
\usepackage{amsfonts}
\usepackage{graphicx}
\usepackage{mathtools}
\usepackage{amsmath,amsthm}

\newcommand{\uNO}{u_{\mathrm{NO}}}
\newcommand{\K}{\mathbb{K}}
\newcommand{\R}{\mathbb{R}}
\usepackage{algorithm}
\usepackage{algorithmic}
\usepackage{amssymb}
\theoremstyle{plain}
\newtheorem{prop}{Proposition}
\usepackage{fancyhdr}
\title{Correcting Neural Operator Spectral Bias via Diffusion Posterior Sampling with Sparse Observations}

\author{%
  Niccolò Perrone \\
  Université Paris-Saclay, CentraleSupélec, CNRS, ENS Paris-Saclay \\
  Laboratoire de Mécanique Paris-Saclay UMR 9026 \\
  8-10 rue Joliot Curie, 91190 Gif-sur-Yvette, France \\
  Politecnico di Milano \\
  P.zza Leonardo da Vinci 32, 20133 Milano, Italy \\
  \texttt{niccolo.perrone@mail.polimi.it} \\
  \And
  Fanny Lehmann \\
  ETH AI Center \\
  Andreasstrasse 5, 8092 Zürich, Switzerland \\
  \texttt{fanny.lehmann@ai.ethz.ch} \\
  \And
  Stefania Fresca \\
  Department of Mechanical Engineering \\
  University of Washington, Seattle, 98195, WA, USA \\
  \texttt{sfresca@uw.edu} \\
  \And
  Filippo Gatti \\
  Université Paris-Saclay, CentraleSupélec, CNRS, ENS Paris-Saclay \\
  Laboratoire de Mécanique Paris-Saclay UMR 9026 \\
  8-10 rue Joliot Curie, 91190 Gif-sur-Yvette, France \\
  \texttt{filippo.gatti@centralesupelec.fr} \\
}
\usepackage{xcolor}

\newcommand{\FreqNODPS}{FreqNO-DPS}

\begin{document}

\maketitle

\begin{abstract}
  Neural operator surrogates (NO) can approximate partial differential equations (PDE) solutions orders of magnitude faster than numerical solvers, but they suffer from spectral bias: high-frequency content is systematically attenuated, limiting their reliability for applications that depend on fine scale structure. 
  In many settings, sparse sensor measurements of the true field are also available, offering pointwise accuracy without spectral distortion but covering only a small fraction of the domain.
  We address the spectral bias of neural operators by treating their predictions as auxiliary observations in a diffusion posterior sampling framework. Our method, \textbf{FreqNO-DPS}\footnote{Code: \url{https://github.com/niccoloperrone/FreqNO-DPS}}, combines an unconditional score-based diffusion prior, trained on high-fidelity simulations, with diffusion posterior sampling (DPS) conditioned on sparse observations and guided by a frozen neural operator.
  Na\"ive integration of the surrogate reintroduces its spectral bias into the prediction; we resolve this by deriving a closed-form, spectrally shaped guidance score that weights the surrogate contribution according to its frequency-dependent accuracy and requires no backpropagation through the denoiser.
  A distribution-free analysis bounds the approximation error across the frequency--diffusion-time plane and shows that the frequency dependence of the guidance is preserved regardless of distributional
  assumptions.
  On three-dimensional elastic wavefield prediction at $5\%$ and $2\%$ sensor coverage, the method achieves near-zero spectral bias across all frequency bands, where both the deterministic surrogate and sensor-only posterior sampling exhibit systematic high-frequency attenuation.
  Isotropic surrogate guidance, the natural baseline, improves pointwise accuracy but carries the spectral bias into the posterior nearly intact, confirming that frequency-dependent calibration is essential rather than merely beneficial.
  The framework requires only paired surrogate/reference data for calibration and exploits no problem-specific structure beyond the residual's approximate spectral diagonality, a prerequisite that can be empirically verified for new surrogates via the coherence diagnostic we provide.
\end{abstract}

\section{Introduction}
\label{sec:intro}

Neural operators (NO)~\citep{li2020fourier,kovachki2023neural}
have emerged as a promising alternative to costly numerical solvers of Partial Differential Equations (PDE), offering speedups of several orders of magnitude across fluid mechanics, solid mechanics, wave propagation, and other domains.
However, surrogates trained under mean-squared-error objectives suffer from a well-documented \emph{spectral bias}~\citep{rahaman2019spectral,khodakarami2026spectral}: low-frequency content is reproduced faithfully while high frequencies are more difficult to learn and consequently, get attenuated.
The consequences are especially severe for hyperbolic and high-frequency problems where sharp features and small-scale oscillations are preserved or amplified over time: oversmoothing has been documented for shock-capturing in Burgers and Euler conservation laws \citep{de2024numerical, urban2025approximate}, for fine scale structure in Navier–Stokes turbulence \citep{khodakarami2025mitigating, cao2024spectral}, and for high frequencies in Helmholtz and elastic-wave dynamics \citep{zou2024deep, zou2026probabilistic}.
In seismology, for instance, attenuated high frequency leads to underestimating peak amplitudes that impact buildings resistance and missing fine-scale geological features when solving inverse problems.

In many physical systems, sparse sensor networks provide direct measurements of the field of interest at a limited number of locations. In seismology, permanent and temporary station deployments record ground motion with high fidelity but at spatial densities that cover a vanishing fraction of the domain \citep{ren2026learning}; analogous settings arise in meteorology \citep{kalnay2003atmospheric}, oceanography \citep{bolton2019applications}, and structural health monitoring \citep{farrar2007introduction}. Reconstructing the complete field from such sparse observations is a severely ill-posed inverse problem \citep{manohar2018data}: many physically plausible fields are consistent with the same measurements, and the problem is further compounded when the number of observations is orders of magnitude smaller than the spatial degrees of freedom. Neither sparse sensors alone, which lack spatial coverage, nor deterministic surrogates alone, which lack spectral accuracy, suffice for high-fidelity reconstruction. 
Sparse measurements thus serve a dual role: they anchor the reconstruction at observed locations, and they expose the surrogate's spectral bias by providing the ground truth high-frequency content that the surrogate fails to reproduce.

Several lines of work have sought to correct the spectral limitations of neural operator surrogates, but none addresses the problem we consider: correcting spectral bias of a fixed, pretrained surrogate at inference. Architecture-level remedies such as retaining more Fourier modes \citep{kong2026reducing}, factorizing the spectral convolutions \citep{tran2021factorized}, or imposing multi-scale training objectives \citep{you2024mscalefno} mitigate the bias by modifying or retraining the surrogate, which is not an option when it is established, expensive to retrain, or supplied externally. Generative remedies based on score-based diffusion models \citep{song2020score, karras2022elucidating} trained on PDE data preserve spectral content that deterministic surrogates suppress \citep{molinaro2024generative, bastek2024physics}, and conditioning a diffusion model on the surrogate output recovers high-frequency structure beyond what either component achieves alone \citep{oommen2024integrating}. Such conditional models, however, couple the diffusion prior to a specific surrogate at training time and ignore the direct field observations that are often independently available in practice.

Diffusion posterior sampling (DPS; \citealp{chung2022diffusion}) provides a principled mechanism for combining sparse measurements with a pretrained generative prior. By Bayes' rule, the posterior over fields consistent with the observations is proportional to the product of the prior, here a score-based diffusion model trained on reference simulations, and a measurement likelihood. Operationally this amounts to adding a measurement-fit gradient to the unconditional reverse-diffusion drift at each step of the generative process, with no retraining required. Additional sources of information enter on the same footing as additional likelihood terms.
DPS has recently been extended to PDE-based inverse problems. Where a neural operator appears in these works, it does so as part of the model itself, as the denoiser architecture \citep{yao2025guided} or as a learned forward map between coefficient and solution spaces \citep{lin2026decoupled}, never as a parallel observation of the field being reconstructed. To our knowledge, no existing diffusion-based method treats a frozen surrogate's prediction as a direct auxiliary observation, and none explicitly targets the surrogate's spectral bias. We close this gap. A frozen neural operator enters our posterior as an auxiliary observation alongside the sparse sensors, and the corresponding likelihood is calibrated in the Fourier domain so that the surrogate's information is admitted at the low frequencies where it is reliable, while the sensor channel takes over at the high frequencies where the surrogate's bias is most severe.

We propose \textbf{\FreqNODPS}, a framework that combines an unconditional diffusion prior trained on high-fidelity numerical simulations with diffusion posterior sampling conditioned on both sparse sensor observations and a frozen neural operator prediction. The sensor term anchors the prediction at observed locations, while the neural operator term provides global structural information across the entire domain. To prevent the surrogate's spectral bias from corrupting the posterior, we derive a \emph{spectrally shaped} guidance score by marginalizing over the unknown ground-truth field in the Fourier domain. This yields a closed-form expression that accounts for the frequency-dependent accuracy of the surrogate and involves no backpropagation through the denoiser. We demonstrate \FreqNODPS\ on 3D elastic wavefield enhancement from a Multiple-Input Fourier Neural Operator (MIFNO) surrogate. While the methodological and theoretical contributions are independent of this application, our empirical experiments are specific to this challenging setting.

Our contributions are as follows:
\begin{itemize}
    \item[(i)] We introduce \FreqNODPS, a diffusion posterior sampler that incorporates a frozen neural operator as an auxiliary observation with a closed-form, frequency-calibrated likelihood, and requires no backpropagation through the denoiser (Sec.~\ref{sec:method}).

    \item[(ii)] We establish an exact, distribution-free identity for the neural operator likelihood score (Prop.~\ref{prop:exact-score-main}) and highlight four regimes depending on the frequency content and diffusion time to show that frequency dependence of the guidance is preserved regardless of distributional assumptions.
    
    \item[(iii)] We demonstrate on 3D elastic wavefield prediction at 5\% and 2\% sensor coverage that the method achieves near-zero
    spectral bias across all frequency bands, and show via ablation that isotropic surrogate guidance reimports the spectral bias nearly intact, confirming that frequency-dependent calibration is
    essential.
\end{itemize}

\section{Related Work}
\label{sec:related}

\paragraph{Spectral bias of neural operators.}
Neural networks trained under mean-squared-error objectives exhibit
a documented bias toward low-frequency content \citep{rahaman2019spectral, xu2019frequency}, and recent work has characterized this phenomenon specifically for neural operators and physics-informed learning \citep{khodakarami2026spectral,qin2024toward}.
In Fourier Neural Operator (FNO) architectures, the spectral convolution layers truncate high wavenumbers by construction \citep{li2020fourier}, although the pointwise residual branches can in principle carry high-frequency content past this truncation. In practice, predictions still exhibit systematic high-frequency attenuation \citep{kong2026reducing}, motivating dedicated remediation strategies.
Existing remediation strategies modify the surrogate itself: retaining more Fourier modes through multistage training \citep{kong2026reducing}, factorizing the spectral convolutions to support larger mode counts \citep{tran2021factorized}, or imposing multi-scale training objectives that explicitly target oscillatory function spaces \citep{you2024mscalefno}.
These approaches require modifying or retraining the surrogate. We take a complementary route: leaving a pretrained surrogate frozen and correcting its spectral bias at inference through a generative prior conditioned on sparse observations.

\paragraph{Diffusion models for PDE fields and spectral recovery.}
Score-based diffusion models~\citep{song2020score,karras2022elucidating} have recently been applied to PDE-governed fields. 
\citet{lippe2308pde} demonstrate that diffusion-inspired iterative
refinement recovers frequency components that standard neural PDE
solvers neglect, establishing a direct link between denoising
objectives and spectral fidelity. \citet{molinaro2024generative}
and \citet{bastek2024physics} show more broadly that diffusion
models trained on PDE data preserve the spectral content that
deterministic surrogates suppress. Most directly relevant,
\citet{oommen2024integrating} demonstrate that conditioning a diffusion model on neural operator predictions recovers high-frequency turbulent structures beyond what either component achieves alone, though the diffusion model is coupled to the surrogate at training time and no direct observations enter. Closest to our application setting, \citet{perrone2025integrating} bring this conditional-diffusion approach to synthetic earthquake ground motion, improving the spectral representation of the synthetic ground motion, but additionally operate on individual stations and so do not reconstruct a spatially coherent surface wavefield. Our framework departs on all three counts: an unconditional prior decoupled from any specific surrogate, sparse sensor observations admitted as a parallel channel, and reconstruction of the full 3D surface field.

\paragraph{Sparse sensor inverse problems with diffusion guidance.}
Diffusion posterior sampling~(DPS;~\citealp{chung2022diffusion})
enables zero-shot conditioning of a pretrained diffusion model on noisy linear measurements by approximating the likelihood score via the Tweedie denoised estimate~\citep{efron2011tweedie}.
Several recent works extend this framework to PDE-based inverse problems.
\citet{AmorosTrepat_MedranoNavarro_Liu_Guastoni_Thuerey_2025}
reconstruct turbulent flow fields from sparse data using a
masked-diffusion sampling strategy that overwrites the denoised
estimate at sensor locations with a smoothed interpolation of the
true observations, enforcing the measurements as a hard constraint
without backpropagation through the denoiser. CoNFiLD~\citep{liu2025confild} trains an unconditional latent diffusion model for 3D spatiotemporal turbulence and performs zero-shot sparse-sensor reconstruction via Bayesian conditional sampling. Both methods address sparse-observation reconstruction with a diffusion prior alone; within our experimental setup, the sensor-only DPS baseline of Sec.~\ref{sec:quantitative} plays the same structural role (sparse-sensor reconstruction without a neural operator channel),
differing in the specific guidance mechanism. FunDPS~\citep{yao2025guided} instead uses a neural operator architecture \emph{as} the diffusion denoiser, applying plug-and-play DPS guidance from sparse observations; in this formulation no auxiliary surrogate likelihood arises, so the spectral-bias question we study does not appear.
DDIS~\citep{lin2026decoupled} also pairs a diffusion prior with a neural operator, but the operator plays a different role: it serves as a learned \emph{forward physics surrogate} that bridges the coefficient space (where the diffusion prior lives) and the solution space (where observations live), enabling likelihood
evaluation for coefficient-from-solution inverse problems. In none of these methods does the neural operator enter the posterior as a direct, parallel observation of the field being reconstructed.
The contribution of the present work is to treat the surrogate in exactly this role, as an auxiliary observation channel of the
target field, and to derive a frequency-dependent likelihood calibrated against the surrogate's mode-dependent accuracy, so that its spectral bias is corrected rather than absorbed into the posterior.

\section{Data and Problem Setup}
\label{sec:data}
We consider the reconstruction of three-dimensional elastic surface wavefields from sparse spatial observations.
This section describes the forward simulation framework and dataset (Sec.~\ref{sec:sem-data}), the deterministic neural operator surrogate used as auxiliary information (Sec.~\ref{sec:mifno-surrogate}), and the sparse observation model that defines the inverse problem (Sec.~\ref{sec:obs-model}).

\subsection{3D elastic wave propagation}
\label{sec:sem-data}

We consider seismic-wave propagation in a heterogeneous, linearly elastic medium and follow the framework defined in \citet{lehmann2024synthetic}. Let
$\Omega = [0,\Lambda]^3 \subset \R^3$ be a cubic domain of size $\Lambda$~=~9.6\,km and denote by $\partial\Omega_{\mathrm{top}}$
its traction-free upper surface. All other external faces are equipped with absorbing boundary
conditions to approximate a semi-infinite propagation domain.

The medium is described by spatially varying geological parameters. In the general setting,
we denote by $a:\Omega\to\R$ the field parametrizing the material properties (in our data, $a$
is the shear wave velocity field $V_S$; the remaining parameters are obtained through fixed
deterministic relationships).
A seismic event is defined by a source location $\mathbf{x}_s \in \Omega$ and a source-mechanism
parameter vector $\boldsymbol{\theta}_s$ (e.g.\ a moment-tensor parametrization).
Let $\mathbf{u}:\Omega\times[0,T]\to\R^3$ denote the displacement field. The forward model can be
written abstractly as
\begin{equation}
\label{eq:forward}
\mathcal{L}(a, \mathbf{u}) = \mathbf{f}(\mathbf{x}_s,\boldsymbol{\theta}_s),
\end{equation}
where $\mathcal{L}$ is the (heterogeneous) elastic wave operator and $\mathbf{f}$ is the source term.
Ground-truth wavefields are generated with high-fidelity Spectral Element Method (SEM, \cite{Touhami_et_al_2022}) simulations and gathered in the HEMEW\textsuperscript{S}-3D, which is directly used in the present work \citep{lehmann2024synthetic}.

To reduce storage, we retain only surface time histories on $\partial\Omega_{\mathrm{top}}$,
namely the surface particle velocity $\dot{\mathbf{u}}\!\mid_{\partial\Omega_{\mathrm{top}}}$ sampled on a regular
surface grid over time.
For notational simplicity, in the remainder we denote these stored surface velocity fields by
\begin{equation}
\mathbf{u} \; \coloneqq \; \dot{\mathbf{u}}\!\mid_{\partial\Omega_{\mathrm{top}}}
\;\in\; \R^{C \times N_x \times N_y \times T},
\end{equation}
with $C=3$ velocity components (east--west, north--south, vertical). In our pipeline, we use a temporal sampling
$\Delta t = 0.02\,\mathrm{s}$ over a time window of $6.4\,\mathrm{s}$, hence $T = 320$ time steps (with $N_x=N_y=32$).

\subsection{Multiple-Input Fourier Neural Operator}
\label{sec:mifno-surrogate}

We use a pretrained Multiple-Input Fourier Neural Operator (MIFNO) \citep{lehmann2025multiple} as a deterministic surrogate for fast prediction of surface ground motion. The MIFNO learns the parametric mapping from a 3D geological input field
$a$ and source characteristics $(\mathbf{x}_s,\boldsymbol{\theta}_s)$ to the corresponding surface velocity wavefield $u$:
\begin{equation}
\label{eq:mifno-map}
\mathbf{\uNO} \;=\; G_{\phi}\!\left(a,\mathbf{x}_s,\boldsymbol{\theta}_s\right),
\qquad
G_{\phi}:\; \mathcal{A}\times\Omega\times\Theta \to \R^{C\times N_x\times N_y\times T},
\end{equation}
where $\phi$ denotes learned parameters and $\mathbf{\uNO}$ is the surrogate prediction. Architecturally, MIFNO extends
3D Fourier Neural Operators \citep{li2020fourier, tran2021factorized} to heterogeneous multi-modal inputs by processing the structured 3D geology with Fourier
operator layers while encoding the low-dimensional source parameters in a dedicated branch, before fusing both
representations to predict the surface wavefield.

The surrogate is trained in a supervised fashion to minimize discrepancy between the predicted wavefields $\mathbf{\uNO}$
and the SEM targets $\mathbf{u}$ over the training set. In our inference pipeline, the MIFNO is \emph{frozen}: it serves as
(i) a fast deterministic baseline and (ii) an auxiliary signal to stabilize diffusion-based posterior sampling in
ultra-sparse measurement regimes (Sec.~\ref{sec:method}).

Like other deterministic neural operator surrogates, MIFNO predictions can exhibit \emph{spectral bias} \citep{rahaman2019spectral}: small-scale, high-frequency fluctuations are harder to reproduce faithfully, often resulting in oversmoothed reconstructions and residual phase/spectral errors. Moreover, as a point predictor, MIFNO does not provide calibrated predictive uncertainty, which motivates the use of a generative diffusion prior and posterior sampling to recover plausible high-frequency content and produce an ensemble of reconstructions consistent with the observations.

\subsection{Sparse observation model}
\label{sec:obs-model}

We assume access to sparse measurements of the ground-truth surface velocity field
$\mathbf{u}~\in~\R^{C \times N_x \times N_y \times T}$. Observations correspond to a sparse subset of
spatial sensor locations on the $N_x \times N_y$ surface grid, while retaining the full temporal history and all
$C=3$ components at each observed location. In this work, sensor data are obtained from SEM simulations to avoid any distribution shift with the MIFNO training objective. Future work will investigate the use of real measurements to remove all dependency on the numerical solver once the MIFNO is trained.

Let $\mathcal{G} \coloneqq \{1,\dots,N_x\}\times\{1,\dots,N_y\}$
denote the spatial surface grid. Sparse measurements correspond to a subset $\mathcal{S} \subset \mathcal{G}$ of $|\mathcal{S}|$ grid locations at which the wavefield is observed; at each observed
location, the full temporal history and all $C=3$ velocity components are recorded. The observed sensor density is
$\rho \coloneqq |\mathcal{S}|/(N_xN_y)$. In the following, we
refer to regimes with very small $\rho$ (e.g., $\rho \leq 5\%$)
as \emph{ultra-sparse}. We write the associated linear restriction
operator $\mathcal{M}_{\mathcal{S}}$ that extracts the entries of
$\mathbf{u}$ at the locations in $\mathcal{S}$.
The measurement model is
\begin{equation}
\label{eq:meas-model}
\mathbf{y} \;=\; \mathcal{M}_{\mathcal{S}}(\mathbf{u})
  + \boldsymbol{\varepsilon},
\qquad
\boldsymbol{\varepsilon} \sim
  \mathcal{N}\!\left(\mathbf{0},\sigma_y^2 I\right),
\end{equation}
where $\mathbf{y} \in \mathbb{R}^{C \times |\mathcal{S}| \times T}$ stacks the observed values over all components, sensor locations, and time steps, and $\sigma_y$ controls the measurement noise level. In the experiments we report here the temporal discretization is fixed ($T=320$), and sparsity refers exclusively to the spatial sampling pattern. Details on how $\mathcal{S}$ is constructed across densities are given in Appendix~\ref{app:experimental}.

The Gaussian likelihood used by diffusion posterior sampling is
\begin{equation}
\label{eq:gauss-likelihood}
\log p(\mathbf{y} \mid \mathbf{u})
\;=\;
-\frac{1}{2\sigma_y^2}\,\big\|\mathcal{M}_{\mathcal{S}}(\mathbf{u})-\mathbf{y}\big\|_2^2 + \mathrm{const},
\end{equation}
so that gradients of the data term act only on the observed entries. This gradient reads:
\begin{equation}
\label{eq:likelihood-grad-clean}
\nabla_{\mathbf{u}}\log p(\mathbf{y}\mid \mathbf{u})
=
-\frac{1}{\sigma_y^2}\,
\mathcal{M}_{\mathcal{S}}^\dagger\!\Big(\mathcal{M}_{\mathcal{S}}(\mathbf{u})-\mathbf{y}\Big),
\end{equation}
where $\mathcal{M}_{\mathcal{S}}^\dagger$ is the adjoint operator that places values at the observed locations and fills the remaining entries with zeros.

\section{Methods}
\label{sec:method}

This section develops the posterior sampling framework in three steps: we first define the unconditional diffusion prior (Sec.~\ref{sec:prior}), then introduce standard DPS conditioning on sparse measurements (Sec.~\ref{sec:dps}), and finally derive the spectrally shaped neural operator guidance that is the main methodological contribution of this work (Sec.~\ref{sec:no-guided-dps}). A concluding subsection establishes an exact, distribution-free identity for the neural operator likelihood score and bounds the error of the moment-matched approximation (Sec.~\ref{sec:score-analysis}).
Figure~\ref{fig:architecture} provides an overview of the full pipeline.

\begin{figure}
    \centering
    \includegraphics[width=1.0\linewidth]{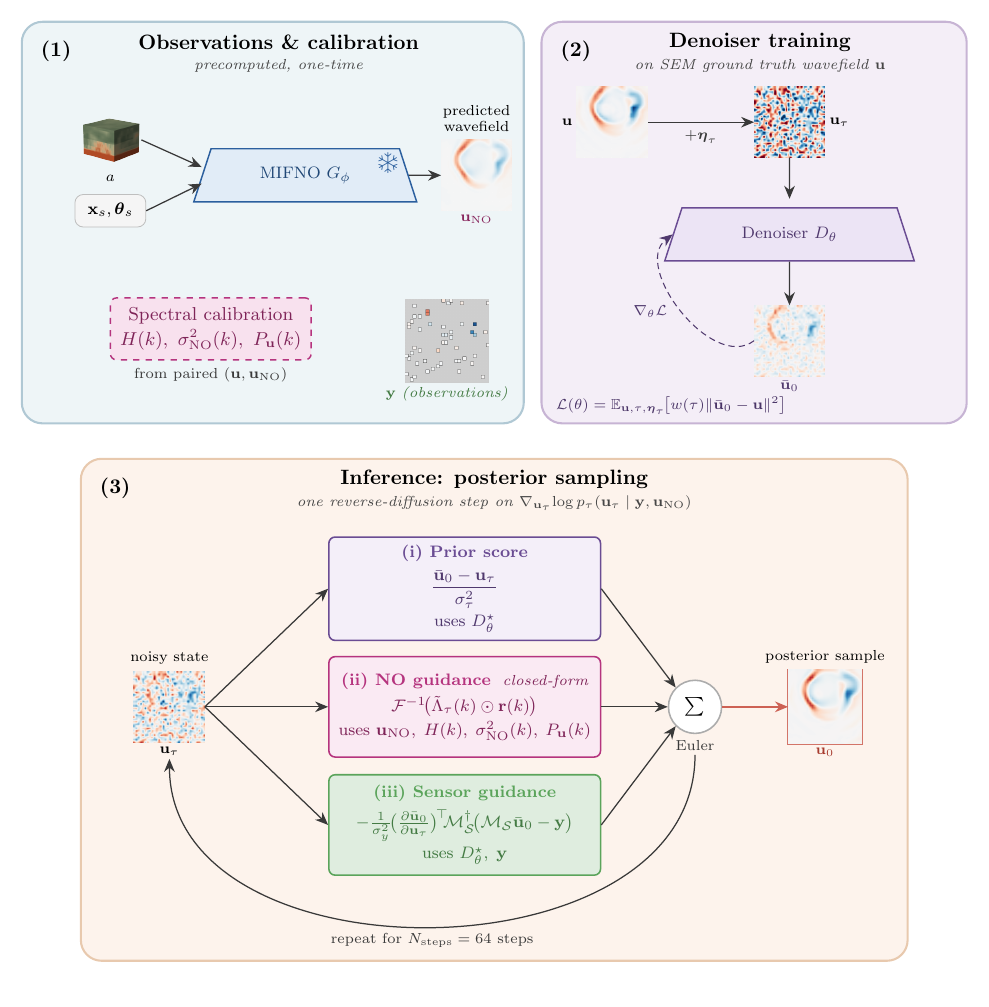}
    \caption{\textbf{Overview of \FreqNODPS.} \emph{Top-left (Observations \& calibration):} the frozen MIFNO surrogate $G_\phi$ predicts surface wavefields $\mathbf{u}_{\mathrm{NO}}$ from geology and source inputs; spectral statistics $H(k),\sigma^2_{\mathrm{NO}}(k),P_{\mathbf{u}}(k)$ are estimated once from paired $(\mathbf{u},\mathbf{u}_{\mathrm{NO}})$; sparse observations $\mathbf{y}$ are obtained from a random sensor mask on the ground truth wavefield $\mathbf{u}$. \emph{Top-right (Denoiser training):} an unconditional denoiser $D_\theta$ is trained on SEM ground truth $\mathbf{u}$ under an EDM-weighted denoising objective, yielding the frozen checkpoint $D_\theta^{\star}$ used at inference. \emph{Bottom (Inference):} at each reverse-diffusion step the posterior score combines a Tweedie prior term, a closed-form spectrally shaped NO term, and a DPS sensor term~\eqref{eq:dps-likelihood-score}; iterating $N_\mathrm{steps}{=}64$ Euler steps yields a posterior sample $\mathbf{u}_0$.}
    \label{fig:architecture}
\end{figure}

\subsection{Unconditional diffusion prior}
\label{sec:prior}

We model the distribution of SEM simulations via an \emph{unconditional} score-based diffusion model.
Under the variance-exploding (VE) schedule \citep{song2020score}, the forward process progressively corrupts a sample $\mathbf{u} \sim p_0$ by additive Gaussian noise
\begin{equation}
\label{eq:gaussian-perturb}
\mathbf{u}_\tau = \mathbf{u} + \boldsymbol{\eta}_\tau,
\qquad
\boldsymbol{\eta}_\tau \sim \mathcal{N}(\mathbf{0},\,\sigma^2_\tau I),
\end{equation}
indexed by a diffusion time $\tau \in [0, K]$ (distinct from the physical time axis of the wavefield denoted by $t$), with $\sigma_0 = 0$ and $\sigma_K=80$ large enough that $p_K \approx \mathcal{N}(\mathbf{0}, \sigma_K^2 I)$.

We train a denoiser $D_\theta(\mathbf{u}_\tau, \sigma_\tau)$ to
predict the clean sample from its noisy counterpart by minimizing
\begin{equation}
\label{eq:prior-loss}
\mathcal{L}_{\mathrm{prior}}(\theta)
=
\mathbb{E}_{\mathbf{u},\,\tau,\,\boldsymbol{\eta}_\tau}
\left[
w(\tau)\,
\big\|D_\theta(\mathbf{u}+\boldsymbol{\eta}_\tau,\,\sigma_\tau)
  - \mathbf{u}\big\|_2^2
\right],
\end{equation}
with EDM-style weighting  $w(\tau)$ \citep{karras2022elucidating}.
The denoised estimate $\bar{\mathbf{u}}_0(\mathbf{u}_\tau, \tau)\coloneqq D_\theta(\mathbf{u}_\tau, \sigma_\tau)\approx \mathbb{E}[\mathbf{u} \mid \mathbf{u}_\tau]$
provides, via Tweedie's formula \citep{efron2011tweedie}, an approximation of the score:
\begin{equation}
\label{eq:tweedie}
\nabla_{\mathbf{u}_\tau}\!\log p_\tau(\mathbf{u}_\tau)
\;\approx\;
\frac{\bar{\mathbf{u}}_0 - \mathbf{u}_\tau}{\sigma_\tau^2}.
\end{equation}
The score is the only learned ingredient required to simulate the reverse-time ODE that generates new samples from $p_0$.

\subsection{Diffusion posterior sampling with sparse measurements}
\label{sec:dps}

Given sparse sensor observations $\mathbf{y}$ from the measurement model~\eqref{eq:meas-model}, we seek samples from the posterior:
\begin{equation}
\label{eq:posterior}
p(\mathbf{u}\mid \mathbf{y})
\;\propto\;
p_\theta(\mathbf{u})\,p(\mathbf{y}\mid \mathbf{u}),
\end{equation}
where $p_\theta(\mathbf{u})$ is the unconditional diffusion prior learned in Sec.~\ref{sec:prior} and $p(\mathbf{y}\mid\mathbf{u})$ is the measurement likelihood.
The posterior score at each diffusion time decomposes as
\begin{equation}
\label{eq:posterior-score-decomp}
\nabla_{\mathbf{u}_\tau}\!\log p_\tau(\mathbf{u}_\tau \mid \mathbf{y})
\;=\;
\nabla_{\mathbf{u}_\tau}\!\log p_\tau(\mathbf{u}_\tau)
\;+\;
\nabla_{\mathbf{u}_\tau}\!\log p(\mathbf{y} \mid \mathbf{u}_\tau),
\end{equation}
so that posterior sampling augments the prior-driven reverse diffusion ($\nabla_{\mathbf{u}_\tau}\!\log p_\tau(\mathbf{u}_\tau)$) with a data-consistency drift ($\nabla_{\mathbf{u}_\tau}\!\log p(\mathbf{y} \mid \mathbf{u}_\tau)$).
Evaluating the likelihood score $\nabla_{\mathbf{u}_\tau}\!\log p(\mathbf{y} \mid \mathbf{u}_\tau)$ requires the noisy state likelihood $p(\mathbf{y} \mid \mathbf{u}_\tau)=\int p(\mathbf{y} \mid \mathbf{u}) p(\mathbf{u} \mid \mathbf{u}_\tau)d\mathbf{u}$, since the measurement model~\eqref{eq:meas-model} is defined for clean wavefields $\mathbf{u}$ only. The integral has no closed form because the conditional $p(\mathbf{u} \mid \mathbf{u}_\tau)$ depends on the data prior. DPS approximates it by collapsing $p(\mathbf{u} \mid \mathbf{u}_\tau)$ onto the denoised estimate $\bar{\mathbf{u}}_0$ as defined in~\eqref{eq:tweedie}, yielding the approximate likelihood score:
\begin{equation}
\label{eq:dps-likelihood-score}
\nabla_{\mathbf{u}_\tau}\!\log p(\mathbf{y} \mid \mathbf{u}_\tau)
\;\approx\;
-\frac{1}{\sigma_y^2}\,
\left(\frac{\partial \bar{\mathbf{u}}_0}{\partial \mathbf{u}_\tau}
\right)^{\!\top}
\mathcal{M}_{\mathcal{S}}^\dagger\!
\Big(\mathcal{M}_{\mathcal{S}}(\bar{\mathbf{u}}_0) - \mathbf{y}\Big),
\end{equation}
computed by automatic differentiation through the denoiser.
The gradient acts only on the $|\mathcal{S}|$ observed sensor locations; in ultra-sparse regimes, this constrains only a small fraction of degrees of freedom, motivating the additional neural operator guidance introduced next.

\subsection{Neural operator-guided posterior sampling}
\label{sec:no-guided-dps}
\paragraph{Augmented posterior.}
To compensate for the information deficit of DPS, we leverage the pretrained MIFNO of Sec.~\ref{sec:mifno-surrogate} by treating its frozen prediction $\mathbf{\uNO}$ as an auxiliary observation of the clean wavefield $\mathbf{u}$. The problem is thus recast as inference of the augmented posterior $p(\mathbf{u} \mid \mathbf{y}, \mathbf{\uNO})$, in which the sparse measurements act as the primary truth anchor while $\mathbf{\uNO}$ provides a global inductive bias across the entire domain, including the locations unconstrained by sensors. It is natural to assume conditional independence given the clean wavefield:
\begin{equation}
    \label{eq:cond-indep}
    p(\mathbf{y}, \mathbf{\uNO} \mid \mathbf{u})
    \;=\;
    p(\mathbf{y} \mid \mathbf{u})\,p(\mathbf{\uNO} \mid \mathbf{u})
\end{equation}
since MIFNO predictions and extraction of sensor measurements are conditionally independent
given the clean wavefield. To see this, let $\boldsymbol{\xi} = (a, \mathbf{x}_s, \boldsymbol{\theta}_s)$ denote the geological and source
parameters. The measurement $\mathbf{y} = \mathcal{M}_\mathcal{S}(\mathbf{u}) + \boldsymbol{\varepsilon}$ depends on $\mathbf{u}$ and on the instrument noise $\boldsymbol{\varepsilon}$, while the surrogate
prediction $\mathbf{u}_\mathrm{NO} = G_\phi(\boldsymbol{\xi})$ depends on $\boldsymbol{\xi}$ alone.
Conditioning on $\mathbf{u}$, the residual randomness in $\mathbf{y}$ is entirely $\boldsymbol{\varepsilon}$, and the residual randomness in $\mathbf{u}_\mathrm{NO}$ is which $\boldsymbol{\xi}$ generated $\mathbf{u}$ (the inverse problem may admit multiple solutions).
Since $\boldsymbol{\varepsilon}$ is independent of $\boldsymbol{\xi}$ by construction, the two channels carry no information about each other once $\mathbf{u}$ is given. A formal marginalization argument is provided in Appendix~\ref{app:cond-indep}.

Conditional independence given the clean wavefield does not, in general, transfer to the noisy state $\mathbf{u}_\tau$: residual uncertainty about $\mathbf{u}$ at diffusion time $\tau$ couples the two observation channels through the prior. We adopt the modeling choice of factorizing the noisy state likelihood as
\begin{equation}
    \label{eq:noisy-factor}
    p(\mathbf{y}, \mathbf{\uNO} \mid \mathbf{u}_\tau)
    \;\approx\;
    p(\mathbf{y} \mid \mathbf{u}_\tau)\, p(\mathbf{\uNO} \mid \mathbf{u}_\tau),
\end{equation}
Under the standard DPS delta-mass collapse $p(\mathbf{u} \mid \mathbf{u}_\tau) \approx \delta(\mathbf{u} - \bar{\mathbf{u}}_0)$, the factorization is exact, so the assumption made here is no stronger than DPS's. A formal derivation, together with the exact joint score from which~\eqref{eq:noisy-factor} departs, is given in Appendix~\ref{app:modeling-choice}.

Applying Bayes' rule together with the Markov structure of the forward diffusion, the posterior score at each diffusion time admits the decomposition 
\begin{equation}
    \label{eq:posterior-score-three-terms}
    \nabla_{\mathbf{u}_\tau}\!\log p_\tau(\mathbf{u}_\tau \mid
      \mathbf{y}, \mathbf{\uNO})
    \;=\;
    \underbrace{
      \nabla_{\mathbf{u}_\tau}\!\log p_\tau(\mathbf{u}_\tau)
    }_{\text{prior}}
    \;+\;
    \underbrace{
      \nabla_{\mathbf{u}_\tau}\!\log p(\mathbf{y} \mid \mathbf{u}_\tau)
    }_{\text{sensor}}
    \;+\;
    \underbrace{
      \nabla_{\mathbf{u}_\tau}\!\log p(\mathbf{\uNO} \mid \mathbf{u}_\tau)
    }_{\text{NO}}.
\end{equation}
where the equality is exact under~\eqref{eq:noisy-factor}.

The prior score is given by the pretrained denoiser through Tweedie's formula~\eqref{eq:tweedie}, while the sensor score coincides with the standard DPS expression ~\eqref{eq:dps-likelihood-score}. The remainder of this section is devoted to the NO term $\nabla_{\mathbf{u}_\tau}\!\log p(\mathbf{\uNO} \mid \mathbf{u}_\tau)$ for which we derive a closed-form spectral expression that accounts for the mode-dependent accuracy of the NO prediction.

\paragraph{Spectral observation model.}
Neural operators exhibit \emph{spectral bias}: low-frequency content is reproduced more accurately than high-frequency modes, which are systematically attenuated or not captured. 
A model of the surrogate error with uniform power spectral density (white noise) is therefore structurally inadequate, since the surrogate error is concentrated at high frequencies. We instead model the NO prediction in the Fourier domain. The model that follows uses only the second-order statistics of paired $(\mathbf{u},\mathbf{\uNO})$ data; we make no assumption on the distributional form of the clean signal or the surrogate residual, and no problem-specific physics enters the guidance term beyond what is contained in these statistics. Let $\mathcal{F}$ denote the unitary Fourier transform applied jointly over $(N_x, N_y, T)$ per channel, with modes indexed by $k = (k_x,k_y,k_t)$. At each mode and channel we decompose
\begin{equation}
\label{eq:per-mode-model}
    \mathcal{F}(\mathbf{\uNO})(k)
    \;=\;
    H(k)\,\mathcal{F}(\mathbf{u})(k)
    \;+\;
    \hat{\eta}_{\mathrm{NO}}(k),
\end{equation}
where $H(k) \in \mathbb{R}$ is a deterministic mode-dependent transfer function and $\hat{\eta}_{\mathrm{NO}}(k)$ is a zero-mean random residual with mode-dependent variance $\sigma^2_{\mathrm{NO}}(k) \in \mathbb{R}_{>0}$; both $H(k)$ and $\sigma^2_{\mathrm{NO}}(k)$ are parameters of the model, estimated from paired data as described below. $H(k)$ is specifically the LMMSE regression coefficient of $\mathcal{F}(\mathbf{\uNO})(k)$ on $\mathcal{F}(\mathbf{u})(k)$, i.e.\ the value minimizing $\mathbb{E}\bigl[|\mathcal{F}(\mathbf{\uNO})(k) - H(k)\,\mathcal{F}(\mathbf{u})(k)|^2\bigr]$. Concretely, $H(k)$ is computed from paired data as the cross-spectral ratio
\begin{equation}
\label{eq:H-estimator}
    H(k)
    \;=\;
    \mathrm{Re}\,
    \frac{
      N^{-1}\sum_{n=1}^{N}
      \mathcal{F}(\mathbf{\uNO}^{(n)})(k)\;
      \overline{\mathcal{F}(\mathbf{u}^{(n)})(k)}
    }{
      P_{\mathbf{u}}(k)
    },
\end{equation}
where the real-part reduction is justified empirically
(Appendix~\ref{app:spectral-empirical}).
By construction (see Appendix~\ref{app:wss} for further details), the residual  $\hat{\eta}_{\mathrm{NO}}(k)$ is then orthogonal to $\mathcal{F}(\mathbf{u})(k)$:
\begin{equation}
\label{eq:residual-orthogonality}
    \mathbb{E}\!\left[\hat{\eta}_{\mathrm{NO}}(k)\right] = 0,
    \qquad
    \mathbb{E}\!\left[\hat{\eta}_{\mathrm{NO}}(k)\,
        \overline{\mathcal{F}(\mathbf{u})(k)}\right] = 0,
    \qquad
    \mathbb{E}\!\left[|\hat{\eta}_{\mathrm{NO}}(k)|^2\right] 
    = \sigma^2_{\mathrm{NO}}(k).
\end{equation}
The transfer function $H$ thus absorbs all systematic, second-order linear structure of the NO error at mode $k$, so that the residual $\hat{\eta}_{\mathrm{NO}}$ carries no linear predictability from the clean signal. One expects $|H(k)| \approx 1$ at low frequencies and $|H(k)| \ll 1$ in the spectral-bias regime.
We assume the real-space residual field $\eta_{\mathrm{NO}}(x,y,t)$ is wide-sense stationary (WSS) in $(x,y,t)$: its mean is constant in $(x,y,t)$ and its autocovariance $\mathbb{E}[\eta_{\mathrm{NO}}(x,y,t)\,\eta_{\mathrm{NO}}(x+\Delta x, y+\Delta y, t+\Delta t)]$ depends only on the spatio-temporal lag $(\Delta x, \Delta y, \Delta t)$, not on the absolute position $(x,y,t)$. The expectations are taken over the training ensemble of geologies and source locations, and the approximation is supported by the shift-invariant structure of the FNO's spectral truncation error.
By the Wiener--Khinchin theorem, WSS implies that the residual covariance is diagonal in the Fourier basis with entries
$\sigma^2_{\mathrm{NO}}(k) \in \R_{>0}$.
This diagonality is a verifiable property of the surrogate: we confirm it for the MIFNO via an off-diagonal cross-spectral coherence diagnostic, and the same diagnostic serves as a prerequisite check for applying the method to any new surrogate (Appendix~\ref{app:wss-verification}).
The three mode-dependent quantities, $H(k)$, $\sigma^2_{\mathrm{NO}}(k)$, and the ensemble-averaged signal power spectrum
$P_{\mathbf{u}}(k) \coloneqq N^{-1}\sum_{n=1}^{N}|\mathcal{F}(\mathbf{u}^{(n)})(k)|^2$, are estimated from paired ground-truth/MIFNO data on a held-out calibration split and frozen as lookup tables at inference time.
Explicit estimator formulas and empirical validation of the spectral model are given in Appendix~\ref{app:spectral-validation}.

\paragraph{Spectrally shaped diffusion posterior via LMMSE.}

Evaluating the NO likelihood at the noisy state $\mathbf{u}_\tau$
requires the marginal $p(\mathbf{\uNO} \mid \mathbf{u}_\tau)$,
obtained by integrating out the unknown clean state:
$p(\mathbf{\uNO} \mid \mathbf{u}_\tau) = \int
p(\mathbf{\uNO} \mid \mathbf{u})\,
p(\mathbf{u} \mid \mathbf{u}_\tau)\,d\mathbf{u}$.
Standard DPS collapses $p(\mathbf{u} \mid \mathbf{u}_\tau)$
onto a delta mass at $\bar{\mathbf{u}}_0$, which assigns a
residual uncertainty of $\sigma^2_\tau$ per spatial degree of
freedom.
Since $\mathcal{F}$ is unitary, this isotropic spatial variance maps to a uniform variance $\sigma^2_\tau$ at every Fourier mode, a poor approximation for wavefields whose power spectrum spans orders of magnitude: it places the strongest NO guidance at high frequencies, precisely where the NO is least informative.

We correct this by replacing the isotropic approximation with the linear minimum mean squared error (LMMSE) estimator of the clean Fourier coefficient $X(k) \coloneqq \mathcal{F}(\mathbf{u})(k)$ from the noisy one $Y(k) \coloneqq \mathcal{F}(\mathbf{u}_\tau)(k) = X(k) + \hat{\eta}_\tau(k)$ with $\hat{\eta}_\tau(k) \sim \mathcal{CN}(0, \sigma^2_\tau)$. 
The derivation requires only the first two moments of $X$: zero mean (from z-score normalization of the training data) and variance $P_{\mathbf{u}}(k)$; no distributional assumption on $X$ is made.
The LMMSE estimate is $\hat{X}_L(k) := \alpha(k)\,Y(k)$, where
$\alpha(k)$ is the Wiener filter coefficient that optimally
shrinks the noisy observation toward zero (the prior mean),
and $\sigma_L^2(k)$ is the mean squared error of this estimate:
\begin{equation}
    \label{eq:alpha-and-SigmaL}
    \alpha(k)
    \;=\;
    \frac{P_{\mathbf{u}}(k)}{\sigma^2_\tau + P_{\mathbf{u}}(k)},
    \qquad
    \sigma_L^2(k)
    \;=\;
    \frac{\sigma^2_\tau\,P_{\mathbf{u}}(k)}
         {\sigma^2_\tau + P_{\mathbf{u}}(k)}
    \;=\;
    \sigma^2_\tau\,\alpha(k).
\end{equation}

The LMMSE error satisfies $\sigma_L^2(k) \leq \min\!\big(P_{\mathbf{u}}(k),\,\sigma^2_\tau\big)$: at low frequencies $\alpha(k) \to 1$ and the standard DPS posterior is recovered, while at high frequencies the posterior uncertainty is correctly bounded by the signal power rather than the diffusion noise level (full derivation in Appendix~\ref{app:lmmse}).

\paragraph{Isotropic vs.\ spectrally shaped guidance.}
Figure~\ref{fig:expected-guidance} shows the expected per-mode NO guidance magnitude, comparing the spectrally shaped score to the isotropic DPS guidance
across six noise levels.
Under the isotropic baseline ($H(k)=1$, $\sigma^2_{\mathrm{NO}}(k) = \sigma^2_{\mathrm{NO,iso}}$, $\alpha(k) = 1$), the
expected guidance magnitude reduces to the frequency-independent scalar $4/(\sigma^2_{\mathrm{NO,iso}} + \sigma^2_\tau)$, which assigns equal weight to
every mode at each noise level.
The spectrally shaped guidance tracks the isotropic level at low $\|k\|$, where the surrogate is accurate, and is progressively suppressed at high $\|k\|$, where
the surrogate is unreliable, the two differing by several orders of magnitude at high diffusion times. The upturn of the low-noise curves ($\sigma_\tau \leq 0.01$) at high $\|k\|$ is an edge effect at vanishing noise and does not affect the reverse trajectory (Appendix~\ref{app:spectral-empirical}).

\begin{figure}[t]
  \centering
  \includegraphics[width=0.65\textwidth]{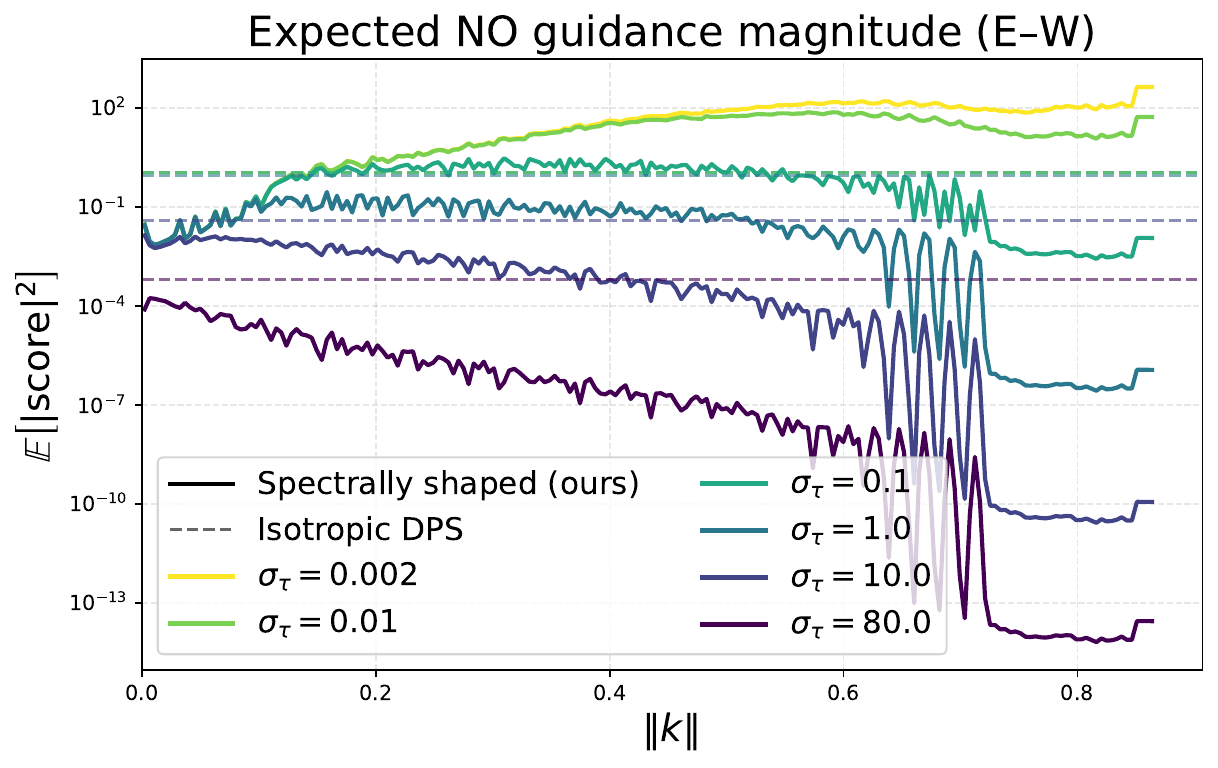}
    \caption{Expected NO guidance magnitude at six noise levels across the diffusion schedule (E--W component). Solid: spectrally shaped guidance, with calibrated $H(k)$, $\sigma^2_{\mathrm{NO}}(k)$, and LMMSE Wiener filter.
    Dashed: isotropic DPS guidance ($H\!=\!1$,    $\sigma^2_{\mathrm{NO}}\!=\!\sigma^2_{\mathrm{NO,iso}}$), frequency-independent at each noise level. At low $\|k\|$ both are comparable;
    at high $\|k\|$ the spectrally shaped guidance is suppressed by several orders of magnitude. The oscillations and subsequent flattening at high $\|k\|$ reflect the $5\,$Hz low pass filtering of the data, where the signal power reaches numerical floor and the per-mode ratio is no longer meaningful; this does not affect sampling, since the guidance there is non-negligible only at
    low $\sigma_\tau$, by which point the reconstruction is already essentially
    complete.}
  \label{fig:expected-guidance}
\end{figure}

\paragraph{Closed-form marginal likelihood for the NO term.}
Evaluating the NO score in~\eqref{eq:posterior-score-three-terms} requires the marginal $p(\mathbf{\uNO} \mid \mathbf{u}_\tau)$, obtained by integrating out the unknown clean state.
Recalling that $X(k)\!=\!\mathcal{F}(\mathbf{u})(k)$,
$Y(k)\!=\!\mathcal{F}(\mathbf{u}_\tau)(k)$ and introducing 
$Z(k)\!=\!\mathcal{F}(\mathbf{\uNO})(k)$,
we compute the conditional mean and variance of $Z(k)$ given $Y(k)$ by propagating
the LMMSE posterior through the per-mode spectral decomposition~\eqref{eq:per-mode-model}.
Since $Z(k) = H(k) X(k) + \hat{\eta}_{\mathrm{NO}}(k)$, with $\hat{\eta}_{\mathrm{NO}}(k)$ uncorrelated with $X(k)$ (by LMMSE definition of $H$) and independent of $\hat{\eta}_\tau$, standard second-order calculations give
\begin{align}
  \label{eq:marginal-mean-short}
  \mathbb{E}_L[Z(k) \mid Y(k)] &= H(k)\,\alpha(k)\,Y(k), \\[4pt]
  \label{eq:marginal-var-short}
  \lambda_\tau(k)
    \;\coloneqq\; \mathbb{E}\!\left[|Z(k) - \mathbb{E}_L[Z(k)\mid Y(k)]|^2\right]
    &= |H(k)|^2\,\sigma_L^2(k) \;+\; \sigma^2_{\mathrm{NO}}(k).
\end{align}
Both moments are \emph{exact}: they follow from independence and second-order statistics alone, with no distributional assumption on $X$. We approximate the full marginal by the complex normal distribution matching these moments:
\begin{equation}
  \label{eq:marginal-likelihood}
  p\!\left(\mathcal{F}(\mathbf{\uNO})(k) \mid \mathbf{u}_\tau\right)
  \;\approx\;
  \mathcal{CN}\!\left(
    H(k)\,\alpha(k)\,\mathcal{F}(\mathbf{u}_\tau)(k),\;\;
    \lambda_\tau(k)
  \right),
\end{equation}
with the joint distribution factorizing as a product of scalar complex Gaussians over all modes and channels.

\paragraph{NO likelihood score.}

Since the marginal mean in~\eqref{eq:marginal-likelihood} is linear in $\mathbf{u}_\tau$ in the Fourier domain, the likelihood score admits a closed-form expression requiring no backpropagation through the denoiser.
Defining the spectral residual
$\mathbf{r}(k) \coloneqq \mathcal{F}(\mathbf{\uNO})(k) -
H(k)\,\alpha(k)\,\mathcal{F}(\mathbf{u}_\tau)(k)$,
the score is
\begin{equation}
  \label{eq:no-score}
  \nabla_{\mathbf{u}_\tau}\!\log p(\mathbf{\uNO} \mid \mathbf{u}_\tau)
  \;=\;
  \mathcal{F}^{-1}\!\left(
    \tilde{\lambda}_\tau \odot \mathbf{r}
  \right),
\end{equation}
with the spectral weighting filter
\begin{equation}
  \label{eq:spectral-weight}
  \tilde{\lambda}_\tau(k)
  \;\coloneqq\;
  \frac{2\,H(k)\,\alpha(k)}{\lambda_\tau(k)}
  \;=\;
  \frac{
    2\,H(k)\,P_{\mathbf{u}}(k)
  }{
    \left(\sigma^2_\tau + P_{\mathbf{u}}(k)\right)
      \sigma^2_{\mathrm{NO}}(k)
    \;+\;
    \sigma^2_\tau\,|H(k)|^2\,P_{\mathbf{u}}(k)
  }.
\end{equation}
This is the LMMSE approximation of an exact, distribution-free score identity that we establish below (Prop.~\ref{prop:exact-score-main}). 
We note that Proposition~\ref{prop:exact-score-main} and the error analysis of Sec.~\ref{sec:score-analysis} are stated in terms of the per-mode Wirtinger derivative $\nabla_{Y^*}\log p(Z\mid Y) = H^*\alpha\,r/\lambda_\tau$, whereas the real-space gradient driving the sampler in~\eqref{eq:no-score} is $\nabla_{\mathbf{u}_\tau} = \mathcal{F}^{-1}\!\bigl(2\,H^*\alpha\,r/\lambda_\tau\bigr)$. The two differ by the global factor $2$ arising from $d|r|^2 = 2\,\mathrm{Re}[\bar{r}\,dr]$; this constant is immaterial to the relative-error bounds and is in practice absorbed into a constant.

The weight $\tilde{\lambda}_\tau(k)$ is large at low frequencies, where
the NO is faithful and the signal is strong, and vanishes at high
frequencies through both the Wiener suppression $\alpha(k) \to 0$ in
the numerator and the bounded posterior variance $\sigma_L^2(k) \leq P_{\mathbf{u}}(k)$ in the denominator.
Evaluation requires only two fast Fourier transforms (FFTs) and elementwise operations with
the precomputed quantities, making the cost negligible relative
to the denoiser forward pass.
The full gradient derivation is given in Appendix~\ref{app:no-score-derivation}. 

\paragraph{Combined posterior score.}

Substituting the prior score~\eqref{eq:tweedie}, the sensor
score~\eqref{eq:dps-likelihood-score}, and the NO
score~\eqref{eq:no-score}
into~\eqref{eq:posterior-score-three-terms}, the full posterior
score driving the reverse-time ODE is
\begin{equation}
  \label{eq:combined-score}
  \nabla_{\mathbf{u}_\tau}\!\log
    p_\tau(\mathbf{u}_\tau \mid \mathbf{y}, \mathbf{\uNO})
  \;=\;
  \underbrace{
    \frac{\bar{\mathbf{u}}_0 - \mathbf{u}_\tau}{\sigma_\tau^2}
  }_{\text{(i) prior}}
  \;+\;
  \underbrace{
    \mathcal{F}^{-1}\!\left(
      \tilde{\lambda}_\tau \odot \mathbf{r}
    \right)
  }_{\text{(ii) NO guidance}}
  \;-\;
  \underbrace{
    \frac{1}{\sigma_y^2}\,
    \left(\frac{\partial \bar{\mathbf{u}}_0}
               {\partial \mathbf{u}_\tau}\right)^{\!\top}
    \mathcal{M}_{\mathcal{S}}^\dagger\!\left(
      \mathcal{M}_{\mathcal{S}}(\bar{\mathbf{u}}_0)
      - \mathbf{y}
    \right)
  }_{\text{(iii) sensor guidance}}.
\end{equation}
The two guidance terms play complementary roles:
the NO term operates in the Fourier domain with principled spectral
weighting, supplying global structural information with
$\tau$-dependent annealing; the sensor term operates in the spatial domain through the denoiser Jacobian, enforcing sharp pointwise consistency at the observed locations.
At each reverse-diffusion step, the computational cost comprises
one denoiser forward pass (shared by terms~(i) and~(iii)),
one vector-Jacobian product through the denoiser (term~(iii)),
and two FFTs with elementwise operations (term~(ii), negligible).
The full sampling procedure is detailed in Algorithm~\ref{alg:dps-no} (Appendix~\ref{app:algorithm}).

\subsection{Exact score identity and approximation error}
\label{sec:score-analysis}
In practice, the spectral NO score in ~\eqref{eq:no-score} replaces the true marginal likelihood with a moment-matched Gaussian.
To characterize the error introduced by this approximation, we first establish an exact expression for the NO likelihood score that holds without any distributional assumption.
\begin{prop}[Exact NO likelihood score]
\label{prop:exact-score-main}
At each Fourier mode $k$, let $X = \mathcal{F}(\mathbf{u})(k)$, $Y = \mathcal{F}(\mathbf{u}_\tau)(k)$, and $Z = \mathcal{F}(\mathbf{\uNO})(k)$ under the per-mode models $Y = X + \hat{\eta}_\tau$ and $Z = HX + \hat{\eta}_{\mathrm{NO}}$, with $\hat{\eta}_{\mathrm{NO}}$ zero-mean with variance $\sigma^2_{\mathrm{NO}}$ but otherwise unrestricted. 
Then: 
\begin{equation}
  \label{eq:exact-score-main}
  \nabla_{Y^*}\log p(Z \mid Y)
  \;=\;
  \frac{\mathbb{E}[X \mid Y, Z]
    \;-\; \mathbb{E}[X \mid Y]}{\sigma^2_\tau}.
\end{equation}
\end{prop}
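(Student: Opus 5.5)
This is a Tweedie-type identity applied twice: once conditionally on $Z$, once unconditionally. The key structural fact is that $\hat{\eta}_\tau\sim\mathcal{CN}(0,\sigma^2_\tau)$ is independent of $(X,\hat{\eta}_{\mathrm{NO}})$. Hence $Y$ is obtained from the pair $(X,Z)$ by adding independent complex Gaussian noise, and the Markov structure $Z \to X \to Y$ holds: $p(Y\mid X,Z)=p(Y\mid X)$. No assumption on the law of $\hat{\eta}_{\mathrm{NO}}$ beyond independence from $\hat{\eta}_\tau$ is needed. Its zero mean and variance play no role in the identity itself; they only enter later, in the moment-matched approximation.

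\textbf{Step 1 (split the score by Bayes).} Write
\[
\log p(Z\mid Y)=\log p(Y\mid Z)+\log p(Z)-\log p(Y).
\]
Since $p(Z)$ does not depend on $Y$,
\[
\nabla_{Y^*}\log p(Z\mid Y)=\nabla_{Y^*}\log p(Y\mid Z)-\nabla_{Y^*}\log p(Y).
\]

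\textbf{Step 2 (unconditional Tweedie).} Use $p(Y)=\int p(Y\mid X)\,p(X)\,dX$ with the Gaussian kernel
\[
p(Y\mid X)=(\pi\sigma_\tau^2)^{-1}\exp\!\bigl(-|Y-X|^2/\sigma^2_\tau\bigr).
\]
The Wirtinger derivative of the kernel is
\[
\nabla_{Y^*}p(Y\mid X)=-\frac{Y-X}{\sigma^2_\tau}\,p(Y\mid X),
\]
because $|Y-X|^2=(Y-X)(Y-X)^*$. Differentiating under the integral and dividing by $p(Y)$ gives
\[
\nabla_{Y^*}\log p(Y)=\frac{\mathbb{E}[X\mid Y]-Y}{\sigma^2_\tau}.
\]

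\textbf{Step 3 (conditional Tweedie).} Repeat the same computation with $p(Y\mid Z)=\int p(Y\mid X)\,p(X\mid Z)\,dX$, where the Markov property from the preliminary remark justifies using the same kernel. This yields
\[
\nabla_{Y^*}\log p(Y\mid Z)=\frac{\mathbb{E}[X\mid Y,Z]-Y}{\sigma^2_\tau}.
\]
Subtracting the result of Step 2, the $Y$ terms cancel and we obtain \eqref{eq:exact-score-main}.

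\textbf{Main obstacle.} The delicate point is rigor rather than algebra. Differentiation under the integral needs a dominated-convergence bound; the Gaussian kernel times $|Y-X|$ is integrable against any probability measure $p(X)$ or $p(X\mid Z)$, locally uniformly in $Y$, so this holds without moment assumptions on $X$. Because $X$ and $Z$ need not have densities, I would write the integrals as integrals against the laws of $X$ and of $X$ given $Z$, with $p(Y)>0$ everywhere by Gaussian convolution.

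I would also state the Wirtinger convention $\nabla_{Y^*}=\tfrac12(\partial_{\mathrm{Re}}+i\,\partial_{\mathrm{Im}})$ explicitly, so that the factor $1/\sigma_\tau^2$, rather than $2/\sigma_\tau^2$, matches the $\mathcal{CN}(0,\sigma_\tau^2)$ normalization. This is consistent with the factor-$2$ remark preceding the proposition. Finally, when $Z$ is not absolutely continuous, $p(Z\mid Y)$ should be read as the density of the conditional law of $Z$ given $Y$ with respect to the marginal law of $Z$. The Bayes split in Step 1 remains valid in that sense.
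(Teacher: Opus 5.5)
Your proof is correct, but it takes a different route from the paper's. The paper differentiates the marginal $p(Z\mid Y)=\int p(Z\mid X)\,p(X\mid Y)\,dX$ under the integral. It computes the posterior score $\nabla_{Y^*}\log p(X\mid Y)=(X-\mathbb{E}[X\mid Y])/\sigma^2_\tau$ from the Gaussian channel plus a cited complex Tweedie formula, and then identifies $p(Z\mid X)\,p(X\mid Y)/p(Z\mid Y)=p(X\mid Y,Z)$ by Bayes. You instead apply Bayes first, replacing $p(Z\mid Y)$ by $p(Y\mid Z)/p(Y)$ up to a $Y$-independent factor. You then derive Tweedie twice, once conditionally on $Z$ and once unconditionally, so that the $-Y/\sigma^2_\tau$ terms cancel.

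Your version gains three things:
\begin{itemize}
\item Every derivative falls on the Gaussian kernel alone, so dominated convergence is immediate with a bounded dominating function. The paper's step (a) differentiates $p(X\mid Y)$, whose normalizer $p_Y(Y)$ also depends on $Y$, and simply invokes ``usual regularity assumptions''.
\item You never need a conditional density $p(Z\mid X)$. The paper writes it as $\mathcal{CN}(Z;HX,\sigma^2_{\mathrm{NO}})$, which is in tension with the ``otherwise unrestricted'' hypothesis, though harmlessly, since that Gaussian form is never used.
\item You make explicit that the zero mean, the variance, and even independence of $\hat{\eta}_{\mathrm{NO}}$ from $X$ play no role. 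This matters because the paper's calibration only guarantees that $\hat{\eta}_{\mathrm{NO}}$ is uncorrelated with $X$.
\end{itemize}
In exchange, the paper's route exhibits the score directly as the mean, under $p(X\mid Y,Z)$, of $X-\mathbb{E}[X\mid Y]$. That is the ``shift of the optimal estimate'' reading the main text relies on.

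One small precision: the hypothesis you actually use is the joint independence $\hat{\eta}_\tau\perp(X,\hat{\eta}_{\mathrm{NO}})$ from your opening sentence. Your later paraphrase, ``independence from $\hat{\eta}_\tau$'', should not be read as pairwise independence, which does not yield $p(Y\mid X,Z)=p(Y\mid X)$.
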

The proof proceeds by differentiating the marginal $p(Z \mid Y) = \int p(Z \mid X)\,p(X \mid Y)\,dX$ with respect to $Y^*$, applying Bayes' rule to identify $p(X \mid Y, Z)$ in the resulting integral, and recognizing the Tweedie score of the diffusion channel; the full derivation is given in Appendix~\ref{app:exact-score}.

The identity has a clear interpretation: the NO likelihood score measures how much the optimal estimate of the clean Fourier coefficient $X$ shifts when we additionally condition on the surrogate observation $Z$, normalized by the diffusion noise variance.
Where $Z$ is uninformative given $Y$, the shift vanishes and the score is zero; where $Z$ provides strong additional information, the shift is large and the score steers the reverse diffusion accordingly.

\paragraph{Score error under the Gaussian approximation.}
Our score formulation ~\eqref{eq:no-score} replaces both conditional expectations in~\eqref{eq:exact-score-main} with their LMMSE counterparts: $\mathbb{E}[X \mid Y] \approx \alpha Y$ and $\mathbb{E}[X \mid Y, Z] \approx \alpha Y + (H^*\sigma_L^2 / \lambda_\tau)\,r$.
The resulting score decomposes as
\begin{equation}
  \label{eq:score-error-main}
  \epsilon
  \;=\;
  \frac{1}{\sigma^2_\tau}\Bigl[
  \underbrace{
    \bigl(\mathbb{E}[X \mid Y, Z]
      - \mathbb{E}_L[X \mid Y, Z]\bigr)
  }_{\delta_{\mathrm{post}}}
  \;-\;
  \underbrace{
    \bigl(\mathbb{E}[X \mid Y] - \alpha Y\bigr)
  }_{\delta_{\mathrm{prior}}}
  \Bigr],
\end{equation}
where both $\delta_{\mathrm{prior}}$ and $\delta_{\mathrm{post}}$ are MMSE-LMMSE gaps: the difference between the optimal nonlinear estimator and the optimal linear estimator of $X$.
If $X$ were Gaussian \emph{and} $\hat{\eta}_{\mathrm{NO}}$ were Gaussian, both gaps would vanish identically and the approximation would be exact.
The score error is therefore driven entirely by the non-Gaussianity of the clean Fourier coefficients and the NO residual.
By the Pythagorean property of MMSE estimation, the mean-squared gaps are bounded by the corresponding LMMSE errors \begin{equation} \mathbb{E}[|\delta_{\mathrm{prior}}|^2] \leq
\sigma_L^2 \quad; \quad \mathbb{E}[|\delta_{\mathrm{post}}|^2] \leq \sigma_{L,YZ}^2
\label{eq:bounds}
\end{equation} both computable from the calibrated second-order statistics alone (Appendix~\ref{app:score-error}).
These bounds are \emph{distribution-agnostic}: they require no assumption on the form of $p(X)$ or $p(\hat{\eta}_{\mathrm{NO}})$ beyond the calibrated moments $H(k)$, $\sigma^2_{\mathrm{NO}}(k)$, and $P_{\mathbf{u}}(k)$.

\section{Results}
\label{sec:results}

We evaluate the spectral guidance framework on three-dimensional elastic wavefield prediction from the HEMEW\textsuperscript{S}-3D database, providing ablation studies of the components in our method.

\subsection{Experimental setup}
\label{sec:setup}

We evaluate all methods on a held-out test set of $N=1{,}000$ SEM
simulations drawn from the HEMEW\textsuperscript{S}-3D database
(Sec.~\ref{sec:sem-data}), with geologies and source configurations
unseen during training. The HEMEW\textsuperscript{S}-3D database is partitioned into three disjoint splits: $27{,}000$ samples for training the diffusion prior, $2{,}000$ for the spectral calibration and hyperparameter selection, and $1{,}000$ samples for all reported test metrics.
For each test sample, sparse observations are generated by selecting
a random subset of the $32\times32$ surface grid uniformly at random;
the same spatial mask is applied to all three velocity components and
retained across all time steps.
We report results at sensor densities $\rho=5\%$
($|\mathcal{S}|=51$) and $\rho=2\%$ ($|\mathcal{S}|=20$).
Posterior samples are generated by solving the probability-flow
ODE~\citep{karras2022elucidating} with $64$ steps using the explicit
Euler integrator.

Five configurations are compared, summarized in
Table~\ref{tab:main-results}:
\begin{itemize}
    \item \emph{MIFNO} is the frozen deterministic surrogate (Sec.~\ref{sec:mifno-surrogate}), using no diffusion and no sensor data.
    \item \emph{DPS} applies standard diffusion posterior sampling~\eqref{eq:dps-likelihood-score} conditioned on sparse measurements alone.
    \item \emph{DPS\,+\,NO\,(iso)} adds neural-operator guidance with isotropic treatment ($H(k)=1$, $\sigma^2_{\mathrm{NO}}(k) = \sigma^2_{\mathrm{NO,iso}}$), routing both terms through the denoiser Jacobian. 
    \item \emph{\FreqNODPS\,($\alpha\!=\!1$)} uses the calibrated spectral observation model ($H(k)$, $\sigma^2_{\mathrm{NO}}(k)$) but
    disables the LMMSE Wiener filter ($\alpha(k)=1$).
    \item \emph{\FreqNODPS} is the full spectrally shaped
    method~\eqref{eq:combined-score}, with calibrated $H(k)$,
    $\sigma^2_{\mathrm{NO}}(k)$, and LMMSE Wiener filter
    $\alpha(k)$.
\end{itemize}
All diffusion-based methods share the same unconditional prior and
sampling configuration.

Reconstruction quality is assessed through different metrics.
Pointwise accuracy is measured by the relative mean absolute error (rMAE) and relative root mean squared error (rRMSE), computed per sensor location over the temporal axis and averaged over all locations, components, and samples.
Spectral fidelity is measured by the banded relative FFT bias (rFFT) in three frequency ranges, low ($0$--$1$\,Hz), mid ($1$--$2$\,Hz), and high ($2$--$5$\,Hz), where negative values indicate systematic spectral underestimation and zero indicates unbiased reproduction.
We additionally report the significant-duration error SD\textsubscript{5--95}, which captures mismatches in the temporal energy distribution.
Full metric definitions are given in Appendix~\ref{app:experimental}.

\subsection{Quantitative comparison}
\label{sec:quantitative}

\begin{table}[t]
\centering
\caption{Quantitative comparison across sensor densities and
ablations.
MIFNO is sensor-independent; all other methods use the same
unconditional diffusion prior.
DPS\,+\,NO\,(iso) treats the surrogate isotropically;
\FreqNODPS\,($\alpha\!=\!1$) uses the calibrated spectral model
but disables the LMMSE Wiener filter;
\FreqNODPS\ is the full method.
Mean $\pm$ std over $1{,}000$ test samples.}
\label{tab:main-results}
\resizebox{\textwidth}{!}{
\begin{tabular}{@{}llcccccc@{}}
\toprule
& & rMAE\,$\downarrow$ & rRMSE\,$\downarrow$
& rFFT\textsubscript{low}\,$\to\!0$
& rFFT\textsubscript{mid}\,$\to\!0$
& rFFT\textsubscript{high}\,$\to\!0$
& SD\textsubscript{5--95}\,$\downarrow$ \\
\midrule
\multicolumn{2}{@{}l}{\textit{Sensor-independent}} \\[2pt]
& MIFNO
  & $0.133{\scriptstyle\,\pm\,}0.052$
  & $0.224{\scriptstyle\,\pm\,}0.085$
  & $-0.075{\scriptstyle\,\pm\,}0.173$
  & $-0.137{\scriptstyle\,\pm\,}0.180$
  & $-0.239{\scriptstyle\,\pm\,}0.207$
  & $0.581{\scriptstyle\,\pm\,}0.356$ \\
\midrule
\multicolumn{2}{@{}l}{\textit{$\rho=5\%$
  \;($|\mathcal{S}|=51$)}} \\[2pt]
& DPS
  & $0.113{\scriptstyle\,\pm\,}0.050$
  & $0.217{\scriptstyle\,\pm\,}0.082$
  & $-0.088{\scriptstyle\,\pm\,}0.093$
  & $-0.163{\scriptstyle\,\pm\,}0.122$
  & $-0.232{\scriptstyle\,\pm\,}0.124$
  & $0.148{\scriptstyle\,\pm\,}0.160$ \\[3pt]
& DPS\,+\,NO\, (iso)
  & $\mathbf{0.099}{\scriptstyle\,\pm\,}0.043$
  & $\mathbf{0.180}{\scriptstyle\,\pm\,}0.071$
  & $-0.060{\scriptstyle\,\pm\,}0.102$
  & $-0.117{\scriptstyle\,\pm\,}0.125$
  & $-0.210{\scriptstyle\,\pm\,}0.156$
  & $0.225{\scriptstyle\,\pm\,}0.165$ \\[3pt]
& \FreqNODPS\ ($\alpha\!=\!1$)
  & $0.110{\scriptstyle\,\pm\,}0.046$
  & $0.223{\scriptstyle\,\pm\,}0.085$
  & $\mathbf{-0.001}{\scriptstyle\,\pm\,}0.052$
  & $-0.048{\scriptstyle\,\pm\,}0.068$
  & $-0.066{\scriptstyle\,\pm\,}0.111$
  & $\mathbf{0.097}{\scriptstyle\,\pm\,}0.076$ \\[3pt]
& \FreqNODPS
  & $0.100{\scriptstyle\,\pm\,}0.045$
  & $0.200{\scriptstyle\,\pm\,}0.084$
  & $0.009{\scriptstyle\,\pm\,}0.036$
  & $\mathbf{-0.015}{\scriptstyle\,\pm\,}0.054$
  & $\mathbf{0.002}{\scriptstyle\,\pm\,}0.097$
  & $0.118{\scriptstyle\,\pm\,}0.082$ \\
\midrule
\multicolumn{2}{@{}l}{\textit{$\rho=2\%$
  \;($|\mathcal{S}|=20$)}} \\[2pt]
& DPS
  & $0.168{\scriptstyle\,\pm\,}0.053$
  & $0.280{\scriptstyle\,\pm\,}0.070$
  & $-0.418{\scriptstyle\,\pm\,}0.147$
  & $-0.506{\scriptstyle\,\pm\,}0.134$
  & $-0.557{\scriptstyle\,\pm\,}0.121$
  & $0.617{\scriptstyle\,\pm\,}0.355$ \\[3pt]
& DPS\,+\,NO\, (iso)
  & $\mathbf{0.119}{\scriptstyle\,\pm\,}0.047$
  & $\mathbf{0.203}{\scriptstyle\,\pm\,}0.075$
  & $-0.090{\scriptstyle\,\pm\,}0.141$
  & $-0.153{\scriptstyle\,\pm\,}0.160$
  & $-0.253{\scriptstyle\,\pm\,}0.186$
  & $0.350{\scriptstyle\,\pm\,}0.276$ \\[3pt]
& \FreqNODPS\ ($\alpha\!=\!1$)
  & $0.132{\scriptstyle\,\pm\,}0.053$
  & $0.232{\scriptstyle\,\pm\,}0.083$
  & $-0.118{\scriptstyle\,\pm\,}0.106$
  & $-0.192{\scriptstyle\,\pm\,}0.126$
  & $-0.250{\scriptstyle\,\pm\,}0.156$
  & $\mathbf{0.150}{\scriptstyle\,\pm\,}0.127$ \\[3pt]
& \FreqNODPS
  & $0.125{\scriptstyle\,\pm\,}0.050$
  & $0.235{\scriptstyle\,\pm\,}0.083$
  & $\mathbf{0.011}{\scriptstyle\,\pm\,}0.087$
  & $\mathbf{-0.006}{\scriptstyle\,\pm\,}0.120$
  & $\mathbf{0.047}{\scriptstyle\,\pm\,}0.191$
  & $0.286{\scriptstyle\,\pm\,}0.157$ \\
\bottomrule
\end{tabular}
}
\end{table}

\paragraph{Sensor-only DPS fails spectrally.}
At $\rho=5\%$, DPS provides essentially no spectral improvement over the deterministic surrogate: rFFT\textsubscript{high} is $-0.232$ compared to MIFNO's $-0.239$.
Despite access to ground-truth observations at $51$ locations, the likelihood gradient constrains too few spatial degrees of freedom to steer the diffusion prior toward the correct high-frequency content.
The failure is far more severe at  $\rho=2\%$, where DPS degrades \emph{below} the surrogate on all metrics (rMAE of $0.168$ vs. $0.133$; rFFT\textsubscript{high} of $-0.557$), losing over half the high-frequency spectral content.
With only $20$ sensors constraining $1{,}024$ spatial locations, the sparse likelihood becomes too weak to guide the reverse diffusion.

\paragraph{Isotropic NO guidance reimports spectral bias.}
DPS\,+\,NO\,(iso), which treats the surrogate prediction as an isotropic Gaussian observation, demonstrates that the surrogate's structural information is genuinely useful: at $\rho=5\%$ it substantially improves pointwise accuracy over sensor-only DPS (rMAE of $0.099$ vs.\ $0.113$; rRMSE of $0.180$ vs.\ $0.217$), and at $\rho=2\%$ it recovers from the collapse of sensor-only DPS (rMAE of $0.119$ vs.\ $0.168$).
However, uniform spectral weighting carries the surrogate's bias into the posterior nearly intact: rFFT\textsubscript{high} is
$-0.210$ at $\rho=5\%$ and $-0.253$ at $\rho=2\%$, barely improved from MIFNO's $-0.239$.
This confirms that without spectrally shaped calibration, NO guidance \emph{does} reimpose spectral bias at both sensor densities.

\paragraph{Spectral shaping resolves the trade-off.}
\FreqNODPS\ achieves near-zero spectral bias at both sensor densities: rFFT\textsubscript{high} of $+0.002$ at $\rho=5\%$ and $+0.047$ at $\rho=2\%$. At $\rho=5\%$ this is two orders of magnitude smaller in absolute value than DPS\,+\,NO\,(iso) ($+0.002$ vs.\ $-0.210$); at $\rho=2\%$ it is roughly $5\times$ smaller ($+0.047$ vs.\ $-0.253$).
At $\rho=5\%$, this spectral correction comes at no pointwise cost: rMAE matches DPS\,+\,NO\,(iso) within one standard error of the mean ($0.100$ vs.\ $0.099$; standard error $\approx 0.0014$ at $N=1{,}000$), so the two methods are statistically indistinguishable on pointwise accuracy while differing by two orders of magnitude on spectral fidelity.
At $\rho=2\%$, the spectral correction incurs a modest pointwise cost (rMAE $0.125$ vs.\ $0.119$, $\approx\!5\%$ relative; statistically significant at $N=1{,}000$), reflecting the stronger regularization required when sensor anchoring is weakest.
The spectrally calibrated guidance channels the surrogate's information at frequencies where it is reliable ($|H(k)| \approx 1$, $\gamma(k) \ll 1$) and vanishes where it is not, so that the diffusion prior is free to generate high-frequency detail uncorrupted by the surrogate's bias.
A sensitivity analysis over two orders of magnitude of $\lambda_{\mathrm{NO}}$ (Appendix~\ref{app:sensitivity}) confirms that the calibrated value sits at the natural zero-crossing of the spectral correction, with pointwise accuracy essentially flat across a $20\times$ range of $\lambda_{\mathrm{NO}}$ around it.

\paragraph{LMMSE provides the final spectral correction.}
The \FreqNODPS\,($\alpha\!=\!1$) ablation retains the calibrated spectral model, $H(k)$ and $\sigma^2_{\mathrm{NO}}(k)$, but replaces the LMMSE Wiener filter with the isotropic posterior approximation ($\alpha(k)=1$). At $\rho=5\%$, this alone brings rFFT\textsubscript{high} from
$-0.210$ (DPS\,+\,NO\,(iso)) to $-0.066$: the frequency-dependent observation model accounts for most of the spectral correction.
The LMMSE layer pushes rFFT\textsubscript{high} to $+0.002$, eliminating the residual bias.
At $\rho=2\%$, however, the $\alpha\!=\!1$ ablation shows rFFT\textsubscript{high} of $-0.250$, nearly identical to DPS\,+\,NO\,(iso), indicating that the spectral observation model alone is insufficient at extreme sparsity and the LMMSE correction becomes essential to achieve the full spectral correction (rFFT\textsubscript{high} of $+0.047$).

\subsection{Posterior stability and calibration}
\label{sec:posterior}

The main reconstruction results in Section~\ref{sec:quantitative}
are generated with the probability-flow ODE, which is the efficient
choice for point-estimate evaluation. For calibration analysis,
however, an ODE sampler is inappropriate: the integrator is deterministic given the initial noise and produces an artificially
narrow empirical distribution that underestimates posterior uncertainty. We therefore switch to an SDE sampler for the experiments in this section, which preserves stochasticity throughout the reverse trajectory. We generate $M=20$ posterior samples for $100$ test cases at $\rho=5\%$, each sharing the same sensor mask and MIFNO prediction but initialized with independent noise draws. Table~\ref{tab:posterior}
compares \FreqNODPS\ with sensor-only DPS and DPS\,+\,NO\,(iso) under SDE sampling.

\begin{table}[t]
    \centering
    \caption{Posterior stability and calibration at $\rho=5\%$
    under SDE sampling ($M=20$ realizations, $100$ test samples).
    \emph{Coverage}: fraction of ground-truth values within the
    pointwise $\pm 2\hat{\sigma}$ interval across realizations
    (nominal $0.954$); bold marks the value closest to nominal.
    \emph{CI width}: mean pointwise interval width $4\hat{\sigma}$
    normalized by the per-sample trace amplitude.
    \emph{Posterior std}: mean across-realization standard
    deviation in z-score normalized space.
    \emph{rMAE}: mean $\pm$ std across the $M$ realizations.
    CI width, posterior std, and rMAE are reported without a
    best-direction highlight, as smaller interval widths are only
    desirable when coverage is maintained.}
    \label{tab:posterior}
    \small
    \begin{tabular}{@{}lcccc@{}}
    \toprule
    & Coverage & CI width & Posterior std & rMAE \\
    \midrule
    DPS
      & $0.842$
      & $0.022$
      & $1.6 \times 10^{-3}$
      & $0.134 {\scriptstyle\,\pm\,} 0.009$ \\[3pt]
    DPS\,+\,NO\,(iso)
      & $0.769$
      & $0.015$
      & $1.1 \times 10^{-3}$
      & $0.110 {\scriptstyle\,\pm\,} 0.004$ \\[3pt]
    \FreqNODPS
      & $\mathbf{0.854}$
      & $0.018$
      & $1.4 \times 10^{-3}$
      & $0.112 {\scriptstyle\,\pm\,} 0.001$ \\
    \bottomrule
    \end{tabular}
\end{table}

\FreqNODPS\ exhibits the lowest inter-realization variability of the three methods: the rMAE standard deviation is $0.001$ on a mean of $0.112$, a coefficient of variation of roughly $1\%$. Sensor-only DPS is more than five times more variable (rMAE std of $0.009$ on a mean of $0.134$; CoV $\approx 7.5\%$), and DPS\,+\,NO\,(iso) sits in between (std of $0.004$, CoV $\approx 4\%$). The spectrally calibrated guidance therefore not only improves the spectral profile of individual reconstructions (Table~\ref{tab:main-results}) but also
stabilizes the reconstruction trajectory across stochastic initializations. The gap between methods remains the dominant source of variation: under SDE
sampling (Table~\ref{tab:posterior}), the rMAE difference between sensor-only DPS and \FreqNODPS\ ($0.134$ vs.\ $0.112$, a gap of $0.022$) is more than $20\times$ the inter-realization spread of \FreqNODPS\ ($0.001$).

\paragraph{Posterior concentration.}
\FreqNODPS\ achieves the highest empirical coverage of the three methods ($85.4\%$ vs.\ $84.2\%$ for sensor-only DPS and $76.9\%$ for DPS\,+\,NO\,(iso)), while simultaneously producing narrower credible intervals than sensor-only DPS (CI width of $0.018$ vs.\
$0.022$). The combination is informative because coverage can be trivially inflated by widening the intervals. \FreqNODPS\ instead achieves higher coverage with a tighter posterior, so the spectral guidance is producing a posterior that is both better calibrated and more concentrated than the alternatives.

DPS\,+\,NO\,(iso) shows the opposite pattern: it has the narrowest intervals (CI width $0.015$) and the lowest posterior standard deviation, but also the worst coverage. The isotropic surrogate guidance pulls every spatiotemporal point toward the MIFNO prediction, contracting the empirical distribution around a biased mean and leaving more ground-truth values outside the $\pm 2\hat{\sigma}$ band. This mirrors the spectral-bias finding from Section~\ref{sec:quantitative}: uniform spectral weighting carries the surrogate's deficiencies into the posterior, here manifesting as miscalibration rather than as attenuated frequencies.

\paragraph{Residual gap to nominal coverage.}
All three methods fall below the nominal $95\%$ coverage of a correctly calibrated $\pm 2\hat{\sigma}$ band. The residual gap of roughly $10$ percentage points for \FreqNODPS\ is consistent across nominal levels: the $1\sigma$ coverage is $58.2\%$ versus nominal $68.3\%$, a gap of $10.1$ points nearly identical to the $2\sigma$ gap. This consistency suggests a uniform underdispersion of the credible intervals rather than a tail-specific miscalibration, compatible with the moment-matching Gaussian approximation in the NO likelihood (Section~\ref{sec:no-guided-dps}) being mildly too light-tailed at all scales. Closing this gap would require either a refined moment model that captures the heavy-tailed behaviour of the underlying distribution, larger ensembles for tail estimation, or both; we leave a systematic study of calibrated credible intervals to future work.

\subsection{Wavefield reconstruction quality}
\label{sec:qualitative}
Figure~\ref{fig:snapshots} shows velocity maps for a
representative test sample at $\rho=5\%$.
The MIFNO prediction reproduces the large-scale wavefront geometry but exhibits the characteristic oversmoothing induced by spectral bias, with attenuated amplitudes and blurred wavefronts.
DPS-NO (iso) alone recovers the overall structure of the field but lacks spectral consistency.
The \FreqNODPS\ combines the global structural prior from the calibrated neural operator with the local truth anchoring from the sensors, producing reconstructions that are visually closest to the ground truth across the entire spatial domain.

\begin{figure}[t]
  \centering
  \includegraphics[width=\textwidth]{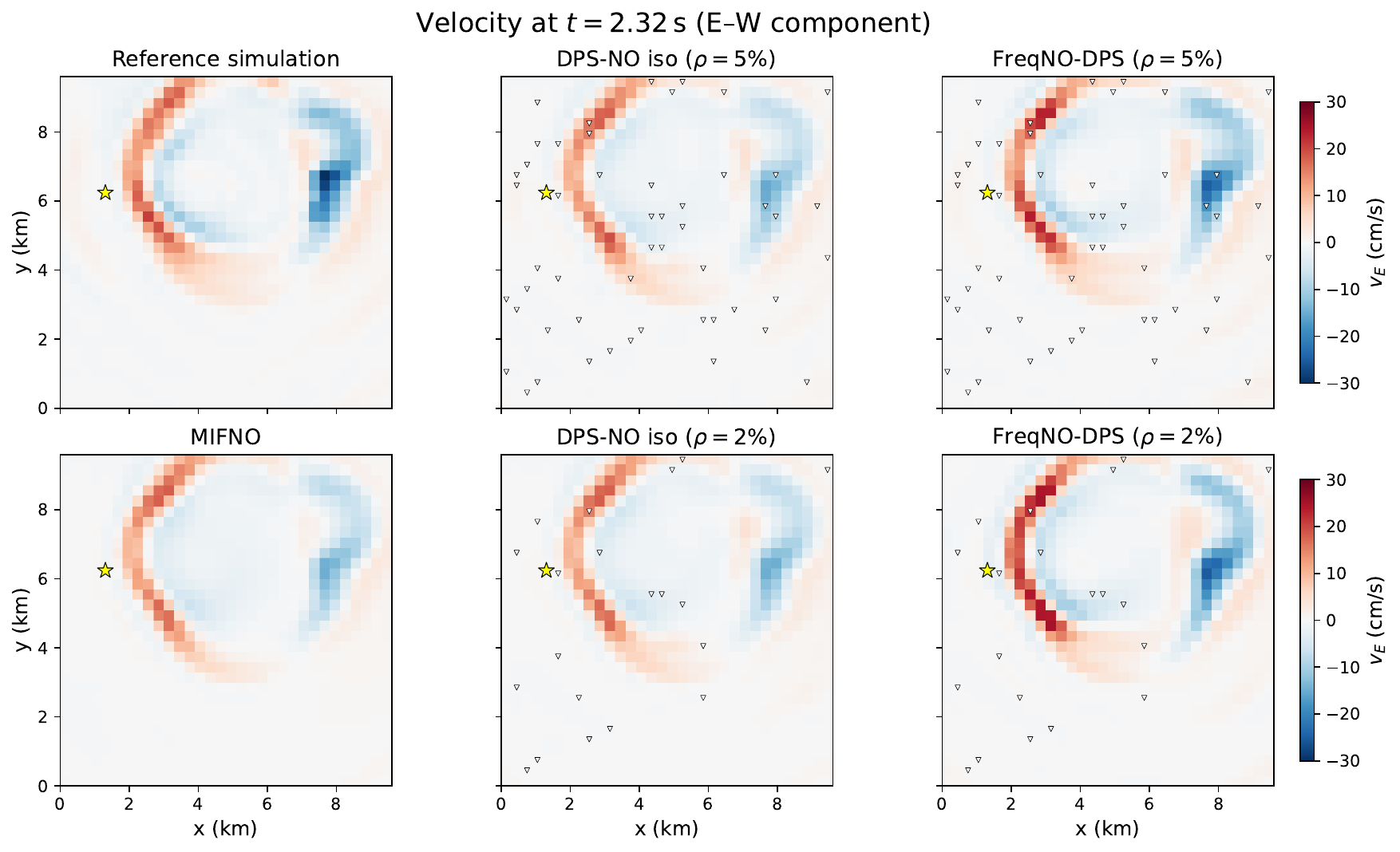}
    \caption{East--west velocity at $t = 2.32$\,s for    a representative test sample.
      Left column: sensor-independent references (top: SEM simulation, bottom: MIFNO surrogate).
      Middle column: DPS\,+\,NO\,(iso) at $\rho=5\%$ (top) and $\rho=2\%$ (bottom).
      Right column: \FreqNODPS\ at $\rho=5\%$ (top) and $\rho=2\%$ (bottom).
      Yellow star: source location; white triangles: sensor positions.}
  \label{fig:snapshots}
\end{figure}

Figure~\ref{fig:traces} compares the vertical velocity time histories at the sensor recording the highest peak ground velocity.
The MIFNO trace captures the dominant arrival but underestimates peak amplitudes and lacks small-amplitude oscillations.
The \FreqNODPS\ tracks the ground-truth waveform more closely in both phase and amplitude.

\begin{figure}[t]
  \centering
  \includegraphics[width=\textwidth]{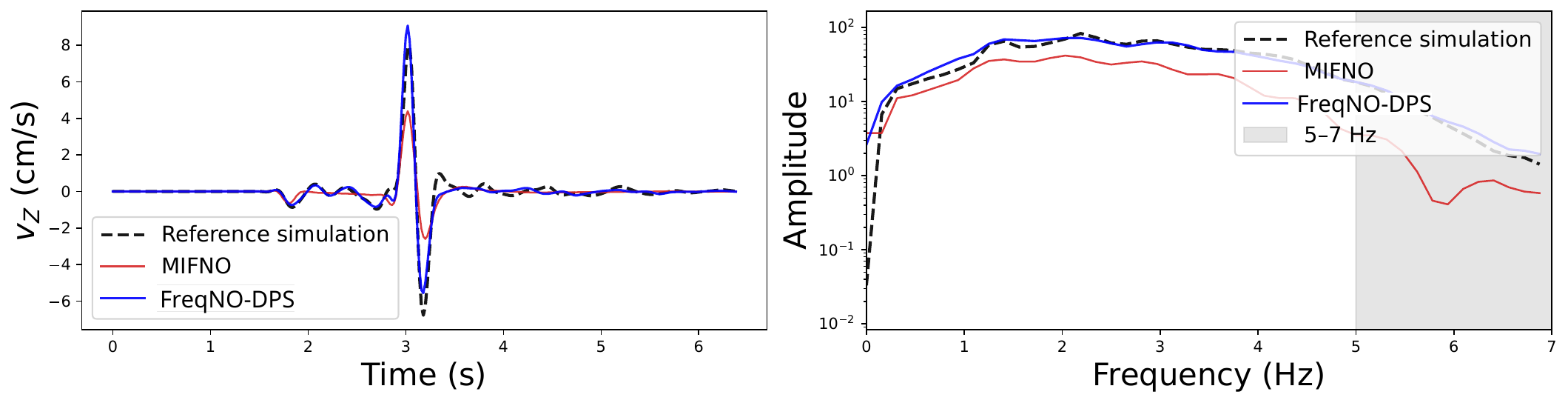}
  \caption{Vertical component velocity time histories at a sensor location ($\rho=5\%$) and the corresponding spectrum.
  Black dashed: reference simulation; red: MIFNO; blue \FreqNODPS.}
  \label{fig:traces}
\end{figure}

Figure~\ref{fig:spectra} shows the ensemble-averaged frequency
spectrum at $\rho=5\%$, computed along the temporal axis at each
spatial location and averaged over all locations, components, and
$1{,}000$ test samples.
MIFNO and DPS\,+\,NO\,(iso) both attenuate spectral content above
${\sim}1$\,Hz, with their curves peeling away from the reference
at progressively higher frequencies.
\FreqNODPS\,($\alpha\!=\!1$) partially recovers high-frequency
content but still underestimates the overall spectrum.
\FreqNODPS\ tracks the reference spectrum across the full frequency
range, consistent with the near-zero rFFT values in
Table~\ref{tab:main-results}.

\begin{figure}[t]
  \centering
  \includegraphics[width=0.6\textwidth]{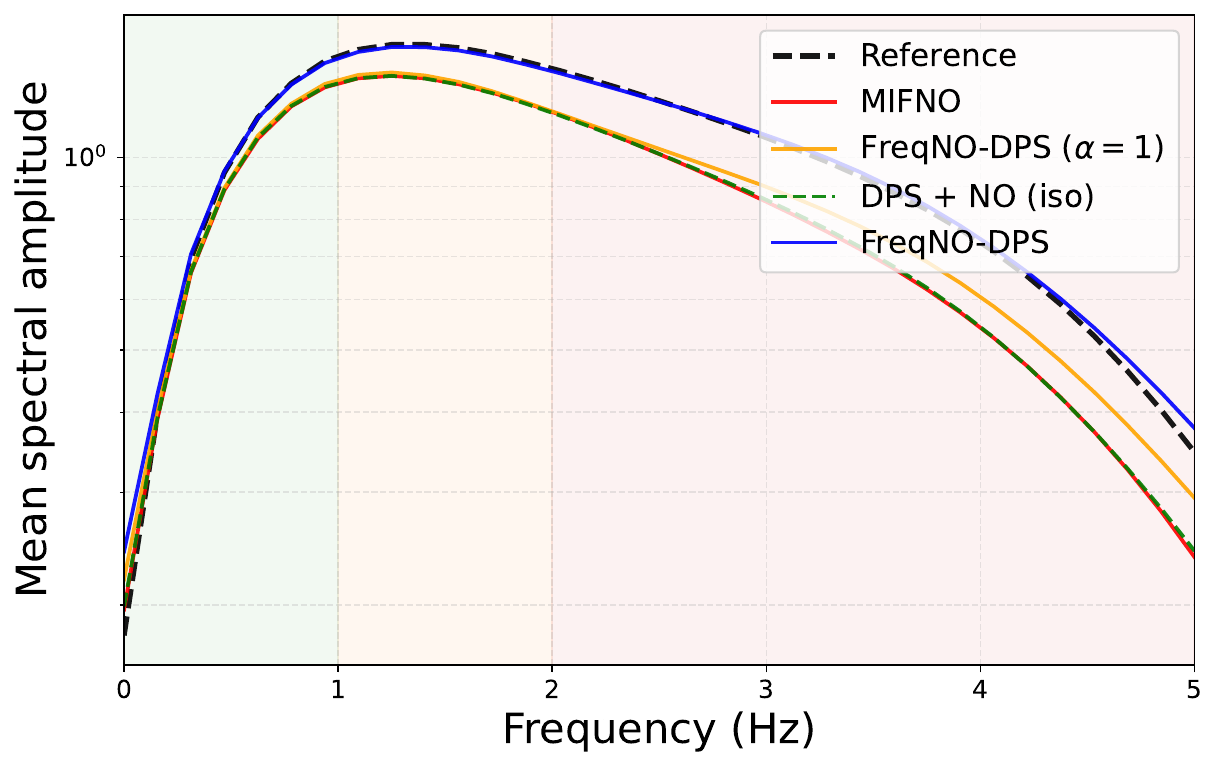}
  \caption{Ensemble-averaged frequency spectrum at $\rho=5\%$.
  For each method, $|\mathcal{F}(\mathbf{u})(f)|$ is computed per
  spatial location along the temporal axis and averaged over all
  locations, velocity components, and $1{,}000$ test samples.
  Black dashed: reference simulation; red: MIFNO; green:
  DPS\,+\,NO\,(iso); orange: \FreqNODPS\,($\alpha\!=\!1$); blue:
  \FreqNODPS.
  Shaded bands correspond to the low, mid, and high rFFT ranges
  in Table~\ref{tab:main-results}.}
  \label{fig:spectra}
\end{figure}

\section{Discussion}
\label{sec:discussion}

The experiments of Section~\ref{sec:results} show that spectral shaping is what separates effective neural operator guidance from a faithful reimport of the
surrogate's bias. We now examine why the moment-matched approximation underlying that guidance is well-behaved, and delineate the conditions under which the
approach applies. Section~\ref{sec:regime-analysis} characterizes the approximation error of the spectral NO score across the wavenumber--diffusion-time plane, building on the exact score identity and distribution-free bounds established in Section~\ref{sec:score-analysis}; Section~\ref{sec:limitations} discusses the scope of the spectral observation model, calibration under distribution shift, and sampling cost.

\subsection{Regime analysis}
\label{sec:regime-analysis}
We write $s^{\mathrm{approx}}$ for the moment-matched NO score~\eqref{eq:no-score} and $\epsilon \coloneqq s^{\mathrm{exact}} - s^{\mathrm{approx}}$~\eqref{eq:score-error-main}
for its error; the regime statements below concern their expected squared magnitudes $\mathbb{E}[|s^{\mathrm{approx}}|^2]$ and $\mathbb{E}[|\epsilon|^2]$, tabulated in Table~\ref{tab:regime-summary}.
These bounds depend on the diffusion noise level $\sigma_\tau$ and the mode-dependent NO accuracy through two dimensionless parameters: the inverse
per-mode diffusion SNR $\nu(k,\tau) \coloneqq \sigma^2_\tau / P_{\mathbf{u}}(k)$ and the relative NO error $\gamma(k) \coloneqq \sigma^2_{\mathrm{NO}}(k) / (|H(k)|^2 P_{\mathbf{u}}(k))$.
A third quantity, the smoothing ratio $\zeta(k,\tau) \coloneqq |H|^2 \sigma_L^2 / \sigma^2_{\mathrm{NO}}
= \nu / [\gamma(1+\nu)]$, controls the relative contribution of the marginalization uncertainty to the innovation variance.
Note that $\nu$ and $\zeta$ depend on both the wavenumber $k$ and the diffusion time $\tau$, while $\gamma$ depends on $k$ alone; we suppress these arguments hereafter for readability.
Together, $\nu$ and $\gamma$ partition the wavenumber--diffusion-noise plane into four regimes (Fig.~\ref{fig:regime-diagram}, Table~\ref{tab:regime-summary}), in each of which the score error is controlled by the calibrated second-order statistics alone. The detailed per-regime derivations, including all asymptotic expansions, are provided in Appendix~\ref{app:regime-details}.

\paragraph{Regime~I ($\nu \gg 1$): noise-dominated.} This arises at early diffusion times or for high-frequency modes where the spectral amplitude is smaller than the diffusion noise, i.e., $P_{\mathbf{u}}(k) \ll \sigma^2_\tau$.
Both $\mathbb{E}[|s^{\mathrm{approx}}|^2]$ and $\mathbb{E}[|\epsilon|^2]$ are $O(\nu^{-2})$ and vanish: the reverse diffusion is driven entirely by the
unconditional prior. 

\paragraph{Regime~II ($\nu \ll 1$, $\zeta \ll 1$): active guidance, spectral shape preserved.} This is the operating regime where the NO guidance is the strongest. The absolute score-error bound is finite, but the relative bound $O(\gamma/\nu)$ is not controlled by the analysis alone. The triangle inequality applied to~\eqref{eq:score-error-main} discards the partial cancellation between $\delta_{\mathrm{post}}$ and $\delta_{\mathrm{prior}}$
that arises when the surrogate observation adds little information about the clean Fourier coefficient beyond what the noisy diffusion state already provides. The spectral weighting filter $\tilde{\lambda}_\tau(k)$ that sets how strongly the guidance acts at each frequency is still distribution-agnostic in shape: its
variation across wavenumbers is determined by the calibrated quantities $H(k)$, $\sigma^2_{\mathrm{NO}}(k)$, and $P_{\mathbf{u}}(k)$, and non-Gaussianity can only rescale how much guidance each mode receives, not shift guidance from one frequency to another. Any such rescaling is absorbed into the empirical hyperparameter $\lambda_{\mathrm{NO}}$.
This prediction is verified empirically in Appendix~\ref{app:sensitivity}: pointwise accuracy is essentially flat over a $20\times$ range of $\lambda_{\mathrm{NO}}$, consistent with a scalar-only rescaling of the per-mode score magnitudes.

\paragraph{Regime~III ($\nu \ll 1$, $\zeta \gg 1$): NO very precise.}
This requires $\gamma \ll \nu \ll 1$: the surrogate is extremely accurate at this mode. The posterior gap is bounded by $\delta_{\mathrm{post}} \leq \gamma P_{\mathbf{u}}$, independent of $\sigma_\tau$, and the relative score error is bounded by a constant ($\leq 2\gamma/\nu + 2$). One cannot enter Regime~III without simultaneously providing the bound that controls the error.

\paragraph{Regime~IV ($\nu \sim 1$): crossover.} All quantities are finite and the distribution-agnostic bounds yield computable constants depending on $\gamma$ and $|H|^2$. No singularity occurs at the transition boundary.

Remarkably, the Gaussian moment-matching approximation is benign in all four regimes: where the guidance matters (Regimes~II and~III), the spectral profile of $\tilde{\lambda}_\tau(k)$ is preserved regardless of distributional assumptions, and where the approximation error is least controlled in relative terms (Regime~II), the residual is absorbed by a single scalar
hyperparameter.

\begin{table}[t]
\centering
\small
\renewcommand{\arraystretch}{1.4}
\begin{tabular}{@{}lcccc@{}}
\toprule
\textbf{Regime} &
\textbf{Condition} &
$\mathbb{E}[|\epsilon|^2]$ &
$\dfrac{\mathbb{E}[|\epsilon|^2]}
       {\mathbb{E}[|s^{\mathrm{approx}}|^2]}$ &
\textbf{Mechanism} \\
\midrule
I   & $\nu \gg 1$
    & $O(\nu^{-2})$
    & $O(1)$
    & Score \& error vanish \\[3pt]
II  & $\nu \ll 1$, $\zeta \ll 1$
    & $O(1/(P\nu))$
    & $O(\gamma/\nu)$
    & Profile preserved \\[3pt]
III & $\nu \ll 1$, $\zeta \gg 1$
    & $O(1/(P\nu))$
    & $O(1)$
    & NO precision \\[3pt]
IV  & $\nu \sim 1$
    & $O(1/P)$
    & $O(1)$
    & Transient \\
\bottomrule
\end{tabular}
\caption{Scaling of the score error $\epsilon$ across the four identified regimes. All bounds are distribution-free, requiring only the calibrated second-order statistics.}
\label{tab:regime-summary}

\end{table}

\begin{figure}[t]
  \centering
  \includegraphics[width=0.8\textwidth]{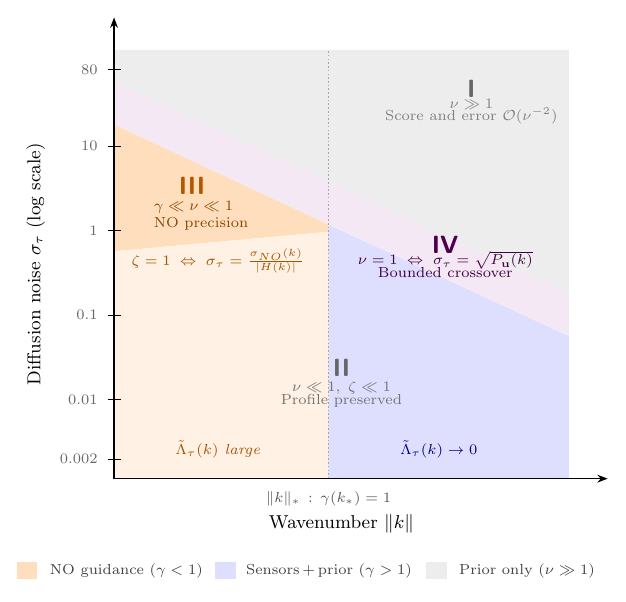}
    \caption{\textbf{Regime decomposition of the wavenumber--diffusion-noise plane.} The Gaussian moment-matching approximation is partitioned into four regimes by three boundaries derived from the calibrated spectral quantities: $\nu = 1 \Leftrightarrow \sigma_\tau = \sqrt{P_{\mathbf{u}}(k)}$, $\zeta = 1 \Leftrightarrow \sigma_\tau = \sigma_{\mathrm{NO}}(k)/|H(k)|$, and $\gamma(k_\ast) = 1$, marking the wavenumber above which the surrogate residual dominates the signal and the spectral weight $\tilde{\lambda}_\tau(k)$ tends to 0. Scaling of the score errors across regimes is summarized in Table~\ref{tab:regime-summary}; see Sec.~\ref{sec:regime-analysis} for the per-regime analysis.}
  \label{fig:regime-diagram}

\end{figure}

\subsection{Limitations and future work}
\label{sec:limitations}

\paragraph{Out-of-distribution calibration.}
The spectral quantities $H(k)$, $\sigma^2_{\mathrm{NO}}(k)$, and $P_{\mathbf{u}}(k)$ are estimated from a held-out split that shares the same geological and source distributions as the training data.
Out-of-distribution geological structures could shift the $\gamma(k) = 1$ crossover that separates guidance-active from guidance-suppressed
frequencies, degrading the spectral calibration precisely in the Regime~II--III transition where the guidance is most consequential.
The sensor term provides a partial safeguard by anchoring the reconstruction at observed locations, but it cannot compensate for systematic miscalibration of the spectral model in unobserved regions.
Monitoring the spectral residual statistics at test time and flagging samples whose empirical $\gamma(k)$ profile deviates from the calibration tables is a natural diagnostic that we leave to future work.

\paragraph{Scope of the spectral observation model.}
The spectral calibration requires the surrogate residual is approximately wide-sense stationary in the Fourier basis, characterized by mode-dependent variance $\sigma^2_{\mathrm{NO}}(k)$ and a real-valued transfer function $H(k)$. This structure is empirically well-justified for the FNO family surrogate used here (Appendix~\ref{app:spectral-empirical}), where the dominant error mechanism, spectral truncation, is a shift-invariant global filter by construction. Surrogates with qualitatively different error structure (e.g., transformer-based operators with localized attention errors, or graph-based operators on irregular meshes) may require a different observation model. The same LMMSE framework applies whenever an approximately diagonal residual
covariance can be estimated, but verifying this assumption for a new surrogate is a prerequisite to using the calibrated guidance.

\paragraph{Sampling cost.}
Iterative diffusion sampling is substantially slower than a single surrogate forward pass: generating one posterior sample requires $64$ sequential denoiser evaluations, each involving a forward and backward pass through the network. 
The spectral NO score itself is negligible in cost (two FFTs per step), so the bottleneck is entirely the denoiser and the sensor-term VJP. 
For applications requiring real-time predictions, consistency distillation or few-step amortized samplers that reduce the number of denoiser calls are promising directions; the closed-form spectral score would transfer directly to any such accelerated sampler since it does not depend on the denoiser architecture.

\paragraph{Synthetic sensor data.}
Most significantly for practical deployment, the current method uses sensor observations generated from SEM simulations rather than physical instruments. 
In a real-world scenario, sensor data would introduce distribution shifts between the synthetic training distribution and the actual measurements that are not captured by the isotropic noise model
$\boldsymbol{\varepsilon} \sim \mathcal{N}(\mathbf{0},
\sigma_y^2 I)$.
The present work establishes the methodological foundation under controlled conditions; extending the framework to real sensor data, by incorporating structured observation noise models or domain adaptation from synthetic to real distributions, is the most important next step toward operational deployment.

\section{Conclusion}
\label{sec:conclusion}
We introduced a spectrally shaped likelihood for integrating neural operator predictions into diffusion posterior sampling, derived from LMMSE marginalization over the clean Fourier coefficients.
The resulting guidance score admits a closed form expression that accounts for the accuracy of the surrogate, requires only two FFTs per reverse diffusion step, and involves no backpropagation through the denoiser.
An exact score identity (Proposition~\ref{prop:exact-score-main})
reveals that the NO likelihood score measures the shift in the optimal clean state estimate upon conditioning on the surrogate observation, and a distribution-free regime analysis shows that frequency dependence of the guidance is preserved regardless of distributional assumptions on the clean signal or the surrogate residual.
On three-dimensional elastic wavefield reconstruction, the method achieves near-zero spectral bias at both $5\%$ and $2\%$ sensor coverage, where isotropic surrogate guidance reimports the surrogate's spectral bias nearly intact which confirms that frequency-dependent calibration is essential, not merely beneficial.
The conditions under which the method applies are precisely
characterized: any surrogate whose residual covariance is approximately diagonal in the Fourier basis admits the calibrated
guidance score, and the empirical coherence diagnostic of
Appendix~\ref{app:wss-verification} provides a prerequisite check
for new surrogates. The same approach therefore applies in principle whenever a fast surrogate exhibits structured spectral error that can be calibrated against high-fidelity reference data, a setting common to FNO-family surrogates across wave propagation, fluid dynamics, and other PDE applications.

\section*{Acknowledgments}
This research was supported by the NVIDIA Academic Grant Program using A100 GPU-Hours (project NEUROELASTOSIM). This work was granted access to the HPC resources of IDRIS under the allocations 2026-AD011017607, 2026-AD011017530R1, 2025-AD011015929, made by GENCI. This work was performed using computational resources from the ``Mésocentre'' computing center of Université Paris-Saclay, CentraleSupélec and École Normale Supérieure Paris-Saclay supported by CNRS and Région Île-de-France (\url{https://mesocentre.universite-paris-saclay.fr/}). Contributions of Fanny Lehmann were primarily supported by the ETH AI Center through their postdoctoral fellowship.

{

\bibliographystyle{plainnat}
\bibliography{references}


\appendix

\section{Score factorization at the noisy state}
\label{app:score-factorization}
This appendix provides the full derivation of the approximate posterior score decomposition~\eqref{eq:posterior-score-three-terms} used in the main text.
We first establish the conditional independence of the observation channels at the clean wavefield (App.~\ref{app:cond-indep}), then derive the exact joint score at the noisy state (App.~\ref{app:ex-joint-score}), and finally state the modeling choice~\eqref{eq:noisy-factor} that yields the three-term decomposition (App.~\ref{app:modeling-choice}).

\subsection{Conditional independence of the observation channels}
\label{app:cond-indep}

We prove the factorization~\eqref{eq:cond-indep} used in the
main text.
Let $\boldsymbol{\xi} = (a, \mathbf{x}_s, \boldsymbol{\theta}_s)$
collect all geological and source parameters, and write the
generative model as
$\boldsymbol{\xi} \xrightarrow{\text{PDE}} \mathbf{u}
\xrightarrow{\mathcal{M}_\mathcal{S} + \varepsilon} \mathbf{y}$
and $\boldsymbol{\xi} \xrightarrow{G_\phi} \mathbf{u}_\mathrm{NO}$.
By the chain rule,
\begin{equation}
  p(\mathbf{y}, \mathbf{u}_\mathrm{NO} \mid \mathbf{u})
  \;=\;
  \int
  p(\mathbf{y} \mid \mathbf{u}_\mathrm{NO}, \boldsymbol{\xi},
    \mathbf{u})\;
  p(\mathbf{u}_\mathrm{NO} \mid \boldsymbol{\xi}, \mathbf{u})\;
  p(\boldsymbol{\xi} \mid \mathbf{u})\;
  d\boldsymbol{\xi}.
\end{equation}
Two observations simplify this expression:
\begin{enumerate}
  \item $p(\mathbf{y} \mid \mathbf{u}_\mathrm{NO},
    \boldsymbol{\xi}, \mathbf{u}) = p(\mathbf{y} \mid \mathbf{u})$,
    because $\mathbf{y} = \mathcal{M}_\mathcal{S}(\mathbf{u}) +
    \boldsymbol{\varepsilon}$ and the instrument noise
    $\boldsymbol{\varepsilon}$ is independent of
    $(\boldsymbol{\xi}, \mathbf{u}_\mathrm{NO})$.
  \item $p(\mathbf{u}_\mathrm{NO} \mid \boldsymbol{\xi},
    \mathbf{u}) = p(\mathbf{u}_\mathrm{NO} \mid
    \boldsymbol{\xi})$, because $\mathbf{u}_\mathrm{NO} =
    G_\phi(\boldsymbol{\xi})$ is a deterministic function of
    $\boldsymbol{\xi}$ alone.
\end{enumerate}
Substituting and pulling the $\boldsymbol{\xi}$-independent
factor out of the integral:
\begin{equation}
  p(\mathbf{y}, \mathbf{u}_\mathrm{NO} \mid \mathbf{u})
  \;=\;
  p(\mathbf{y} \mid \mathbf{u})
  \int p(\mathbf{u}_\mathrm{NO} \mid \boldsymbol{\xi})\;
  p(\boldsymbol{\xi} \mid \mathbf{u})\;d\boldsymbol{\xi}
  \;=\;
  p(\mathbf{y} \mid \mathbf{u})\;
  p(\mathbf{u}_\mathrm{NO} \mid \mathbf{u}).
\end{equation}
Note that the inverse problem
$\mathbf{u} \mapsto \boldsymbol{\xi}$ may be ill-posed (many
parameter configurations can produce the same wavefield), so
$p(\mathbf{u}_\mathrm{NO} \mid \mathbf{u})$ is a non-trivial
distribution; the factorization holds nonetheless because the
instrument noise $\boldsymbol{\varepsilon}$ carries no information
about which $\boldsymbol{\xi}$ generated $\mathbf{u}$.

\subsection{Exact joint score at the noisy state}
\label{app:ex-joint-score}
We seek to sample from $p_\tau(\mathbf{u}_\tau \mid \mathbf{y}, \mathbf{\uNO})$.
By Bayes' rule,
\begin{equation}
  \label{eq:app-bayes-time}
  p_\tau(\mathbf{u}_\tau \mid \mathbf{y}, \mathbf{\uNO})
  \;\propto\;
  p_\tau(\mathbf{u}_\tau)\;
  p(\mathbf{y}, \mathbf{\uNO} \mid \mathbf{u}_\tau),
\end{equation}
so the posterior score decomposes as
\begin{equation}
  \label{eq:app-score-bayes}
  \nabla_{\mathbf{u}_\tau}\log p_\tau(\mathbf{u}_\tau \mid
    \mathbf{y}, \mathbf{\uNO})
  \;=\;
  \nabla_{\mathbf{u}_\tau}\log p_\tau(\mathbf{u}_\tau)
  \;+\;
  \nabla_{\mathbf{u}_\tau}\log p(\mathbf{y}, \mathbf{\uNO}
    \mid \mathbf{u}_\tau).
\end{equation}
The sparse observation model (Sec.~\ref{sec:obs-model}) specifies
$p(\mathbf{y} \mid \mathbf{u})$ for the clean wavefield only.
Since the measurement and NO likelihoods are defined for the clean
wavefield $\mathbf{u}$, and $(\mathbf{y},\mathbf{\uNO}) \perp \mathbf{u}_\tau \mid \mathbf{u}$ (the diffusion noise $\boldsymbol{\eta}_\tau$ is independent of both observation channels), the joint likelihood at the noisy state is obtained by marginalizing over $\mathbf{u}$:
\begin{equation}
  \label{eq:app-joint-marginal}
  p(\mathbf{y}, \mathbf{\uNO} \mid \mathbf{u}_\tau)
  \;=\;
  \int p(\mathbf{y}, \mathbf{\uNO} \mid \mathbf{u})\;
  p(\mathbf{u} \mid \mathbf{u}_\tau)\,d\mathbf{u}.
\end{equation}
Substituting the conditional independence assumption~\eqref{eq:cond-indep}:
\begin{equation}
  \label{eq:app-joint-factored}
  p(\mathbf{y}, \mathbf{\uNO} \mid \mathbf{u}_\tau)
  \;=\;
  \int p(\mathbf{y} \mid \mathbf{u})\;
  p_{\mathrm{NO}}(\mathbf{\uNO} \mid \mathbf{u})\;
  p(\mathbf{u} \mid \mathbf{u}_\tau)\,d\mathbf{u}.
\end{equation}

Although $\mathbf{y} \perp \mathbf{\uNO} \mid \mathbf{u}$, conditioning on the noisy state $\mathbf{u}_\tau$ leaves residual uncertainty about $\mathbf{u}$, through which $\mathbf{y}$ and $\mathbf{\uNO}$ remain coupled.
Consequently, the integral in~\eqref{eq:app-joint-factored} does not factorize into a product of marginals over the noisy state, and the exact joint likelihood score reads
\begin{equation}
  \label{eq:app-exact-joint-score}
  \nabla_{\mathbf{u}_\tau}\log p(\mathbf{y}, \mathbf{\uNO}
    \mid \mathbf{u}_\tau)
  \;=\;
  \nabla_{\mathbf{u}_\tau}\log p(\mathbf{y} \mid \mathbf{u}_\tau)
  \;+\;
  \nabla_{\mathbf{u}_\tau}\log p(\mathbf{\uNO} \mid \mathbf{y},\,
    \mathbf{u}_\tau),
\end{equation}
where the second term conditions on $\mathbf{y}$.

\subsection{Modeling choice}
\label{app:modeling-choice}
The exact joint score~\eqref{eq:app-exact-joint-score} requires evaluating $\nabla_{\mathbf{u}_\tau}\log p(\mathbf{\uNO} \mid \mathbf{y}, \mathbf{u}_\tau)$, the surrogate likelihood score conditioned on the sensor observations.
This quantity has no closed form: it depends on the data prior through the noisy-state posterior $p(\mathbf{u} \mid \mathbf{u}_\tau, \mathbf{y})$, which is precisely what diffusion posterior sampling is approximating in the first place.
We therefore adopt the structural modeling choice
\begin{equation}
  \label{eq:app-modelling-choice}
  p(\mathbf{\uNO} \mid \mathbf{y}, \mathbf{u}_\tau)
  \;\approx\;
  p(\mathbf{\uNO} \mid \mathbf{u}_\tau),
\end{equation}
which, combined with~\eqref{eq:app-exact-joint-score} and~\eqref{eq:app-score-bayes}, yields the three-term decomposition~\eqref{eq:posterior-score-three-terms} of the main text.

The choice is consistent with the approximation that DPS itself already makes. Standard DPS replaces the noisy-state posterior with a point mass at the Tweedie estimate, $p(\mathbf{u} \mid \mathbf{u}_\tau) \approx \delta(\mathbf{u} - \bar{\mathbf{u}}_0(\mathbf{u}_\tau))$. Under this collapse, both likelihoods depend on $\mathbf{u}_\tau$ only through the deterministic point estimate $\bar{\mathbf{u}}_0(\mathbf{u}_\tau, \tau)$, and~\eqref{eq:app-modelling-choice} holds exactly: conditioning on $\mathbf{y}$ provides no additional information about $\mathbf{\uNO}$ beyond what $\bar{\mathbf{u}}_0$ already determines. Our use of an LMMSE refinement in the NO channel (Sec.~\ref{sec:no-guided-dps}) sharpens the per-mode posterior variance but acts within the NO channel alone, introducing no coupling to the sensor channel at the noisy state. The factorization~\eqref{eq:app-modelling-choice} is thus the same mild modeling assumption that DPS already relies on, applied independently to each observation channel.

\section{Spectral observation model: notation and estimation}
\label{app:spectral-validation}

This appendix provides the full notation for the spectral observation model introduced in Sec.~\ref{sec:no-guided-dps}, the nonparametric estimators used to calibrate the mode-dependent quantities. 

\subsection{Fourier-domain notation}
\label{app:dft-notation}

Let $\mathcal{F}$ denote the unitary discrete Fourier transform applied jointly over the spatial and temporal axes $(N_x, N_y, T)$, independently per channel:
\begin{equation}
  \label{eq:app-dft-def}
  \mathcal{F}:\; \R^{C \times N_x \times N_y \times T}
  \;\to\; \mathbb{C}^{C \times N_x \times N_y \times T},
\end{equation}
with unitarity ensuring $\mathcal{F}^\dagger = \mathcal{F}^{-1}$ and $\|\mathcal{F}(\mathbf{u})\| = \|\mathbf{u}\|$.
We index frequency modes by the multi-index $k = (k_x, k_y, k_t) \in \K \coloneqq \mathbb{Z}_{N_x} \times \mathbb{Z}_{N_y} \times \mathbb{Z}_{T}$.
Throughout, we use $\hat{\cdot}$ to denote Fourier-domain quantities:
$\hat{\mathbf{v}} \coloneqq \mathcal{F}(\mathbf{v})$ and
$\hat{v}(k) \coloneqq \mathcal{F}(\mathbf{v})(k)$ for the coefficient at mode $k$.

Since the spectral observation model and all subsequent derivations operate independently at each mode $k$, the analysis is identical to the one-dimensional case; no structure specific to the three-dimensional grid is required beyond the definition of $\K$.

\subsection{Per-mode observation model}
\label{app:per-mode-model}

We model the spectral relationship between the neural operator
prediction and the ground truth via a channel- and mode-dependent
transfer function $H: \{1,\dots,C\} \times \K \to \mathbb{R}$,
acting element-wise in Fourier space.
For clarity, we present the model for a single channel; the extension to $C$ channels is immediate since all quantities factorize independently across channels.
At each frequency mode $k$, the NO prediction is modeled as
\begin{equation}
  \label{eq:app-per-mode}
  \mathcal{F}(\mathbf{\uNO})(k)
  \;=\;
  H(k)\,\mathcal{F}(\mathbf{u})(k)
  \;+\;
  \hat{\eta}_{\mathrm{NO}}(k),
\end{equation}
where $H(k)$ absorbs all \emph{systematic} errors of the neural
operator at mode $k$ so that the residual $\hat{\eta}_{\mathrm{NO}}(k)$ captures only the remaining stochastic fluctuations, modeled as zero-mean with variance
$\sigma^2_{\mathrm{NO}}(k)$; no assumption is made on the
distributional form of $\hat{\eta}_{\mathrm{NO}}$.

\subsection{Wide-sense stationarity assumption}
\label{app:wss}

We assume the spatio-temporal residual error field
$\mathbf{\eta}_{NO} \coloneqq \mathbf{\uNO} - \mathcal{F}^{-1}(H \odot \mathcal{F}(\mathbf{u}))$ is wide-sense stationary (WSS) in $(x,y,t)$ over the training ensemble of geologies and source locations.
While an individual elastodynamic wavefield is transient and spatially localized, averaging over uniformly distributed source locations and varied heterogeneous geologies yields an ensemble error field that is approximately translation-invariant.
This is further supported by the FNO architecture, whose dominant
error mechanism (spectral truncation at high wavenumbers) is a
shift-invariant global filter by construction.

By the Wiener--Khinchin theorem, WSS implies that distinct Fourier modes of the residual error are uncorrelated:
\begin{equation}
  \label{eq:app-wss}
  \mathbb{E}\!\left[\hat{\eta}_{NO}(k)\,\overline{\hat{\eta}_{NO}(k')}\right]
  \;=\;
  \sigma^2_{\mathrm{NO}}(k)\,\delta_{kk'},
\end{equation}
where each channel is allowed its own variance profile, while
cross-channel residuals remain uncorrelated.
The covariance $\Sigma_{\mathrm{NO}}$ is therefore diagonal in the Fourier basis, with entries $\sigma^2_{\mathrm{NO}}(k) \in \R_{>0}$.

The residual is therefore characterized by
\begin{equation}
  \mathbb{E}[\hat{\eta}_{\mathrm{NO}}(k)] = 0,
  \qquad
  \mathbb{E}\!\left[|\hat{\eta}_{\mathrm{NO}}(k)|^2\right]
  = \sigma^2_{\mathrm{NO}}(k),
\end{equation}
with no further distributional assumption.
The conditional first two moments of the NO prediction at each
mode are
\begin{equation}
  \mathbb{E}\!\left[
    \mathcal{F}(\mathbf{\uNO})(k) \;\middle|\;
    \mathcal{F}(\mathbf{u})(k)
  \right]
  = H(k)\,\mathcal{F}(\mathbf{u})(k),
  \qquad
  \mathrm{Var}\!\left[
    \mathcal{F}(\mathbf{\uNO})(k) \;\middle|\;
    \mathcal{F}(\mathbf{u})(k)
  \right]
  = \sigma^2_{\mathrm{NO}}(k).
\end{equation}
The joint distribution over all modes and channels factorizes as a product of scalar complex Gaussians.

\subsection{Estimation of spectral quantities}
\label{app:spectral-estimation}

The spectral model introduces three mode-dependent quantities that must be estimated before inference.
We compute all of them from $N$ paired ground-truth/MIFNO samples on a held-out calibration split and freeze them as lookup tables over
$\{1,\dots,C\} \times \K$.

\paragraph{Signal power spectrum.}
The ensemble-averaged signal power at each channel and mode is
\begin{equation}
  \label{eq:app-Pu}
  P_{\mathbf{u}}(k)
  \;=\;
  \frac{1}{N}\sum_{n=1}^{N}
  \left|\mathcal{F}(\mathbf{u}^{(n)})(k)\right|^2.
\end{equation}

\paragraph{Transfer function.}
The transfer function $H(k)$ is identified as the minimum
mean-square-error linear predictor of $\mathcal{F}(\mathbf{\uNO})(k)$ from $\mathcal{F}(\mathbf{u})(k)$, giving the cross-spectral estimator
\begin{equation}
  \label{eq:app-H}
  \tilde{H}(k)
  \;=\;
  \frac{
    \dfrac{1}{N}\displaystyle\sum_{n=1}^{N}
    \mathcal{F}(\mathbf{\uNO}^{(n)})(k)\;
    \overline{\mathcal{F}(\mathbf{u}^{(n)})(k)}
  }{
    P_{\mathbf{u}}(k)
  } \;\in\; \mathbb{C}.
\end{equation}
We empirically observe $|\arg \tilde{H}(k)| \ll 1$ across the full spectrum (Fig.~\ref{fig:transfer-function}), consistent with the shift-invariant low-pass nature of FNO truncation errors, and accordingly take $H(k) \coloneqq \mathrm{Re}\,\tilde{H}(k)$ in all 
subsequent expressions.

\paragraph{Residual variance.}
The residual variance at each channel and mode is estimated as
\begin{equation}
  \label{eq:app-sigma-no}
  \sigma^2_{\mathrm{NO}}(k)
  \;=\;
  \frac{1}{N}\sum_{n=1}^{N}
  \left|
    \mathcal{F}(\mathbf{\uNO}^{(n)})(k)
    - H(k)\,\mathcal{F}(\mathbf{u}^{(n)})(k)
  \right|^2.
\end{equation}

\paragraph{Relative error power ratio.}
For reference, we define the dimensionless noise-to-signal ratio
\begin{equation}
  \label{eq:app-gamma}
  \gamma(k)
  \;\coloneqq\;
  \frac{\sigma^2_{\mathrm{NO}}(k)}{|H(k)|^2\,P_{\mathbf{u}}(k)},
\end{equation}
which captures the relative accuracy of the NO at each frequency,
independent of the absolute signal amplitude.
This quantity governs the regime analysis of the Gaussian marginal approximation (Appendix~\ref{app:gaussian-justification}).

\subsection{Empirical validation}
\label{app:spectral-empirical}

Figure~\ref{fig:transfer-function} shows the transfer function
$\tilde{H}(k)$ estimated from $N=2{,}000$ paired ground-truth/MIFNO
samples on the held-out calibration split.
The magnitude $|\tilde{H}(k)|$ (left) confirms the expected
spectral-bias profile: $|H| \approx 0.9$ at low wavenumbers and
decreasing monotonically toward zero at high $\|k\|$, with all
three components behaving similarly.
The phase $\arg \tilde{H}(k)$ (right) remains bounded within
$\pm 0.5$\,rad across the spectrum where $|H(k)|$ is non-negligible,
justifying the real-valued reduction $H = \mathrm{Re}\,\tilde{H}$
used in all subsequent expressions.

\begin{figure}[h]
  \centering
  \includegraphics[width=0.8\textwidth]{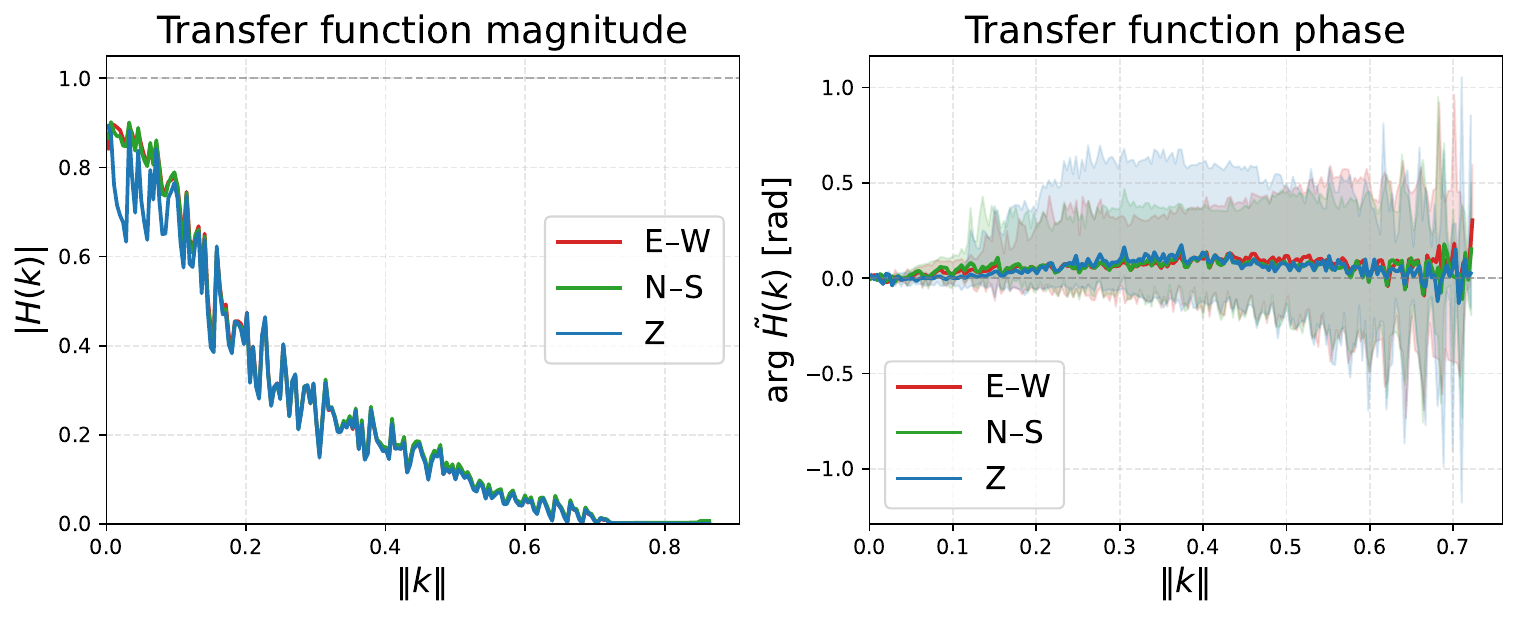}
  \caption{Transfer function $\tilde{H}(k)$ estimated from     the calibration split, radially binned over $\|k\|$.
    Left: magnitude $|\tilde{H}(k)|$ decreases from $\approx 0.9$ at low $\|k\|$ to near zero at high $\|k\|$, confirming the expected spectral-bias profile.
    Right: phase $\arg \tilde{H}(k)$ (median and 5th--95th
    percentile band), shown only for modes with $|H(k)| > 0.01$.
    The phase is bounded within $\pm 0.5$\,rad, justifying the real-valued reduction $H = \mathrm{Re}\,\tilde{H}$.}
  \label{fig:transfer-function}
\end{figure}

Figure~\ref{fig:spectral-error} shows the NO error landscape that motivates the spectral shaping.
The residual variance $\sigma^2_{\mathrm{NO}}(k)$ and signal power $P_{\mathbf{u}}(k)$ both decay with $\|k\|$, but at different rates: the signal power spans approximately seven orders of magnitude, while the residual variance spans eight.
Their ratio, the relative NO error $\gamma(k) = \sigma^2_{\mathrm{NO}}(k) / (|H(k)|^2 P_{\mathbf{u}}(k))$,
crosses unity near $\|k\| \approx 0.2$; below this threshold the NO prediction is more accurate than the prior ($\gamma < 1$), while above it the NO residual dominates the signal ($\gamma \gg 1$).
This crossover determines the spectral boundary beyond which NO guidance should be suppressed.

\begin{figure}[h]
  \centering
  \includegraphics[width=\textwidth]{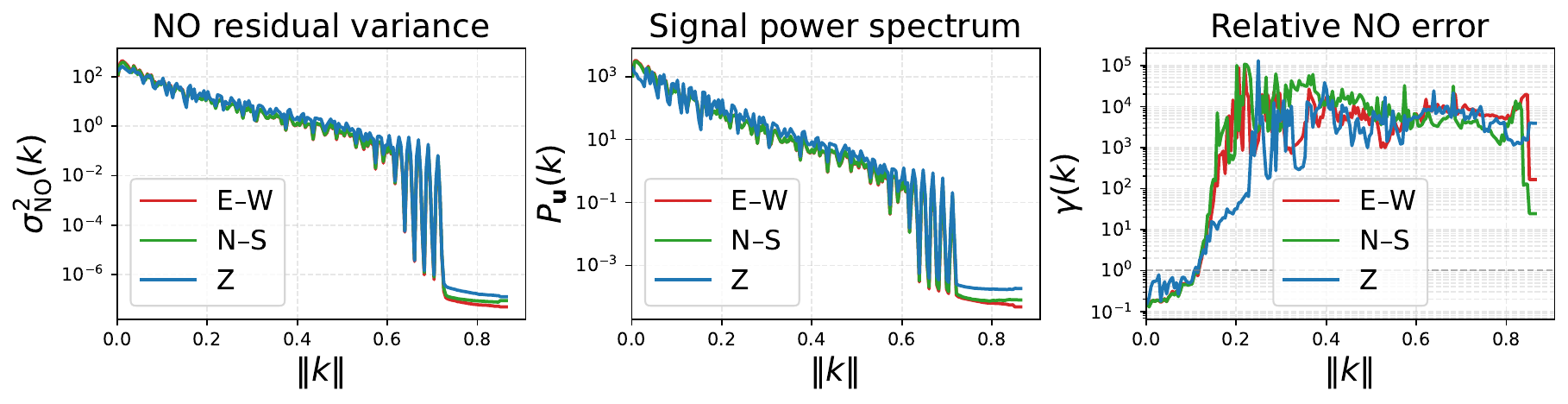}
  \caption{NO error profile estimated from the calibration     split ($N = 2{,}000$ paired samples), radially binned      over $\|k\|$.
    From left to right: NO residual variance $\sigma^2_{\mathrm{NO}}(k)$, signal power spectrum
    $P_{\mathbf{u}}(k)$, and relative NO error $\gamma(k) = \sigma^2_{\mathrm{NO}}(k) / (|H(k)|^2 P_{\mathbf{u}}(k))$. The crossover $\gamma = 1$ (dashed grey) occurs near $\|k\| \approx 0.2$; below it the NO is more accurate than the prior, above it the residual dominates.}
  \label{fig:spectral-error}
\end{figure}

The expected NO guidance magnitudes shown in
Fig.~\ref{fig:expected-guidance} are derived as follows.
For the spectrally shaped score, the expected squared
magnitude at each mode is
\begin{equation}
  \mathbb{E}\!\left[|\tilde{\lambda}_\tau(k)
  \cdot r(k)|^2\right]
  \;=\;
  \frac{4\,|H(k)|^2\,\alpha(k)^2}{\lambda_\tau(k)},
\end{equation}
where $\lambda_\tau(k) = \sigma^2_{\mathrm{NO}}(k)
+ |H(k)|^2 \sigma^2_\tau \alpha(k)$.
For the isotropic baseline ($H(k)=1$,
$\sigma^2_{\mathrm{NO}}(k) = \sigma^2_{\mathrm{NO,iso}}$,
$\alpha(k) = 1$), this reduces to
$4/(\sigma^2_{\mathrm{NO,iso}} + \sigma^2_\tau)$.
The upturn of the spectrally shaped curves at high $\|k\|$
for $\sigma_\tau \leq 0.01$ is not consequential: the ODE
drift coefficient $c(\tau) = s^2 \sigma^2 (\dot{\sigma}/\sigma)$
vanishes as $\sigma_\tau \to 0$, so the NO guidance
contribution to the reverse trajectory is negligible at the
final steps regardless of the score magnitude.

\subsection{Empirical assessment of the diagonal-covariance approximation}
\label{app:wss-verification}

The spectral observation model treats the residual covariance as diagonal in the Fourier basis (Sec.~\ref{app:wss}). Strict WSS would imply exact diagonality; we expect only an approximate version to 
hold on a finite, transient wavefield, and we quantify the departure empirically.
We verify this empirically by estimating the off-diagonal
cross-spectral coherence from the same $N=2{,}000$-sample
calibration split used for the spectral model.

\paragraph{Test statistic.}
For a pair of distinct modes $(k, k')$ with $k \neq k'$, the sample cross-spectrum is
\begin{equation}
  \label{eq:app-cross-spec}
  \hat{C}(k, k')
  \;\coloneqq\;
  \frac{1}{N}\sum_{n=1}^{N}
  \hat{\eta}_{\mathrm{NO}}^{(n)}(k)\;
  \overline{\hat{\eta}_{\mathrm{NO}}^{(n)}(k')},
\end{equation}
where $\hat{\eta}_{\mathrm{NO}}^{(n)}(k)
= \mathcal{F}(\mathbf{u}_{\mathrm{NO}}^{(n)})(k)
- H(k)\,\mathcal{F}(\mathbf{u}^{(n)})(k)$ is the Fourier-domain residual at mode $k$ for sample $n$.
We normalize this into a dimensionless coherence
\begin{equation}
  \label{eq:app-coherence}
  \mathrm{coh}(k, k')
  \;\coloneqq\;
  \frac{|\hat{C}(k, k')|}
       {\sqrt{\sigma^2_{\mathrm{NO}}(k)\;
              \sigma^2_{\mathrm{NO}}(k')}},
\end{equation}
which lies in $[0, 1]$: zero indicates uncorrelated modes (WSS satisfied) and unity indicates perfect correlation (WSS violated).
Under the null hypothesis of exact WSS, the expected coherence from finite-sample noise is $\mathbb{E}[\mathrm{coh}] \approx \sqrt{\pi/(4N)} \approx 0.02$ for $N = 2{,}000$.
\paragraph{Sampling procedure.}
The full cross-spectral matrix has $\sim\!1.6 \times 10^5$ modes per channel, making exhaustive computation infeasible.
We instead draw $P = 400{,}000$ random pairs of distinct mode indices from the rFFT grid
$(N_x \times N_y \times N_f) = (32 \times 32 \times 161)$ and compute~\eqref{eq:app-cross-spec} via a streaming pass over the calibration set.
For each pair, we record the mode separation $\|k - k'\|$ in normalised frequency units.
To assess whether any departures concentrate in specific spectral regions, we additionally stratify the pairs by the wavenumber magnitude of both modes:
\emph{high--high} (both $\|k\| \geq 0.15$) and
\emph{mixed} (one below, one above the threshold), where
$\|k\| = 0.15$ corresponds approximately to the crossover
$\gamma(k) \approx 1$ (Fig.~\ref{fig:spectral-error}, right).

\paragraph{Results.}
Figure~\ref{fig:wss-coherence} (left) shows the mean and 95th
percentile of the coherence as a function of mode separation, pooled over all pairs.
For separations $\|k - k'\| > 0.2$, the mean coherence stabilizes at $\approx\!0.06$--$0.10$, roughly $3$--$4\times$ the finite-sample noise floor.
A narrow spike of elevated coherence appears at
$\|k - k'\| < 0.1$, indicating that immediately neighboring
Fourier modes are moderately correlated; this is consistent with spectral leakage on the finite discrete grid and/or residual non-stationarity from the transient, finite-support nature of individual wavefields.

Figure~\ref{fig:wss-coherence} (right) isolates the mixed stratum (one low-frequency mode, one high-frequency mode).
The mean coherence drops to $\approx\!0.03$--$0.05$, close to the noise floor.
This confirms that cross-regime coupling, between the frequencies where NO guidance is active ($\|k\|$ small, $\gamma < 1$) and those where it is suppressed ($\|k\|$ large, $\gamma \gg 1$), is negligible.

\paragraph{Implications for the spectral guidance.}
The residual exhibits moderate near-diagonal correlation (coherence $\sim\!0.06$–$0.10$ at small mode separations, decaying to the permutation-test noise floor at separations $\|k - k'\| > 0.2$). 
This means the residual covariance is approximately, but not exactly, diagonal: WSS holds in a weak sense, with a correlation length $\ell_{\mathrm{c}} \lesssim 0.1$ in normalized frequency.

This residual structure does not compromise the per-mode 
factorization used in the guidance score, for two reasons.
\emph{First,} the spectral weight $\tilde\lambda_\tau(k)$ varies smoothly over $\|k\|$ on scales much larger than the correlation length of the residual ($\lesssim 0.1$ in normalized frequency).
When the weight is approximately constant over the correlation bandwidth, treating correlated neighbors as independent produces the same integrated guidance as a properly banded covariance treatment: the per-mode errors are absorbed in aggregate, and the spectral
\emph{profile} of the guidance, the property that distinguishes our method from isotropic alternatives, is preserved.
\emph{Second,} cross-regime coupling between guidance-active modes ($\gamma < 1$) and guidance-suppressed modes ($\gamma \gg 1$) is negligible (Fig.~\ref{fig:wss-coherence}, right), so the residual diagonal structure is preserved precisely in the spectral region where the LMMSE shaping is most consequential.

\begin{figure}[h!]
  \centering
  \includegraphics[width=\textwidth]{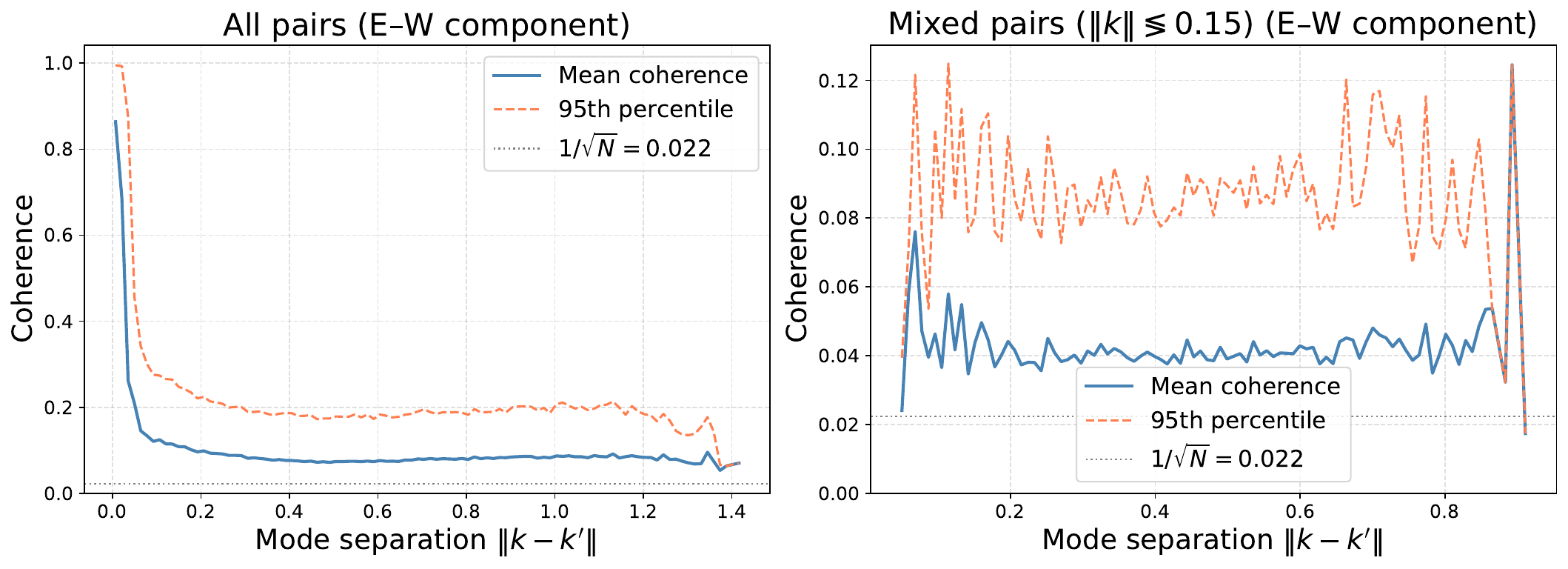}
  \caption{Off-diagonal cross-spectral coherence of      the NO residual (E--W component), estimated from     $400{,}000$ random mode pairs on the calibration     split ($N = 2{,}000$).
    Left: all pairs, binned by mode separation $\|k - k'\|$.
    The mean coherence (solid) settles at $3$-$4\times$ the finite-sample noise floor $1/\sqrt{N} \approx 0.022$ (dotted) beyond $\|k - k'\| > 0.2$; the near-diagonal spike is consistent with spectral leakage on the finite grid.
    Right: mixed pairs (one mode with $\|k\| < 0.15$, one with $\|k\| \geq 0.15$).
    Cross-regime coherence is close to the noise floor, confirming that the per-mode factorization is well-justified in the frequency range where the NO guidance is active.}
  \label{fig:wss-coherence}
\end{figure}

\section{LMMSE derivation and properties}
\label{app:lmmse}

This appendix provides the full derivation of the spectrally shaped diffusion posterior used in Sec.~\ref{sec:no-guided-dps}, along with its optimality properties, connection to the neural denoiser, and equivalence with the Bayesian posterior under stronger hypotheses.
\subsection{Per-mode LMMSE estimator}
\label{app:lmmse-derivation}

The forward diffusion gives, at each mode $k$,
\begin{equation}
  \label{eq:app-fwd-fourier}
  Y(k)
  \;\coloneqq\;
  \mathcal{F}(\mathbf{u}_\tau)(k)
  \;=\;
  X(k) + \hat{\eta}_\tau(k),
  \qquad
  \hat{\eta}_\tau(k) \sim \mathcal{CN}(0,\;\sigma^2_\tau),
\end{equation}
where $X(k) \coloneqq \mathcal{F}(\mathbf{u})(k)$ and unitarity of $\mathcal{F}$ preserves the noise variance.
We seek the best linear estimator of $X$ from $Y$.
The only required properties of $X$:
\begin{enumerate}
    \item $\mathbb{E}[X(k)] = 0$, which follows from z-score normalization of the training data ($\mathbb{E}_n[\mathbf{u}^{(n)}] = \mathbf{0}$, hence $\mathbb{E}_n[\mathcal{F}(\mathbf{u}^{(n)})(k)] = 0$ by linearity of $\mathcal{F}$);
    \item $\mathbb{E}[|X(k)|^2] = P_{\mathbf{u}}(k)$, the signal power spectrum estimated from training data via~\eqref{eq:app-Pu}.
\end{enumerate}
No assumption is made on the distributional form of $X$.

The \textbf{linear minimum mean-squared-error (LMMSE)} estimator of $X$ given $Y = X + \hat{\eta}_\tau$ is the solution to
\begin{equation}
  \label{eq:app-lmmse-def}
  \hat{X}_L
  \;\coloneqq\;
  \arg\min_{\hat{X} = aY + b}\;
  \mathbb{E}\!\left[|X - \hat{X}|^2\right].
\end{equation}
By the orthogonality principle, the optimal coefficients satisfy
$\mathbb{E}[(X - \hat{X}_L)\,Y^*] = 0$ and $\mathbb{E}[X - \hat{X}_L] = 0$.
Computing the required second-order statistics, $\mathrm{Var}[Y]
= P_{\mathbf{u}} + \sigma^2_\tau$ and $\mathrm{Cov}[X, Y] = P_{\mathbf{u}}$ (by independence of $X$ and $\hat{\eta}_\tau$), yields:
\begin{equation}
  \label{eq:app-alpha}
  \hat{X}_L(k) = \alpha(k)\,Y(k),
  \qquad
  \alpha(k)
  \;\coloneqq\;
  \frac{P_{\mathbf{u}}(k)}{\sigma^2_\tau + P_{\mathbf{u}}(k)}.
\end{equation}
The coefficient $\alpha(k)$ contracts the noisy observation toward zero (the prior mean) with strength determined by the noise-to-signal ratio $\sigma^2_\tau / P_{\mathbf{u}}(k)$.

The associated LMMSE error is:
\begin{equation}
  \label{eq:app-SigmaL}
  \sigma_L^2(k)
  \;\coloneqq\;
  \mathbb{E}\!\left[|X - \hat{X}_L|^2\right]
  \;=\;
  \frac{\sigma^2_\tau\,P_{\mathbf{u}}(k)}
       {\sigma^2_\tau + P_{\mathbf{u}}(k)}
  \;=\;
  \sigma^2_\tau\,\alpha(k).
\end{equation}

\subsection{Limiting behaviour}
\label{app:lmmse-limits}

At low frequencies where $P_{\mathbf{u}}(k) \gg \sigma^2_\tau$:
$\alpha(k) \to 1$ and $\sigma_L^2(k) \to \sigma^2_\tau$, recovering the standard DPS isotropic posterior.

At high frequencies where $P_{\mathbf{u}}(k) \ll \sigma^2_\tau$:
$\alpha(k) \to P_{\mathbf{u}}(k)/\sigma^2_\tau \to 0$ and
$\sigma_L^2(k) \to P_{\mathbf{u}}(k)$.

The estimation error is bounded by the signal power at each mode:
\begin{equation}
  \label{eq:app-sigma-bound}
  \sigma_L^2(k)
  \;\leq\;
  \min\!\big(P_{\mathbf{u}}(k),\;\sigma^2_\tau\big),
\end{equation}
reflecting the elementary fact that one cannot be wrong about signal content that was never present.
This bound holds for any distribution of $X$ with the specified mean and variance; it does not require Gaussianity.

\section{Marginal likelihood derivation and Gaussian approximation}
\label{app:gaussian-justification}

This appendix derives the moment-matched Gaussian marginal likelihood~\eqref{eq:marginal-likelihood} used in the main text,
establishes an exact identity for the NO likelihood score (Proposition~\ref{prop:exact-score-main}), decomposes the score error introduced by the Gaussian approximation, and characterizes the resulting score error across the 
frequency--diffusion-time plane via a distribution-free regime analysis.

\subsection{Closed-form marginal moments}
\label{app:marginal-moments}

Fix a channel $c$ and mode $k$, and write $X \coloneqq \mathcal{F}(\mathbf{u})(k)$, $Y \coloneqq \mathcal{F}(\mathbf{u}_\tau)(k)$, $Z \coloneqq \mathcal{F}(\mathbf{\uNO})(k)$. We seek the first two moments of $Z$ given $Y$, obtained by
marginalizing over $X$.

Since $Z(k) = H(k)\,X(k) + \hat{\eta}_{\mathrm{NO}}(k)$ is affine in $X$, 
and $\hat{\eta}_{\mathrm{NO}}$ is (i) uncorrelated with $X$ by  construction of $H$ as the LMMSE coefficient, and (ii) independent of 
the diffusion noise $\hat{\eta}_\tau$ (the diffusion process and the 
NO evaluation are produced by independent mechanisms), the LMMSE 
framework gives the best linear predictor of $Z$ from $Y$, namely $\hat{Z}_L$, and the 
associated prediction error directly. Computing
$\mathrm{Cov}[Z, Y] = H\,P_{\mathbf{u}}$ (the cross-term 
$\mathbb{E}[\hat{\eta}_{\mathrm{NO}}(k)\,\overline{X(k)}]$ vanishes by 
orthogonality, and 
$\mathbb{E}[\hat{\eta}_{\mathrm{NO}}(k)\,\overline{\hat{\eta}_\tau(k)}]$ 
vanishes by independence of the two noise sources) and 
$\mathrm{Var}[Y] = P_{\mathbf{u}} + \sigma^2_\tau$, the orthogonality 
principle yields the LMMSE estimator $\hat{Z}_L$ and the associated mean square error $\lambda_\tau$, that is:
\begin{align}
  \label{eq:app-marginal-mean}
  \hat{Z}_L
  &= \frac{\mathrm{Cov}[Z,Y]}{\mathrm{Var}[Y]}\,Y
  = H(k)\,\alpha(k)\,Y, \\[6pt]
  \label{eq:app-marginal-var}
  \lambda_\tau(k)
  &\coloneqq \mathbb{E}\!\left[|Z - \hat{Z}_L|^2\right]
  = |H(k)|^2\,\sigma_L^2(k) + \sigma^2_{\mathrm{NO}}(k),
\end{align}
where the cross-term in the variance expansion vanishes because $(X - \alpha Y)$ depends only on $(X, \hat{\eta}_\tau)$ while $\hat{\eta}_{\mathrm{NO}}$ is uncorrelated with $X$ (by LMMSE definition of $H$) and independent of $\hat{\eta}_\tau$.

Approximating the marginal distribution of $Z$ given $Y$ as the
Gaussian matching these moments, one obtains:
\begin{equation}
  \label{eq:app-marginal-result}
  p\!\left(\mathcal{F}(\mathbf{\uNO})(k) \mid \mathbf{u}_\tau\right)
  \;\approx\;
  \mathcal{CN}\!\left(
    H(k)\,\alpha(k)\,\mathcal{F}(\mathbf{u}_\tau)(k),\;\;
    \lambda_\tau(k)
  \right),
\end{equation}
with the marginalized variance
\begin{equation}
  \label{eq:app-Lambda-tau}
  \lambda_\tau(k)
  \;=\;
  \sigma^2_{\mathrm{NO}}(k)
  \;+\;
  |H(k)|^2\,
  \frac{\sigma^2_\tau\,P_{\mathbf{u}}(k)}
       {\sigma^2_\tau + P_{\mathbf{u}}(k)}.
\end{equation}
The joint distribution over all modes and channels factorizes as a product of these univariate complex Gaussians.

\paragraph{Comparison with standard DPS.}
The standard DPS marginalized variance is $\lambda^{\mathrm{DPS}}_\tau(k) = \sigma^2_{\mathrm{NO}}(k) + \sigma^2_\tau|H(k)|^2$, obtained by replacing $\sigma_L^2(k)$ with $\sigma^2_\tau$.
At low frequencies where $P_{\mathbf{u}}(k) \gg \sigma^2_\tau$, the
two agree.
At high frequencies where $P_{\mathbf{u}}(k) \ll \sigma^2_\tau$,
$\sigma_L^2(k) \to P_{\mathbf{u}}(k) \ll \sigma^2_\tau$ and the
corrected variance is significantly smaller.
However, this reduction is offset by the factor $\alpha(k) \to 0$ in the score numerator (derived below), which suppresses the guidance at precisely these frequencies.
The net effect is a spectral weight that decreases with $\|k\|$,
correctly reflecting the NO's decreasing reliability at high
frequencies.

\subsection{Proof of Proposition~\ref{prop:exact-score-main}}
\label{app:exact-score}
We recall the two observation channels at a single mode $k$ (suppressing channel and mode indices when unambiguous):
\begin{alignat}{2}
  \label{eq:app-channel-Y}
  &\text{Diffusion channel:} &\qquad
  Y &\;=\; X + \hat{\eta}_\tau, \qquad
  \hat{\eta}_\tau \;\sim\; \mathcal{CN}(0,\,\sigma^2_\tau), \\[4pt]
  \label{eq:app-channel-Z}
  &\text{NO channel:} &\qquad
  Z &\;=\; H\,X + \hat{\eta}_{\mathrm{NO}},
\end{alignat}
where $\hat{\eta}_\tau$ is Gaussian (by construction of the forward diffusion) and $\hat{\eta}_{\mathrm{NO}}$ has zero mean and variance $\sigma^2_{\mathrm{NO}}$ but is otherwise unrestricted in distributional form.
Both noise terms are mutually independent and independent of $X$.
The only assumptions on $X$ are $\mathbb{E}[X] = 0$ and
$\mathbb{E}[|X|^2] = P$ where $P = P_{\mathbf{u}}(k)$.

\begin{proof}
The conditional density of $Z$ given $Y$ is obtained by marginalizing
over $X$:
\begin{equation}
  \label{eq:app-marginal-integral}
  p(Z \mid Y)
  \;=\;
  \int p(Z \mid X)\,p(X \mid Y)\,dX,
\end{equation}
where $p(Z \mid X) = \mathcal{CN} (Z;\,HX,\,\sigma^2_{\mathrm{NO}})$ does not depend on $Y$ (by conditional independence $Z \perp Y \mid X$).

\medskip\noindent
\emph{Step (a): Differentiate under the integral.}
Under the usual regularity assumptions\citep{billingsley1995probability}, one can take the derivative $\nabla_{Y^*}$ of both sides:
\begin{equation}
  \nabla_{Y^*}\,p(Z \mid Y)
  \;=\;
  \int p(Z \mid X)\;\nabla_{Y^*}\,p(X \mid Y)\;dX.
\end{equation}
Rewriting $\nabla_{Y^*} p = p\,\nabla_{Y^*}\log p$:
\begin{equation}
  \nabla_{Y^*}\,p(Z \mid Y)
  \;=\;
  \int p(Z \mid X)\,p(X \mid Y)\;
  \nabla_{Y^*}\log p(X \mid Y)\;dX.
\end{equation}

\medskip\noindent
\emph{Step (b): Compute $\nabla_{Y^*}\log p(X \mid Y)$.}
By Bayes' rule, $p(X \mid Y) \propto p(Y \mid X)\,p_X(X)$,
where $p(Y \mid X) = \mathcal{CN}(Y;\,X,\,\sigma^2_\tau)$. 
The gradient reads:
\begin{equation}
  \nabla_{Y^*}\log p(X \mid Y)
  \;=\;
  \nabla_{Y^*}\log p(Y \mid X)
  \;-\;
  \nabla_{Y^*}\log p_Y(Y).
\end{equation}
The Gaussian channel gives $\nabla_{Y^*}\log p(Y \mid X) = (X - Y)/\sigma^2_\tau$, and the complex-value Tweedie formula provides the score of the marginal over $Y$, namely $\nabla_{Y^*}\log p_Y(Y) = (\mathbb{E}[X \mid Y] - Y)/\sigma^2_\tau$.
Finally:
\begin{equation}
  \label{eq:app-score-pXY}
  \nabla_{Y^*}\log p(X \mid Y)
  \;=\;
  \frac{X - \mathbb{E}[X \mid Y]}{\sigma^2_\tau}.
\end{equation}

\medskip\noindent
\emph{Step (c): Assemble.}
Substituting~\eqref{eq:app-score-pXY} and dividing both sides by
$p(Z \mid Y)$:
\begin{equation}
  \nabla_{Y^*}\log p(Z \mid Y)
  \;=\;
  \frac{1}{\sigma^2_\tau}
  \int \frac{p(Z \mid X)\,p(X \mid Y)}{p(Z \mid Y)}
  \;\bigl(X - \mathbb{E}[X \mid Y]\bigr)\;dX.
\end{equation}
By Bayes' rule:
$p(Z \mid X)\,p(X \mid Y) / p(Z \mid Y) = p(X \mid Y, Z)$,
so the integral gives $\mathbb{E}[X \mid Y, Z] - \mathbb{E}[X \mid Y]$, completing the proof.
\end{proof}

\subsection{Score error decomposition}
\label{app:score-error}
This section derives the score error
decomposition~\eqref{eq:score-error-main} stated in
Sec.~\ref{sec:score-analysis} and establishes the
distribution-free bounds on each term.

Our Gaussian approximation replaces both conditional expectations in~\eqref{eq:exact-score-main} by their LMMSE counterparts.

\paragraph{Linear estimates.}
From Appendix~\ref{app:lmmse}, the LMMSE estimate of $X$ from $Y$
alone is $\mathbb{E}_L[X \mid Y] = \alpha\,Y$ with error
$\tilde{X} \coloneqq X - \alpha Y$ of variance $\sigma_L^2$.

For the joint estimate from $(Y, Z)$, we apply a sequential LMMSE
update: the refined estimate takes the form $\mathbb{E}_L[X \mid Y, Z] = \alpha Y + \beta\,r$, where
\begin{equation}
  r \;\coloneqq\; Z - H\alpha Y
\end{equation}
is the \emph{innovation}, the component of $Z$ not predicted by the first-stage estimate.
Substituting $Z = HX + \hat{\eta}_{\mathrm{NO}}$:
\begin{equation}
  r \;=\; H(X - \alpha Y) + \hat{\eta}_{\mathrm{NO}}
  \;=\; H\tilde{X} + \hat{\eta}_{\mathrm{NO}}.
\end{equation}
By the orthogonality principle, the optimal coefficient $\beta$
satisfies $\beta = \mathrm{Cov}[\tilde{X},\,r]\,/\,\mathrm{Var}[r]$. For the numerator, since $\hat{\eta}_{\mathrm{NO}}$ is uncorrelated with $\tilde{X}$:
\begin{equation}
  \mathrm{Cov}[\tilde{X},\,r]
  \;=\;
  \mathrm{Cov}[\tilde{X},\,H\tilde{X}]
  \;=\;
  \mathbb{E}[\tilde{X}\,(H\tilde{X})^*]
  \;=\;
  H^*\,\sigma_L^2.
\end{equation}
For the denominator, by independence of $\hat{\eta}_{\mathrm{NO}}$ from $\hat{\eta}_\tau$ and orthogonality of $\hat{\eta}_{\mathrm{NO}}$ to $X$ 
(for $\hat{\eta}_{\mathrm{NO}} \perp \tilde X$ at second order, since $\tilde X = X - \alpha Y$ is a linear combination of $X$ and $\hat{\eta}_\tau$):
\begin{equation}
  \mathrm{Var}[r] \;=\; |H|^2\,\sigma_L^2 + \sigma^2_{\mathrm{NO}} 
  \;\eqqcolon\; \lambda_\tau.
\end{equation}
The LMMSE update is therefore
\begin{equation}
  \label{eq:app-lmmse-YZ}
  \mathbb{E}_L[X \mid Y, Z]
  \;=\;
  \alpha Y
  \;+\;
  \frac{H^*\,\sigma_L^2}{\lambda_\tau}\;r.
\end{equation}
The correction is large when the innovation is informative about the first-stage error (strong correlation between $\tilde{X}$ and $r$) and small when the innovation is dominated by the NO residual noise ($\lambda_\tau$ large).

The joint LMMSE error follows from the variance reduction formula:
\begin{equation}
  \label{eq:app-Sigma-LYZ}
  \sigma_{L,YZ}^2
  \;=\;
  \sigma_L^2
  - \frac{|\mathrm{Cov}[\tilde{X},\,r]|^2}{\mathrm{Var}[r]}
  \;=\;
  \sigma_L^2 - \frac{|H|^2\,\sigma_L^4}{\lambda_\tau}
  \;=\;
  \frac{\sigma_L^2\,\sigma^2_{\mathrm{NO}}}{\lambda_\tau}.
\end{equation}
Substituting the linear estimates into~\eqref{eq:exact-score-main}:
\begin{equation}
  \label{eq:app-gaussian-score}
  s^{\mathrm{approx}}
  \;=\;
  \frac{\mathbb{E}_L[X \mid Y, Z]
    - \mathbb{E}_L[X \mid Y]}{\sigma^2_\tau}
  \;=\;
  \frac{H^*\,\alpha}{\lambda_\tau}\;r.
\end{equation}

\paragraph{Score error.}
Subtracting from the exact score:
\begin{equation}
  \label{eq:app-score-error}
  \epsilon
  \;\coloneqq\;
  s^{\mathrm{exact}} - s^{\mathrm{approx}}
  \;=\;
  \frac{1}{\sigma^2_\tau}\Bigl[
  \underbrace{
  \bigl(\mathbb{E}[X \mid Y, Z]
    - \mathbb{E}_L[X \mid Y, Z]\bigr)
  }_{\eqqcolon\;\delta_{\mathrm{post}}}
  \;-\;
  \underbrace{
  \bigl(\mathbb{E}[X \mid Y] - \alpha Y\bigr)
  }_{\eqqcolon\;\delta_{\mathrm{prior}}}
  \Bigr].
\end{equation}
Both $\delta_{\mathrm{prior}}$ and $\delta_{\mathrm{post}}$ are \textbf{MMSE--LMMSE gaps}: the difference between the optimal nonlinear estimator and the optimal linear estimator of $X$.
If $X$ were Gaussian \emph{and} $\hat{\eta}_{\mathrm{NO}}$ were Gaussian, both gaps would vanish identically and the approximation would be exact. The score error is therefore driven by the non-Gaussianity of both the clean Fourier coefficients and the NO residual.

By the Pythagorean theorem in $L^2$:
\begin{align}
  \label{eq:app-Delta-prior-bound}
  \Delta_{\mathrm{prior}}
  &\;\coloneqq\;
  \mathbb{E}[|\delta_{\mathrm{prior}}|^2]
  \;=\; \sigma_L^2 - \mathrm{mmse}(X \mid Y)
  \;\leq\; \sigma_L^2, \\[4pt]
  \label{eq:app-Delta-post-bound}
  \Delta_{\mathrm{post}}
  &\;\coloneqq\;
  \mathbb{E}[|\delta_{\mathrm{post}}|^2]
  \;=\; \sigma_{L,YZ}^2 - \mathrm{mmse}(X \mid Y, Z)
  \;\leq\; \sigma_{L,YZ}^2.
\end{align}

\subsection{Regime analysis: detailed bounds}
\label{app:regime-details}

This section provides the asymptotic expansions and
explicit bounds supporting the regime summary in
Sec.~\ref{sec:regime-analysis}.
We use the dimensionless parameters $\nu$, $\gamma$,
and $\zeta$ defined therein, and write
$h \coloneqq |H|^2$ and $P \coloneqq P_{\mathbf{u}}(k)$
throughout.

\paragraph{Derived quantities in terms of $(\nu, \gamma)$.}
For reference:
\begin{equation}
  \renewcommand{\arraystretch}{1.6}
  \begin{array}{r@{\;=\;}l@{\qquad}r@{\;=\;}l}
  \alpha & \dfrac{1}{1+\nu},
  & \sigma_L^2 & \dfrac{P\,\nu}{1+\nu}, \\[6pt]
  \lambda_\tau & hP\!\left(\gamma + \dfrac{\nu}{1+\nu}\right),
  & \sigma_{L,YZ}^2 & \dfrac{P\,\nu\,\gamma}
    {(1+\nu)\gamma + \nu}, \\[6pt]
  \zeta & \dfrac{\nu}{\gamma(1+\nu)}.
  \end{array}
\end{equation}

All bounds below use only the distribution-free
inequalities $\Delta_{\mathrm{prior}} \leq \sigma_L^2$
and $\Delta_{\mathrm{post}} \leq \sigma_{L,YZ}^2$
from~\eqref{eq:app-Delta-prior-bound}--\eqref{eq:app-Delta-post-bound}.

\subsubsection{Regime I ($\nu \gg 1$)}

Expanding for $\nu \gg 1$:
$\alpha = O(\nu^{-1})$,
$\sigma_L^2 \to P$,
$\lambda_\tau = hP(\gamma + 1) + O(\nu^{-1})$.

\emph{Score magnitude:}
$\mathbb{E}[|s^{\mathrm{approx}}|^2]
= h\alpha^2/\lambda_\tau = O(\nu^{-2}) \to 0$.

\emph{Score error:}
\begin{equation}
  \mathbb{E}[|\epsilon|^2]
  \;\leq\;
  \frac{2(\Delta_{\mathrm{post}} + \Delta_{\mathrm{prior}})}
       {\sigma^4_\tau}
  \;\leq\;
  \frac{4P}{\sigma^4_\tau}
  \;=\;
  \frac{4}{P\nu^2}
  \;\to\; 0.
\end{equation}

\subsubsection{Regime II ($\nu \ll 1$, $\zeta \ll 1$)}

Asymptotics:
$\alpha \to 1$,
$\lambda_\tau \to \sigma^2_{\mathrm{NO}}$,
$\mathbb{E}[|s^{\mathrm{approx}}|^2] = O(1/(P\gamma))$,
$\sigma_{L,YZ}^2 \approx \sigma_L^2$ when $\zeta \ll 1$.

\emph{Absolute score error:}
\begin{equation}
  \mathbb{E}[|\epsilon|^2]
  \;\leq\;
  \frac{2(\sigma_{L,YZ}^2 + \sigma_L^2)}{\sigma^4_\tau}
  \;\approx\;
  \frac{4\sigma_L^2}{\sigma^4_\tau}
  \;=\;
  \frac{4}{P\nu}.
\end{equation}

\emph{Relative score error:}
$\mathbb{E}[|\epsilon|^2] / \mathbb{E}[|s^{\mathrm{approx}}|^2]
= O(\gamma/\nu)$.

The bound is not parametrically small because the triangle
inequality discards the partial cancellation between
$\delta_{\mathrm{post}}$ and $\delta_{\mathrm{prior}}$ that
arises when $Z$ provides limited additional information beyond
$Y$; quantifying this cancellation would require higher-order
control of $\hat{\eta}_{\mathrm{NO}}$, which we do not assume.

\paragraph{Spectral profile preservation.}
The weight $\tilde{\lambda}_\tau(k)$ is determined by
the calibrated quantities $H(k)$,
$\sigma^2_{\mathrm{NO}}(k)$, and $P_{\mathbf{u}}(k)$.
Non-Gaussianity may rescale the per-mode score magnitude
but cannot redistribute guidance across modes; any such
rescaling is absorbed into
$\lambda_{\mathrm{NO}}$.

\subsubsection{Regime III ($\nu \ll 1$, $\zeta \gg 1$)}

Requires $\gamma \ll \nu \ll 1$.
Expanding: $\alpha \approx 1$,
$\sigma_L^2 \approx \sigma^2_\tau$,
$\lambda_\tau \approx h\sigma^2_\tau$.

\emph{Score magnitude:}
$\mathbb{E}[|s^{\mathrm{approx}}|^2]
\approx 1/\sigma^2_\tau$.

\emph{Posterior gap:}
From~\eqref{eq:app-Sigma-LYZ}:
\begin{equation}
  \sigma_{L,YZ}^2
  \;\approx\;
  \frac{\sigma^2_\tau \cdot \sigma^2_{\mathrm{NO}}}
       {h\sigma^2_\tau}
  \;=\;
  \frac{\sigma^2_{\mathrm{NO}}}{h}
  \;=\; \gamma\,P.
\end{equation}

\emph{Score error:}
\begin{equation}
  \mathbb{E}[|\epsilon|^2]
  \;\leq\;
  \frac{2(\gamma P + P\nu)}{\sigma^4_\tau}
  \;=\;
  \frac{2\gamma}{P\nu^2} + \frac{2}{P\nu}
  \;=\;
  O\!\left(\frac{1}{P\nu}\right),
\end{equation}
since $\gamma \ll \nu$ in this regime makes the second term dominant. The absolute error scaling thus coincides with Regime~II; the regimes are distinguished by the \emph{relative} error, which is $O(\gamma/\nu)$ in Regime~II but $O(1)$ here.

\emph{Relative error:}
\begin{equation}
  \frac{\mathbb{E}[|\epsilon|^2]}
       {\mathbb{E}[|s^{\mathrm{approx}}|^2]}
  \;\leq\;
  \frac{2\gamma}{\nu} + 2.
\end{equation}
Since $\gamma \ll \nu$ defines this regime, $2\gamma/\nu \ll 1$ and the relative error is bounded by a constant.
The key mechanism is that the accuracy of the NO ($\gamma \ll \nu$) constrains the posterior gap $\Delta_{\mathrm{post}} \leq \gamma P$ independently of $\sigma_\tau$: one cannot enter Regime~III without
simultaneously providing the bound that controls the error.

\subsubsection{Regime IV ($\nu \sim 1$)}

At $\nu = 1$: $\alpha = 1/2$,
$\sigma_L^2 = P/2$,
$\sigma_{L,YZ}^2 = P\gamma/(2\gamma + 1)$.

All quantities are finite and the distribution-free bounds yield computable constants depending on $\gamma$ and $h$.
The regime contributes a bounded amount to the integrated score error over the diffusion trajectory.

\subsubsection{Summary}
\label{app:regime-summary}

The analysis is distribution-free in all four regimes. In Regimes~I, III, and IV the relative score error is bounded by a parametrically small or $O(1)$ constant. In Regime~II the relative-error bound is $O(\gamma/\nu)$ and is therefore not controlled by the analysis alone; what \emph{is} controlled, distribution-freely, is the spectral profile of the guidance, since the moment-matching cannot redistribute weight across modes (only rescale per-mode magnitudes by a multiplicative factor absorbed into $\lambda_{\mathrm{NO}}$).

\section{NO likelihood score: gradient derivation}
\label{app:no-score-derivation}

This appendix provides the full derivation of the closed-form NO likelihood score~\eqref{eq:no-score} stated in the main text.

\subsection{Energy function}

Taking the negative logarithm of the moment-matched Gaussian
marginal~\eqref{eq:app-marginal-result} and dropping all the constants, one obtains the energy function
\begin{equation}
  \label{eq:app-energy}
  J(\mathbf{u}_\tau)
  \;=\;
  \sum_{c,k}
  \frac{|\mathbf{r}(c,k)|^2}{\lambda_\tau(c,k)},
\end{equation}
with the spectral residual
\begin{equation}
  \label{eq:app-residual}
  \mathbf{r}(c,k)
  \;\coloneqq\;
  \mathcal{F}(\mathbf{\uNO})(c,k)
  \;-\;
  H(c,k)\,\alpha(c,k)\,\mathcal{F}(\mathbf{u}_\tau)(c,k).
\end{equation}
Note that this residual is computed against the marginalized mean
$H(k)\,\alpha(k)\,\mathcal{F}(\mathbf{u}_\tau)(k)$
from~\eqref{eq:app-marginal-result}; the factor $\alpha(k)$ enters through the LMMSE model, not through any approximation of the denoiser Jacobian.

In compact notation, $J = \mathbf{r}^\dagger \lambda_\tau^{-1}\mathbf{r}$, where $\lambda_\tau^{-1}$ is diagonal in the Fourier basis with entries $1/\lambda_\tau(c,k)$.

\subsection{Gradient computation}

A perturbation $d\mathbf{u}_\tau$ induces
$d\mathbf{r}(c,k) = -H(c,k)\,\alpha(c,k)\,\mathcal{F}(d\mathbf{u}_\tau)(c,k)$.
The differential of the quadratic form is
\begin{equation}
  dJ
  \;=\;
  2\,\mathrm{Re}\!\left[
    \mathbf{r}^\dagger \lambda_\tau^{-1}\,d\mathbf{r}
  \right]
  \;=\;
  -2\,\mathrm{Re}\!\left[
    \mathbf{r}^\dagger \lambda_\tau^{-1}
    \bigl((H\,\alpha) \odot \mathcal{F}(d\mathbf{u}_\tau)\bigr)
  \right],
\end{equation}
where $(H\,\alpha)(c,k) \coloneqq H(c,k)\,\alpha(c,k)$.

Using the adjoint identity
$\langle \mathbf{a},\,\mathcal{F}(\mathbf{b})\rangle
= \langle \mathcal{F}^\dagger(\mathbf{a}),\,\mathbf{b}\rangle$:
\begin{equation}
  dJ
  \;=\;
  -2\,\mathrm{Re}\!\left[
    \left(
      \mathcal{F}^\dagger\!\bigl(
        H^*\alpha \odot \lambda_\tau^{-1}\mathbf{r}
      \bigr)
    \right)^\dagger d\mathbf{u}_\tau
  \right].
\end{equation}
By the standard identification $dJ = (\nabla_{\mathbf{u}_\tau}J)^\top d\mathbf{u}_\tau$, and noting that for real-valued $\mathbf{u}_\tau$ the inverse transform of a Hermitian-symmetric spectrum is real (so $\mathrm{Re}[\cdot]$ can be dropped):
\begin{equation}
  \nabla_{\mathbf{u}_\tau}J
  \;=\;
  -2\,\mathcal{F}^{-1}\!\left(
    H^*\alpha \odot \lambda_\tau^{-1}\mathbf{r}
  \right).
\end{equation}

\subsection{Likelihood score}

The likelihood score is
$\nabla_{\mathbf{u}_\tau}\log p(\mathbf{\uNO} \mid \mathbf{u}_\tau)
= -\nabla_{\mathbf{u}_\tau}J$, giving:
\begin{equation}
  \label{eq:app-score-utau}
  \nabla_{\mathbf{u}_\tau}\log p(\mathbf{\uNO} \mid \mathbf{u}_\tau)
  \;=\;
  \mathcal{F}^{-1}\!\left(
    \tilde{\lambda}_\tau \odot \mathbf{r}
  \right),
\end{equation}
with the spectral weighting filter
\begin{equation}
  \label{eq:app-Lambda-tilde}
  \tilde{\lambda}_\tau(c,k)
  \;\coloneqq\;
  \frac{2\,H^*(c,k)\,\alpha(c,k)}{\lambda_\tau(c,k)}
  \;=\;
  \frac{
    2\,H^*(c,k)\,P_{\mathbf{u}}(c,k)
  }{
    \left(\sigma^2_\tau + P_{\mathbf{u}}(c,k)\right)
      \sigma^2_{\mathrm{NO}}(c,k)
    \;+\;
    \sigma^2_\tau\,|H(c,k)|^2\,P_{\mathbf{u}}(c,k)
  }.
\end{equation}

\paragraph{Remark.}
Since $H(k) \in \mathbb{R}$ (Sec.~\ref{sec:no-guided-dps}),
$H^*(k) = H(k)$ throughout; we retain the conjugate notation for
generality.

\section{Sampling algorithm}
\label{app:algorithm}

Algorithm~\ref{alg:dps-no} details the full posterior sampling
procedure combining the unconditional diffusion prior, sparse sensor guidance, and spectral NO guidance as described in
Sec.~\ref{sec:no-guided-dps}.

\begin{algorithm}[h]
\caption{Neural operator-guided diffusion posterior sampling}
\label{alg:dps-no}
\begin{algorithmic}[1]
\REQUIRE Denoiser $D_\theta$; diffusion schedule
  $\{(\sigma_{\tau_i}, s_{\tau_i})\}_{i=0}^{N_{\mathrm{steps}}}$;
  spectral tables $H(c,k)$, $\sigma^2_{\mathrm{NO}}(c,k)$,
  $P_{\mathbf{u}}(c,k)$;
  NO prediction $\mathbf{\uNO}$;
  sparse observations $\mathbf{y}$;
  sensor operator $\mathcal{M}_{\mathcal{S}}$;
  hyperparameters $\lambda_{\mathrm{s}}$, $\lambda_{\mathrm{NO}}$
\ENSURE Posterior sample $\mathbf{u}_0$

\STATE \textbf{Precompute (once):}
\STATE \quad $\hat{\mathbf{u}}_{\mathrm{NO}} \leftarrow
  \mathcal{F}(\mathbf{\uNO})$
  \hfill $\triangleright$ FFT of NO prediction

\medskip
\STATE \textbf{Initialise:}
\STATE \quad $\mathbf{u}_{\tau_0} \sim
  \mathcal{N}(\mathbf{0},\,
  \sigma^2_{\tau_0}\,s^2_{\tau_0}\,I)$

\medskip
\FOR{$i = 0, \dots, N_{\mathrm{steps}}-1$}
  \STATE $\sigma \leftarrow \sigma_{\tau_i}$, \quad
         $s \leftarrow s_{\tau_i}$, \quad
         $\Delta\tau \leftarrow \tau_{i+1} - \tau_i$

  \medskip
  \STATE \textit{// (i) Prior score via denoiser}
  \STATE $\mathbf{u}_{\tau_i}^{\mathrm{req}} \leftarrow
    \mathbf{u}_{\tau_i}$ with gradients enabled
  \STATE $\bar{\mathbf{u}}_0 \leftarrow
    D_\theta(\mathbf{u}_{\tau_i}^{\mathrm{req}} / s,\;\sigma)$
    \hfill $\triangleright$ denoised estimate
  \STATE $d_{\mathrm{prior}} \leftarrow
    \left(\dfrac{\dot{\sigma}}{\sigma}
    + \dfrac{\dot{s}}{s}\right)\mathbf{u}_{\tau_i}
    - \dfrac{\dot{\sigma}}{\sigma}\,s\,\bar{\mathbf{u}}_0$

  \medskip
  \STATE \textit{// (ii) NO guidance (spectral, no backprop)}
  \STATE $\alpha(c,k) \leftarrow
    P_{\mathbf{u}}(c,k)\,/\,
    (\sigma^2 + P_{\mathbf{u}}(c,k))$
  \STATE $\lambda_\tau(c,k) \leftarrow
    \sigma^2_{\mathrm{NO}}(c,k)
    + |H(c,k)|^2\,\sigma^2\,\alpha(c,k)$
  \STATE $\tilde{\lambda}_\tau(c,k) \leftarrow
    2\,H^*(c,k)\,\alpha(c,k)\,/\,\lambda_\tau(c,k)$
  \STATE $\mathbf{r}(c,k) \leftarrow
    \hat{\mathbf{u}}_{\mathrm{NO}}(c,k)
    - H(c,k)\,\alpha(c,k)\,
    \mathcal{F}(\mathbf{u}_{\tau_i}/s)(c,k)$
  \STATE $\mathbf{g}_{\mathrm{NO}} \leftarrow
    \mathcal{F}^{-1}\!\left(
    \tilde{\lambda}_\tau \odot \mathbf{r}\right)$
  \STATE $c_\tau \leftarrow
    s^2\,\sigma\,\dot{\sigma}$
  \STATE $d_{\mathrm{NO}} \leftarrow
    c_\tau\,\lambda_{\mathrm{NO}}\,
    \mathbf{g}_{\mathrm{NO}}\,/\,s$

  \medskip
  \STATE \textit{// (iii) Sensor guidance (DPS, with backprop)}
  \STATE $\mathbf{g}_{\mathrm{sensor}} \leftarrow
    \mathcal{M}_{\mathcal{S}}^\dagger\!\left(
    \mathbf{y} - \mathcal{M}_{\mathcal{S}}(s\,\bar{\mathbf{u}}_0)
    \right)$
  \STATE $r_{\mathrm{obs}} \leftarrow
    \|\mathcal{M}_{\mathcal{S}}(s\,\bar{\mathbf{u}}_0)
    - \mathbf{y}\|$
  \STATE $\mathbf{g}_{\mathrm{vjp}} \leftarrow
    \mathrm{VJP}\!\left(
    \mathbf{u}_{\tau_i}^{\mathrm{req}} \mapsto
    s\,\bar{\mathbf{u}}_0;\;
    \mathbf{g}_{\mathrm{sensor}}\right)$
    \hfill $\triangleright$ backprop through denoiser
  \STATE $d_{\mathrm{sensor}} \leftarrow
    \lambda_{\mathrm{s}}\,\mathbf{g}_{\mathrm{vjp}}\,/\,
    r_{\mathrm{obs}}$

  \medskip
  \STATE \textit{// Euler step}
  \STATE $\mathbf{u}_{\tau_{i+1}} \leftarrow
    \mathbf{u}_{\tau_i}
    + \left(d_{\mathrm{prior}}
    - d_{\mathrm{NO}}
    - d_{\mathrm{sensor}}\right)\,\Delta\tau$
\ENDFOR

\medskip
\STATE \textit{// Final denoising}
\STATE $\mathbf{u}_0 \leftarrow
  D_\theta(\mathbf{u}_{\tau_{N_{\mathrm{steps}}}} /
  s_{\tau_{N_{\mathrm{steps}}}},\;
  \sigma_{\tau_{N_{\mathrm{steps}}}})$

\RETURN $\mathbf{u}_0$
\end{algorithmic}
\end{algorithm}

\paragraph{Implementation notes.}
Under the VE schedule ($s_\tau = 1$ throughout), the scaling by $s$ and $1/s$ in the algorithm reduces to identity and can be omitted.
The spectral quantities $\alpha(c,k)$, $\lambda_\tau(c,k)$, and
$\tilde{\lambda}_\tau(c,k)$ depend on $\sigma_{\tau_i}$ and must
be recomputed at each step; since they involve only elementwise
operations on the precomputed lookup tables, this cost is negligible.
The FFT of the NO prediction $\hat{\mathbf{u}}_{\mathrm{NO}}$ is
computed once and reused across all steps.
The dominant per-step cost is the denoiser forward pass (shared by terms~(i) and~(iii)) and the VJP backward pass (term~(iii) only).

\section{Experimental details}
\label{app:experimental}

\subsection{Denoiser architecture and training}
\label{app:denoiser-training}

The unconditional diffusion prior uses the GenCFD
architecture \citep{molinaro2024generative}, a 3D preconditioned denoiser based on a U-Net backbone with channel widths $(64, 128, 256)$, downsample ratios $(2, 2, 2)$, $4$ attention blocks with $8$ heads, and noise
embedding dimension $128$.
The model is trained under the variance-exploding (VE) diffusion
scheme with an exponential noise schedule spanning
$\sigma_{\min} = 0.002$ to $\sigma_{\max} = 80$, using EDM-style
weighting \citep{karras2022elucidating} and log-uniform noise sampling.

Training uses Adam with peak learning rate $3 \times 10^{-4}$ and
weight decay $0.01$, run for $430{,}000$ steps with a total batch
size of $32$ ($4$ per GPU $\times$ $8$ NVIDIA A100 80\,GB GPUs).
An exponential moving average of parameters with decay $0.999$ is
maintained and used for all inference.

\paragraph{Data normalization.}
The training targets are surface velocity wavefields
$\mathbf{u} \in \mathbb{R}^{3 \times 32 \times 32 \times 320}$.
Each sample is first scaled by a physics-based normalization constant that accounts for source--receiver distance and local S-wave velocity (following the MIFNO convention of \citep{lehmann2025multiple}),
then z-score normalized per channel using global statistics
(mean and standard deviation) computed over the $27{,}000$-sample
training set.
The same two-stage normalization is applied to MIFNO predictions
at inference time to ensure both the diffusion prior and the NO
guidance operate in a common normalized space.

\subsection{MIFNO surrogate}
\label{app:mifno-details}

We use the pretrained MIFNO model of~\citet{lehmann2025multiple},
frozen throughout our pipeline and used without modification.
The model uses $L=16$ factorized Fourier layers with channel width $d_v=16$, retaining $16$ Fourier modes along each spatial axis and $32$ along the temporal/depth axis (except the first layer, which uses $16$).
The source branch consists of a two-layer perceptron ($128$ hidden
units) followed by two 2D convolutional layers.
The model totals approximately $\mathrm{3.4}$\,M parameters and was
trained for $600$ epochs on the $27{,}000$-sample training split.

\subsection{Spectral calibration}
\label{app:spectral-calibration}

The three spectral quantities, $H(c,k)$,
$\sigma^2_{\mathrm{NO}}(c,k)$, and $P_{\mathbf{u}}(c,k)$, are
estimated from $N = 2{,}000$ paired ground-truth/MIFNO samples on a dedicated calibration split, disjoint from both the training and test sets.
Estimator formulas are given in Appendix~\ref{app:spectral-estimation}.
The resulting lookup tables are stored as a single
\texttt{.pt} checkpoint and loaded at inference time.

\subsection{Guidance hyperparameters}
\label{app:hyperparams}

The posterior sampling ODE (Algorithm~\ref{alg:dps-no}) involves two scalar hyperparameters: $\lambda_{\mathrm{s}}$ for the sensor guidance and $\lambda_{\mathrm{NO}}$ for the NO guidance.
The DPS sensor term uses the step-size convention
$\lambda_{\mathrm{s}} / \|\mathcal{M}_{\mathcal{S}}(\bar{\mathbf{u}}_0) - \mathbf{y}\|$, which absorbs the observation noise variance $\sigma_y^2$.
Table~\ref{tab:hyperparams} reports the values used for each
method and sensor density.

\begin{table}[h]
\centering
\small
\caption{Guidance hyperparameters for each method and sensor density. $\lambda_{\mathrm{s}}$: sensor guidance step size; $\lambda_{\mathrm{NO}}$: NO guidance weight.
A dash indicates the term is absent.}
\label{tab:hyperparams}
\begin{tabular}{@{}lcccc@{}}
\toprule
& \multicolumn{2}{c}{$\rho = 5\%$}
& \multicolumn{2}{c}{$\rho = 2\%$} \\
\cmidrule(lr){2-3} \cmidrule(lr){4-5}
& $\lambda_{\mathrm{s}}$
& $\lambda_{\mathrm{NO}}$
& $\lambda_{\mathrm{s}}$
& $\lambda_{\mathrm{NO}}$ \\
\midrule
DPS
  & $23{,}000$ & ---
  & $23{,}000$ & --- \\[2pt]
DPS + NO (iso)
  & $23{,}000$ & $10{,}000$
  & $23{,}000$ & $10{,}000$ \\[2pt]
\FreqNODPS\ ($\alpha\!=\!1$)
  & $23{,}000$ & $0.1$
  & $10{,}000$ & $0.1$ \\[2pt]
\FreqNODPS
  & $23{,}000$ & $0.35$
  & $23{,}000$ & $0.35$ \\
\bottomrule
\end{tabular}
\end{table}

The hyperparameters $\lambda_s$ and $\lambda_{\mathrm{NO}}$ were selected on the same $2{,}000$ sample split used for spectral calibration (App.~\ref{app:spectral-calibration}), disjoint from both the $27{,}000$ sample training set and the $1{,}000$ sample test set used for all reported results. The sensitivity analysis in Appendix~\ref{app:sensitivity} reports the response of the test set metrics to $\lambda_{\mathrm{NO}}$ post hoc, to characterize the curvature of the objective around the selected operating point and verify that it coincides with the zero-crossing of rFFT\textsubscript{high}. It is not used for selection.

\paragraph{Remark.}
The large numerical value of $\lambda_{\mathrm{s}}$ reflects the DPS residual-norm normalization: the measurement residual $\|\mathcal{M}_{\mathcal{S}} \bar{\mathbf{u}}_0) - \mathbf{y}\|$ is small in z-score normalized space, so the step size must compensate.
The NO weight differs by four orders of magnitude between DPS\,+\,NO\,(iso) ($\lambda_{\mathrm{NO}} = 10{,}000$) and \FreqNODPS\ ($\lambda_{\mathrm{NO}} = 0.35$) because the two methods route the guidance differently: in DPS\,+\,NO\,(iso), $\lambda_{\mathrm{NO}}$ multiplies the VJP through the denoiser (analogous to $\lambda_{\mathrm{s}}$), whereas in \FreqNODPS\ it scales the closed-form spectral score whose magnitude is set by the calibrated $\tilde{\lambda}_\tau(k)$.
The near-unity value of $\lambda_{\mathrm{NO}}$ for \FreqNODPS\ is consistent with the principled spectral calibration: the likelihood score is already correctly scaled, and $\lambda_{\mathrm{NO}}$ serves only as a fine adjustment.

\subsection{Sensor mask generation}
\label{app:mask-generation}

For each test sample, the sensor subset
$\mathcal{S} \subset \{1,\dots,N_x\} \times \{1,\dots,N_y\}$ is generated by drawing $|\mathcal{S}| = \lfloor \rho \cdot N_x N_y \rfloor$ locations uniformly at random without replacement. The random seed is fixed per sample index to ensure that all
methods are evaluated on identical sensor  configurations.
At $\rho = 5\%$ this yields $|\mathcal{S}| = 51$ sensors; at $\rho = 2\%$, $|\mathcal{S}| = 20$.
No spatial regularity or optimization of the sensor layout is imposed.

\subsection{Sampling configuration}
\label{app:sampling-config}

Posterior samples are generated by solving the probability-flow ODE using the explicit Euler integrator with $64$ time steps following the EDM noise decay schedule \citep{karras2022elucidating}.
A final denoising step is applied at the terminal noise level.

We observed that increasing the number of steps beyond $64$ yields negligible improvement in reconstruction quality, while reducing below ${\sim}48$ steps leads to noticeable degradation.
All reported results use the ODE formulation; the SDE variant was also implemented but not used in the final experiments.

\subsection{Pointwise accuracy}

Following \citet{lehmann2025multiple}, the pointwise metrics are
computed per sensor location $(x,y)$ over the temporal axis,
then averaged over all sensors, channels, and samples.

\paragraph{Relative Root Mean Squared Error (rRMSE).}
\begin{equation}
  \mathrm{rRMSE}(x,y)
  \;\coloneqq\;
  \sqrt{
    \frac{1}{N_t}\sum_{k=1}^{N_t}
    \frac{
      \bigl(\hat{u}(x,y,t_k) - u(x,y,t_k)\bigr)^2
    }{
      u(x,y,t_k)^2 + \epsilon^2
    }
  },
\end{equation}
with $\epsilon = 0.01$.

\paragraph{Relative Mean Absolute Error (rMAE).}
\begin{equation}
  \mathrm{rMAE}(x,y)
  \;\coloneqq\;
  \frac{1}{N_t}\sum_{k=1}^{N_t}
  \frac{
    \bigl|\hat{u}(x,y,t_k) - u(x,y,t_k)\bigr|
  }{
    \bigl|u(x,y,t_k)\bigr| + \epsilon
  }.
\end{equation}
Both metrics are averaged over all sensors $(x,y) \in \mathcal{G}$,
velocity components $c \in \{E,N,Z\}$, and test samples.

\subsection{Spectral fidelity}

Following~\citet{lehmann2025multiple}, the frequency content is
assessed via banded relative spectral biases.
For each sensor $(x,y)$, let $\mathcal{F}(u(x,y))(f)$ denote
the temporal Fourier transform at frequency $f$ ($\Delta t = 0.02$\,s, $f_{\mathrm{Nyq}} = 25$\,Hz).
Define the band-averaged spectral magnitude
\begin{equation}
  \overline{\mathcal{F}(u(x,y))}_{\mathcal{B}}
  \;\coloneqq\;
  \frac{1}{N_f}\sum_{f \in \mathcal{B}}
  \bigl|\mathcal{F}(u(x,y))(f)\bigr|,
\end{equation}
where $N_f$ is the number of frequency bins in band $\mathcal{B}$.
The banded relative FFT bias is then
\begin{equation}
  \mathrm{rFFT}_{\mathcal{B}}(x,y)
  \;\coloneqq\;
  \frac{
    \overline{\mathcal{F}(\hat{u}(x,y))}_{\mathcal{B}}
    \;-\;
    \overline{\mathcal{F}(u(x,y))}_{\mathcal{B}}
  }{
    \overline{\mathcal{F}(u(x,y))}_{\mathcal{B}}
  },
\end{equation}
averaged over all sensors, channels, and samples.
Three bands are reported:
$\mathrm{rFFT}_{\mathrm{low}}$ ($0$--$1$\,Hz),
$\mathrm{rFFT}_{\mathrm{mid}}$ ($1$--$2$\,Hz), and
$\mathrm{rFFT}_{\mathrm{high}}$ ($2$--$5$\,Hz).
Negative values indicate systematic spectral underestimation (attenuation); positive values indicate overestimation; zero indicates unbiased spectral reproduction.

\subsection{Significant duration}

The significant duration $D_{5\text{--}95}$ quantifies the time window containing the central $90\%$ of the seismic energy at each spatial location, based on the Arias intensity. For each grid point $(x,y)$, the cumulative Arias intensity is
\begin{equation}
  I_A(x,y,t)
  \;=\;
  \Delta t \sum_{k=1}^{\lfloor t/\Delta t \rfloor}
  \sum_{c=1}^{C} v_c(x,y,t_k)^2,
\end{equation}
and the bounding times $t_5$, $t_{95}$ are the earliest instants at which $I_A$ reaches $5\%$ and $95\%$ of the total $I_A(x,y,T)$, respectively, with linear interpolation between discrete time steps.
The significant duration is then $D_{5\text{--}95}(x,y) = t_{95}(x,y) - t_{5}(x,y)$.
Grid points with zero total energy are excluded.
We report the absolute error $|D_{5\text{--}95}^{\mathrm{pred}} - D_{5\text{--}95}^{\mathrm{true}}|$ averaged over all grid
locations $(x,y) \in \mathcal{G}$ and test samples.
This metric is sensitive to temporal energy distribution:
oversmoothed predictions spread energy over a wider window (overestimating $D_{5\text{--}95}$), while methods that miss late-arriving scattered phases produce shorter durations.

\section{Sensitivity to the NO guidance weight}
\label{app:sensitivity}

The closed-form NO score~\eqref{eq:no-score} is scaled by a single scalar hyperparameter $\lambda_{\mathrm{NO}}$ in the sampler (Algorithm~\ref{alg:dps-no}). The regime analysis of Section~\ref{sec:regime-analysis} predicts that, in the operating regime (Regime~II), the spectral \emph{profile} of the guidance is preserved regardless of distributional assumptions while per-mode magnitude rescaling is absorbed into $\lambda_{\mathrm{NO}}$.

We sweep $\lambda_{\mathrm{NO}}$ across two orders of magnitude on the $1{,}000$ test samples set at $\rho=5\%$, with all other hyperparameters fixed.
We emphasize that $\lambda_{\mathrm{NO}}=0.35$ was selected on the validation/calibration split (App.~\ref{app:hyperparams}) prior to any test set evaluation; the sweep below reports the test-set response post hoc to characterize sensitivity, not to select the operating point.

\begin{table}[h]
\centering
\small
\caption{Sensitivity of pointwise and spectral metrics to the NO guidance weight $\lambda_{\mathrm{NO}}$ at $\rho=5\%$. Calibrated operating point $\lambda_{\mathrm{NO}}=0.35$) shown in bold.
Mean over $1{,}000$ test samples.}
\label{tab:sensitivity}
\begin{tabular}{@{}lcccccc@{}}
\toprule
$\lambda_{\mathrm{NO}}$ & rMAE & rRMSE
  & rFFT\textsubscript{low} & rFFT\textsubscript{mid}
  & rFFT\textsubscript{high} \\
\midrule
$0.035$ & $0.108$ & $0.207$ & $-0.062$ & $-0.126$ & $-0.189$ \\
$0.10$  & $0.103$ & $0.198$ & $-0.028$ & $-0.075$ & $-0.122$ \\
$0.20$  & $0.101$ & $0.195$ & $\mathbf{-0.007}$ & $-0.043$ & $-0.063$ \\
$\mathbf{0.35}$ & $\mathbf{0.100}$ & $\mathbf{0.200}$
  & $+0.009$ & $\mathbf{-0.015}$ & $\mathbf{+0.002}$ \\
$0.70$  & $0.103$ & $0.222$ & $+0.035$ & $+0.027$ & $+0.121$ \\
$1.50$  & $0.115$ & $0.275$ & $+0.067$ & $+0.081$ & $+0.313$ \\
$3.50$  & $0.132$ & $0.325$ & $+0.085$ & $+0.105$ & $+0.457$ \\
\bottomrule
\end{tabular}
\end{table}

\paragraph{Pointwise stability.}
The relative MAE varies by less than $8\%$ over the range $\lambda_{\mathrm{NO}} \in [0.035, 0.70]$, a $20\times$ span. Only for $\lambda_{\mathrm{NO}} \geq 1.5$ does the surrogate guidance begin to dominate the prior, producing measurable degradation in both rMAE and rRMSE. The pointwise insensitivity to $\lambda_{\mathrm{NO}}$ within an order of magnitude of the calibrated value is a direct empirical realization of the Regime~II claim that non-Gaussianity can rescale the per-mode guidance magnitude but cannot redistribute it across modes: a
multiplicative correction is absorbed by a scalar without affecting the reconstruction's spatial structure.

\paragraph{Monotone spectral correction and natural operating point.} 
The banded spectral bias rFFT$_\mathcal{B}$ varies
monotonically with $\lambda_{\mathrm{NO}}$ in all three bands, passing from systematic under-correction at small $\lambda_{\mathrm{NO}}$ (rFFT\textsubscript{high} $= -0.189$ at
$\lambda_{\mathrm{NO}} = 0.035$, close to the MIFNO baseline of $-0.239$) to systematic over-correction at large $\lambda_{\mathrm{NO}}$ (rFFT\textsubscript{high} $= +0.457$ at $\lambda_{\mathrm{NO}} = 3.5$). The zero-crossing in the high band occurs essentially at the calibrated operating point $\lambda_{\mathrm{NO}} = 0.35$ (rFFT\textsubscript{high} $= +0.002$). The calibrated value, determined from the spectral statistics $H(k)$, $\sigma^2_{\mathrm{NO}}(k)$, and $P_{\mathbf{u}}(k)$ without reference to the test set, therefore sits at the principled zero-crossing of the spectral  correction, not at an empirically tuned local optimum.

\begin{figure}[h]
  \centering
  \includegraphics[width=\textwidth]{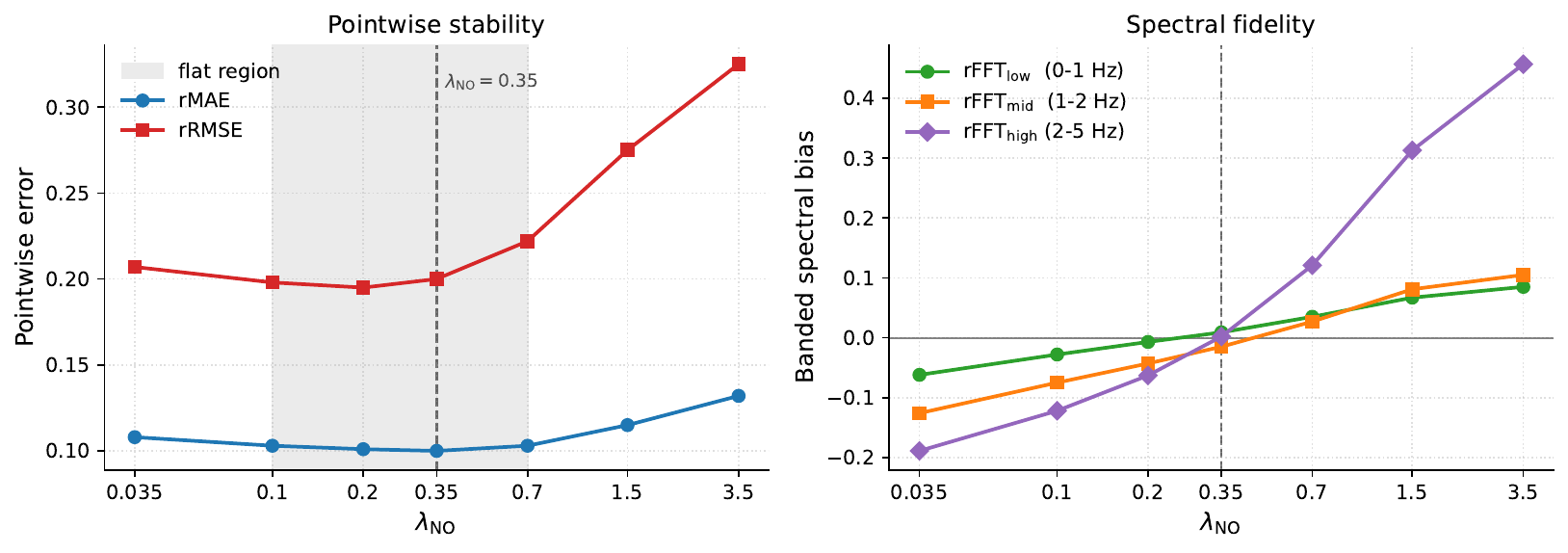}
  \caption{Sensitivity to $\lambda_{\mathrm{NO}}$ at $\rho = 5\%$.
    Vertical dashed line: calibrated operating point
    $\lambda_{\mathrm{NO}} = 0.35$.
    Left: pointwise metrics (rMAE, rRMSE) are flat across the
    shaded region $\lambda_{\mathrm{NO}} \in [0.1, 0.7]$.
    Right: banded spectral biases vary monotonically and cross
    zero near the calibrated value, with rFFT\textsubscript{high}
    spanning $-0.189$ to $+0.457$ across the sweep.}
  \label{fig:sensitivity}
\end{figure}



\end{document}